\documentclass{article}

\PassOptionsToPackage{numbers,compress}{natbib}

\usepackage[main, final]{neurips_2026}

\usepackage[utf8]{inputenc} 
\usepackage[T1]{fontenc}    
\usepackage{hyperref}       
\usepackage{url}            
\usepackage{booktabs}       
\usepackage{amsmath}        
\usepackage{amsfonts}       
\usepackage{amsthm}         
\usepackage{nicefrac}       
\usepackage{microtype}      
\usepackage{xcolor}         

\usepackage{algorithm}
\usepackage{algcompatible}
\usepackage{xcolor}

\usepackage{xcolor}
\definecolor{myblue}{HTML}{3585C9}
\newcommand{\blue}[1]{\textcolor{myblue}{#1}}

\definecolor{plotgray}{HTML}{333333}
\definecolor{plotgrayobservations}{HTML}{cccccc}
\definecolor{plotportfolio}{HTML}{97DFFC}
\definecolor{plotorange}{HTML}{F9C47A}
\definecolor{plotblue}{HTML}{3585C9}
\definecolor{plotmagenta}{HTML}{A00442}
\definecolor{plotbaseline}{HTML}{4444b4}

\usepackage[normalem]{ulem} 
\usepackage{tikz}
\usetikzlibrary{arrows.meta,calc}
\usepackage{pgfplots}
\usepackage[utf8]{inputenc}
\usepackage[T1]{fontenc}
\usepackage{subcaption}
\usepackage{wrapfig}
\usepackage{placeins} 
\usepackage{array}
\usepackage{tabularx}
\usepackage{booktabs}
\usepackage{amssymb}
\usepackage{pifont}
\newcommand{\cmark}{\ding{51}}   
\newcommand{\approxmark}{$\sim$}

\newcommand{\oldC}{\mathcal{C}}
\newcommand{\bonf}{\text{bonf}}

\newcommand{\vect}[1]{\mathbf{#1}}

\newcommand{\xtL}{x^\text{touch}_{L}}
\newcommand{\xtR}{x^\text{touch}_{R}}
\newcommand{\xtLd}{x^\text{touch}_{L, d}}
\newcommand{\xtRd}{x^\text{touch}_{R, d}}

\newcommand{\xstart}{x_{\text{start}}}
\newcommand{\xend}{x_{\text{end}}}
\newcommand{\xbetaleft}{x_{L}^\beta}
\newcommand{\xbetaright}{x_{R}^\beta}

\newcommand{\Bbbox}{\oldC^\text{bbox}}
\newcommand{\Bellip}{\oldC^{\ell_p}}
\newcommand{\BellipS}{\oldC^{\ell_p\mathrm{safe}}}

\newcommand{\Bmax}{\oldC^{\max}}
\newcommand{\Bavg}{\oldC^\mathrm{port}}
\newcommand{\Bbonf}{\oldC^\bonf}

\newcommand{\Bbboxellip}{\oldC^{\text{bbox } \cap ~ \ell_p}}
\newcommand{\BbboxellipS}{\oldC^{{\text{bbox } \cap ~ \ell_p\mathrm{safe}}}}

\newcommand{\Kavg}{K^{\mathrm{port}}}
\newcommand{\Kmax}{K^{\max}}

\usepackage[capitalize]{cleveref}
\usepackage[acronym]{glossaries-extra}

\crefname{section}{\S}{\S\S}
\Crefname{section}{\S}{\S\S}
\crefname{subsection}{\S}{\S\S}
\Crefname{subsection}{\S}{\S\S}
\crefname{subsubsection}{\S}{\S\S}
\Crefname{subsubsection}{\S}{\S\S}
\crefrangeformat{section}{\S\S#3#1#4--#5#2#6}
\Crefrangeformat{section}{\S\S#3#1#4--#5#2#6}
\crefrangeformat{subsection}{\S\S#3#1#4--#5#2#6}
\Crefrangeformat{subsection}{\S\S#3#1#4--#5#2#6}
\crefrangeformat{subsubsection}{\S\S#3#1#4--#5#2#6}
\Crefrangeformat{subsubsection}{\S\S#3#1#4--#5#2#6}

\crefname{equation}{eq.}{eqs.}
\Crefname{equation}{Eq.}{Eqs.}
\crefname{figure}{fig.}{figs.}
\Crefname{figure}{Fig.}{Figs.}
\crefname{table}{tab.}{tabs.}
\Crefname{table}{Tab.}{Tabs.}
\crefname{algorithm}{alg.}{algs.}
\Crefname{algorithm}{Alg.}{Algs.}
\crefname{appendix}{app.}{apps.}
\Crefname{appendix}{App.}{Apps.}

\crefname{proposition}{prop.}{props.}
\Crefname{proposition}{Prop.}{Props.}
\crefname{theorem}{thm.}{thms.}
\Crefname{theorem}{Thm.}{Thms.}
\crefname{lemma}{lem.}{lems.}
\Crefname{lemma}{Lem.}{Lems.}
\crefname{corollary}{cor.}{cors.}
\Crefname{corollary}{Cor.}{Cors.}
\crefname{definition}{def.}{defs.}
\Crefname{definition}{Def.}{Defs.}
\crefname{assumption}{asm.}{asms.}
\Crefname{assumption}{Asm.}{Asms.}
\crefname{remark}{rem.}{rems.}
\Crefname{remark}{Rem.}{Rems.}

\theoremstyle{plain}
\newtheorem{theorem}{Theorem}[section]
\newtheorem{proposition}{Proposition}[section]
\newtheorem{lemma}{Lemma}[section]
\newtheorem{corollary}{Corollary}[section]
\theoremstyle{definition}

\newtheorem{assumption}{Assumption}[section]
\theoremstyle{remark}

\crefalias{proposition}{proposition}
\crefalias{lemma}{lemma}
\crefalias{corollary}{corollary}
\crefalias{definition}{definition}
\crefalias{assumption}{assumption}
\crefalias{remark}{remark}

\title{On the Tightness and Computational Tractability of Higher-Dimensional Confidence Sequences}

\author{%
  Fabian Denoodt\thanks{Research conducted during a research visit at the University of Amsterdam and completed at Eindhoven University of Technology.} \\
  Eindhoven University of Technology \\
  \And
  Sibylle Hess \\
  Eindhoven University of Technology \\
  \And
  Joaquin Vanschoren \\
  Eindhoven University of Technology \\
  \And
  Christian A. Naesseth \\
  University of Amsterdam \\
}

\begin{document}

\maketitle

\begin{abstract}
  Modern sequential monitoring problems often involve multiple metrics, where we monitor several data streams simultaneously and may act once the evidence is strong enough.
Confidence sequences (CSs) are a natural tool for such continuous monitoring. However, for bounded \emph{vector} means, existing multivariate CSs are either tight but computationally intractable, or fast to compute but conservative.
To address this, we study three lifts of one-dimensional betting-based CSs to higher dimensions: a weighted Bonferroni region, an equivalent max-wealth form, and a portfolio region. The portfolio is typically much tighter, especially in higher dimensions, but its boundary and properties such as volume are not available in closed form.
To make this tighter construction usable, we propose tractable outer approximations of the portfolio region that preserve statistical validity: a bounding box, an $\ell_p$-ellipsoid, and their intersection. We prove set relations among all constructions and show empirically that these approximations \blue{(i)} achieve regions close to the intractable portfolio, \blue{(ii)} substantially outperform existing tractable multivariate CSs, and \blue{(iii)} enable practical use cases such as multi-metric A/B testing and model comparison.

\end{abstract}

\section{Introduction} \label{sec:AS:introduction}

Many modern decision problems are inherently sequential: we monitor the data as it arrives and want to make a decision as soon as the evidence is strong enough. In A/B testing~\citep{maharaj2023anytime_enterprise_ab, johari2015always}, for example, one may compare two versions of a website and stop collecting data once there is reliable evidence that one variant performs better. This kind of optional stopping requires statistical tools that remain valid regardless of when we stop. \emph{Confidence sequences} (CSs) provide exactly this guarantee: a CS for a mean $\mu$ is a sequence of confidence sets that contains $\mu$ at \emph{all} times $t$ with a prescribed probability~\citep{ryu_gamingbasedconfseq}.

While much of the existing literature focuses on CSs for a \emph{scalar} mean, many modern applications involve a \emph{vector} mean $\boldsymbol{\mu}\in\mathbb{R}^D$. For instance, in website A/B testing, one may monitor several metrics simultaneously, such as browsing time, purchases, and page views. Similarly, when evaluating a new ML model's accuracy against an already deployed one, one may also monitor subgroup accuracies to detect possible subgroup degradations.
To be useful in such settings, a multivariate CS should satisfy two requirements: it should be \emph{tight}, meaning that its sets shrink rapidly as more data are observed (enabling early decisions), and \emph{computationally tractable}, allowing for real-time monitoring in moderate to high dimensions.

Achieving both requirements simultaneously is challenging. If one focuses on tightness, the multivariate CSs of~\citet{ryu_gamingbasedconfseq} provide a strong benchmark: they can be tight, but lack a closed-form solution for the region they define.
They also have a computational cost of $O(t^D)$ for verifying whether a vector belongs to the CS at time $t$~\citep{martinez2024empirical_bernstein_banach_spaces}, limiting their practical use. On the other hand, closed-form regions such as confidence spheres~\citep{chugg2023confsphere, martinez2024empirical_bernstein_banach_spaces} are easy to compute, but they expand and shrink in all directions together, which is wasteful when some coordinates have low variance and others do not, as shown by the baseline in \Cref{fig:AS:overview-approx}. 
Meanwhile, the ellipsoids by \citet{whitehouse2023time_uniform_self_normalized_concentration_vector_valued_processes} can adapt to the per-axis variance but target more general vector processes, whereas we specialize to bounded observations and exploit this structure for smaller regions.

\begin{figure*}[t]
	\centering
	\includegraphics[
		width=\textwidth
	]{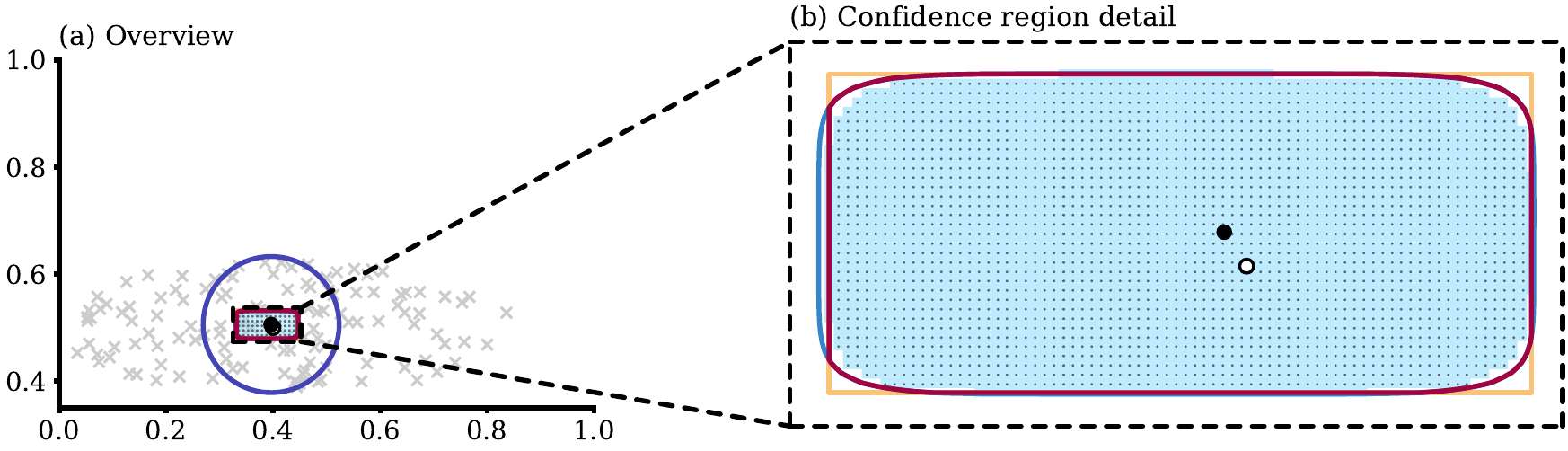}
	\caption{
		Intractable CS $\Bavg$ (\protect\tikz[baseline=0ex]{\fill[plotportfolio] (0,0) rectangle (1.2ex,1.2ex); \fill[plotgray] (0.6ex,0.6ex) circle (0.16ex);}) and its computable superset approximations:
		the bounding box $\Bbbox$ ({\color{plotorange}\rule[0.6ex]{0.9em}{1.0pt}}),
		the $\ell_p$-ellipsoid $\Bellip$ ({\color{plotblue}\rule[0.6ex]{0.9em}{1.0pt}}),
		and their intersection $\Bbboxellip$ ({\color{plotmagenta}\rule[0.6ex]{0.9em}{1.0pt}}).
		The figure also shows
		observations ({\color{plotgrayobservations}$\times$}),
		the population mean ($\circ$),
		the empirical mean ($\bullet$),
		and a previously strong baseline~\protect\citep{martinez2024empirical_bernstein_banach_spaces} ({\color{plotbaseline}\rule[0.6ex]{0.9em}{1.0pt}}).
	}
	\label{fig:AS:overview-approx}
\end{figure*}

Motivated by recent progress on \emph{one-dimensional} CSs~\citep{howard2021timeuniformnonparamnonasmptotic_conf_seq, waudby2024estimatingmean}, we lift the betting-based CS of~\citet{waudby2024estimatingmean} to bounded \emph{vectors}. A simple approach is to build a 1D CS for each coordinate and adjust the significance level using a multiple-testing correction, resulting in a weighted Bonferroni hyperrectangle. This construction is easy to compute, but becomes increasingly conservative as $D$ grows.
We therefore also consider a tighter multivariate construction that aggregates the coordinate-wise wealth processes of~\citep{waudby2024estimatingmean} through a weighted average, which we call a portfolio. The portfolio CS can be substantially tighter, but is less practical since the boundary is not described explicitly, and properties such as volume or maximum width are not easily available in closed form.

\textbf{This paper addresses that gap.} 
\textbf{\blue{(i)}} We treat the portfolio as a tight but expensive target and derive three \emph{outer} approximations that scale linearly with the dimension: a bounding box, an $\ell_p$-ellipsoid, and their intersection.
\textbf{\blue{(ii)}} We prove set relations among all constructions (see \Cref{fig:AS:overview-venn}) and provide the necessary theory to ensure that our approximations remain statistically valid, even \emph{after implementation}.  
\textbf{\blue{(iii)}} Experiments show that our constructions \textbf{consistently outperform existing tractable CSs}, resulting in substantially smaller regions, particularly as $D$ grows.  Two case studies further illustrate their practical value by showing that they can detect statistical significance with substantially fewer observations\footnote{Code and a practical tutorial on using our CSs to monitor ML models are available \href{https://fdenoodt.github.io/higher-dimensional-confidence-sequences/}{\underline{here}}.}.

\textbf{Roadmap.} Building on the univariate CSs described in \Cref{sec:AS:one_dim_conf_seq}, \Cref{sec:AS:high_dim_conf_seq} formalizes three multivariate lifts (a weighted Bonferroni rectangle, an equivalent max-form, and a portfolio construction), proves their relations, and identifies the portfolio as the tightest but computationally intractable variant. \Cref{sec:AS:approximating_the_portfolio_process,sec:AS:computing_regions_in_practice} then develop the three tractable outer approximations to this portfolio and show how to compute them in practice so their regions remain statistically valid.

\section{Background: One-Dimensional Confidence Sequences by Betting} \label{sec:AS:one_dim_conf_seq}

\paragraph{Preliminaries.}
We observe random variables $y_1,y_2,\dots\in[0,1]$ with $\mathbb{E}[y_t | y_{1:t-1}]=\mu$ for all $t \ge 1$, where $\mu \in [0,1]$. This setup also includes the classical i.i.d. setting where $\mathbb{E}[y_t | y_{1:t-1}]=\mathbb{E}[y_t]=\mu$. 
A CS for the mean $\mu$ is a sequence of sets $\mathcal{C}_1, \mathcal{C}_2, \dots$ (denoted compactly as $(\mathcal{C}_t)_{t \ge 1}$) satisfying
\[
\mathbb{P}\!\left( \forall t \ge 1:\ \mu \in \mathcal{C}_t \right) \ge 1 - \alpha,
\]
for a prescribed error level $\alpha$~\citep{ryu_gamingbasedconfseq}. We can also denote it as a $(1-\alpha)$-CS, to emphasize $\alpha$. It can be constructed through a technique that relies on \citeauthor{Ville1939}'s ~\citeyearpar{Ville1939} inequality for supermartingales.
Let $\mathcal{F}_{t-1} := \sigma(Y_{1:t-1})$ denote the \emph{sigma-algebra generated by the past observations} $y_1,\dots,y_{t-1}$, representing all information available up to time $t-1$. A nonnegative process $(M_t)_{t=0}^\infty$ is a \emph{supermartingale} if
\begin{align}
    \mathbb{E}[M_t \mid \mathcal{F}_{t-1}] \le M_{t-1} \label{eq:AS:supermartingale}
\end{align}
for every $t \ge 1$. 
If $M_0=1$, it is a \emph{test} supermartingale (or test martingale when equality holds in \Cref{eq:AS:supermartingale}).
\citeauthor{Ville1939}'s inequality states that for any test (super)martingale,
\[
\mathbb{P}\bigl(\exists t: M_t \ge 1/\alpha \bigr) \;\le\; \alpha.
\]
Hence, if for candidate $m\in[0,1]$ we define $M_t(m)$ such that $(M_t(\mu))_{t\ge0}$ is a test (super)martingale and $M_t(m)$ is $\mathcal{F}_t$-measurable, then the sublevel set
\begin{align}
    \mathcal{C}_t \;=\; \{\,m\in[0,1]\mid M_t(m)\le1/\alpha\} \label{eq:AS:confseq_Mt}
\end{align}
gives a $(1-\alpha)$-CS~\citep{ryu_gamingbasedconfseq, waudby2024estimatingmean}. Ideally, $\mathcal{C}_t$ should shrink rapidly over time. Concretely, this means that for any candidate value $m \neq \mu$, the process $M_t(m)$ should cross the rejection threshold $1/\alpha$ rapidly, so that such values are excluded from $\mathcal{C}_t$ as early as possible.

\paragraph{Hedged Capital Process.} 
\citet{waudby2024estimatingmean} define $M_t(m)$ that is a test martingale for $m = \mu$ through the introduction of the \emph{Hedged Capital Process} \citep{waudby2024estimatingmean}, or \emph{wealth} process. It is based on two functions, respectively non-increasing and non-decreasing in $m \in [0,1]$:
\begin{align*}
k_t^+(m) \;=\;\prod_{i=1}^t \bigl(1 + \lambda_i^+(m)\,(Y_i - m)\bigr), \quad k_t^-(m) \;=\;\prod_{i=1}^t \bigl(1 - \lambda_i^-(m)\,(Y_i - m)\bigr),
\end{align*}
with $k_0^+(m) = k_0^-(m) = 1$, where $\lambda_i^{+}(m) \in[0,1 / m)$ and $\lambda_i^{-}(m) \in[0,1 /(1-m))$ are data-dependent ``stakes''. Concrete definitions are given in \Cref{apdx:AS:hedged_capital_stakes}.
The Hedged Capital Process is defined as the weighted maximum between the two functions,
\begin{align}
    k_t(m) \;=\; \max \{\theta k_t^+(m), \, (1-\theta)k_t^-(m) \},\label{Eq:AS:k_t_hedged_capital}
\end{align}
for a weighting parameter $\theta \in [0, 1]$. 

\paragraph{Remark.}
\citeauthor{waudby2024estimatingmean} initially introduce this wealth process as the convex-combination
\begin{equation}
    \dot k_t(m) \;=\; \theta k_t^+(m) + (1-\theta)k_t^-(m), \label{eq:AS:k_dot_convex_comb}
\end{equation}
and show that $\dot k_t(m)$ is a test martingale when $m=\mu$. Applying Ville’s inequality to $\dot k_t(m)$ therefore forms a valid CS via Eq.~\eqref{eq:AS:confseq_Mt}. Since $k_t(m) \le \dot k_t(m)$ for all $t$ and $m$, the sublevel set obtained by $k_t(m)$ defines a conservative outer CS and is thus also valid.
The preference for the less powerful max-form in \Cref{Eq:AS:k_t_hedged_capital} is motivated by its properties; 
 $k_t(m)$ is nonnegative and quasiconvex, so the sublevel set in Eq.~\eqref{eq:AS:confseq_Mt} is an interval, computable by root finding.

\section{From One-Dimensional Confidence Intervals to Higher-Dimensional Sets} \label{sec:AS:high_dim_conf_seq}

\begin{wrapfigure}{r}{0.32\columnwidth}
	\vspace{-1.5em}
	\centering
	\includegraphics[
	width=\linewidth,
	trim=2.05cm 2cm 2.5cm 1.5cm, 
	clip
	]{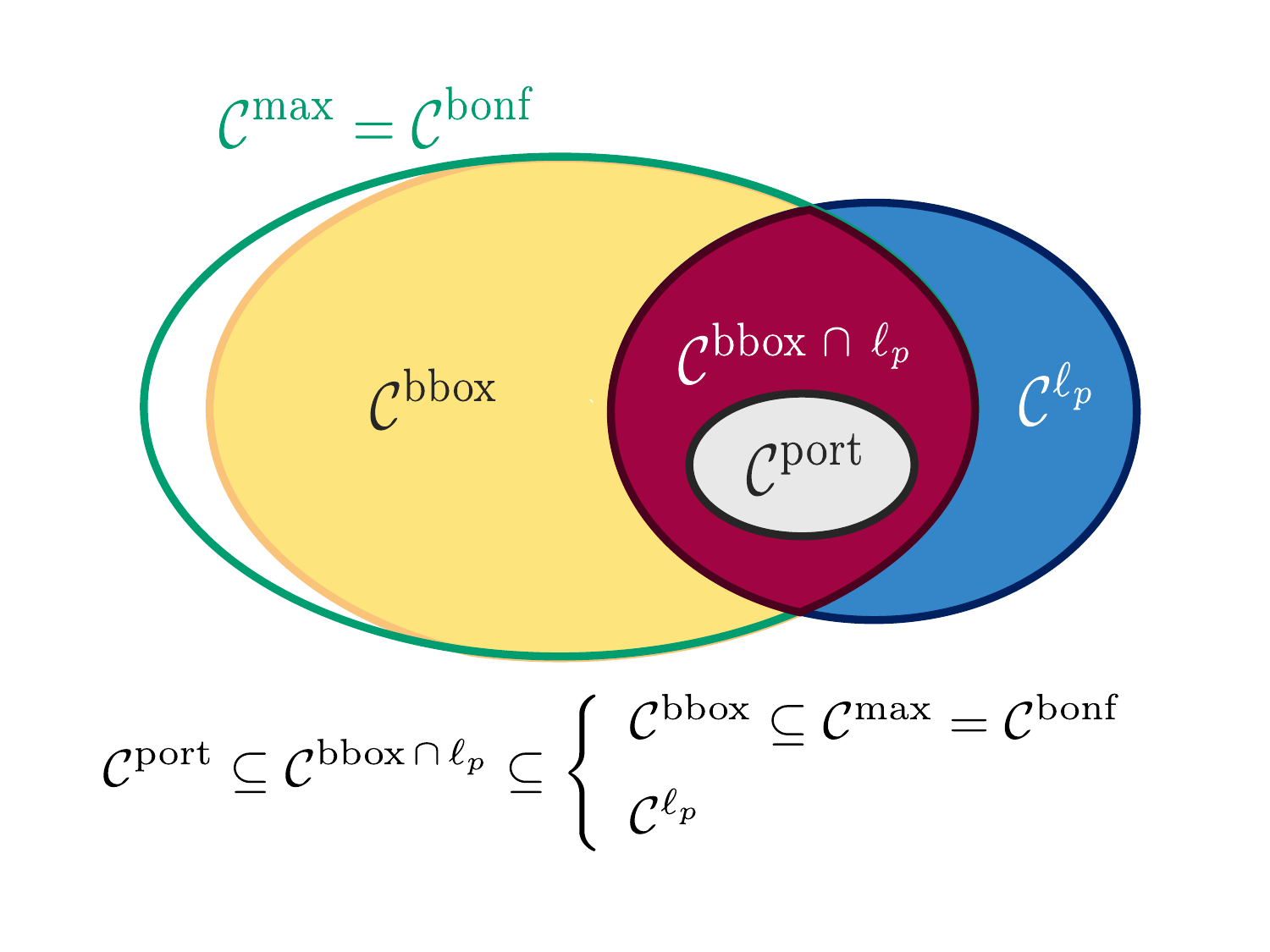}
	\caption{Venn diagram depicting the relations among the various sets.}
	\label{fig:AS:overview-venn}
	\vspace{-0em}
\end{wrapfigure}

Having described univariate CSs in \Cref{sec:AS:one_dim_conf_seq}, we now turn to the multivariate setting. Denote the vector-valued observations $\mathbf{y}_t \in [0,1]^D$ with conditional mean $\mathbb{E}[\mathbf{y}_t \mid \mathbf{y}_{1:t-1}] = \boldsymbol{\mu} \in [0,1]^D$. 
The goal is to construct wealth processes $K_1(\mathbf{m}), K_2(\mathbf{m}), \dots$ such that
\begin{align}
\mathcal{C}_t \;=\; \{\,\mathbf{m}\in[0,1]^D \mid K_t(\mathbf{m}) \le 1/\alpha \}
\end{align}
forms a valid $(1-\alpha)$-CS. The next subsections formalize three such multivariate lifts: \blue{(i)} a Bonferroni rectangle $\Bbonf_t$, \blue{(ii)} an equivalent max-form $\Bmax_t$, and a \blue{(iii)} portfolio construction $\Bavg_t$, and compare their relations. This comparison motivates $\Bavg_t$ as the tightest construction in this family, but also computationally intractable. Wealth processes for scalar $m$ are denoted by lowercase $k_t(\cdot)$, and those for vector $\mathbf{m}$ by uppercase $K_t(\cdot)$.

\subsection{Bonferroni Hyperrectangle}
Suppose we have $D$ wealth processes, one for each coordinate of $\mathbf{m} = (m_1,\ldots,m_D)$, denoted by
$k_t^1(m_1), \ldots, k_t^D(m_D)$. A straightforward multivariate lift is obtained via a weighted Bonferroni correction~\citep{holm_sequential_rejective_multiple_test_proec, kang2009weighted_bonferroni}.

Given a weight vector $\mathbf{w} \in \Delta^D$ on the probability simplex (i.e., $\sum_{d=1}^D w_d = 1$ and $w_d > 0$), we assign an error level $\alpha w_d$ to the $d$-th coordinate. This forms the Bonferroni confidence set
\[
\Bbonf_t
:=\Bigl\{ \mathbf{m} \in [0,1]^D \;\big|\; k_t^d(m_d) \le \tfrac{1}{\alpha w_d}, ~\forall d \in [D]\Bigr\}.
\]
When all weights are equal, i.e., $w_d = 1/D$, this reduces to the classical Bonferroni correction~\citep{Dunn01031961, napierala2012bonferroni}. In \Cref{sec:AS:max_of_wealth_processes} we indirectly show that $\Bbonf_t$ does indeed form a valid (1-$\alpha$)-CS through a proof on its equivalent max-form construction.

\subsection{Maximum of Wealth Processes} \label{sec:AS:max_of_wealth_processes}
The weighted Bonferroni correction enables the construction of a multivariate CS, but does not make the associated wealth process $K_t(\cdot)$ over vectors $\mathbf m$ explicit. Here, we introduce an equivalent formulation in terms of a single wealth process. Consider the max-wealth process
\[
\Kmax_t(\mathbf{m}) = \max_{d\in [D]} w_d k_t^d(m_d),
\]
with $\mathbf{w} \in \Delta^D$, and define the corresponding confidence set
\[
\Bmax_t
:= \left\{ \mathbf{m} \in [0,1]^D \mid K_t^{\max}(\mathbf{m}) \le \frac{1}{\alpha} \right\}.
\]

\begin{proposition}\label{prop:AS:kmax_is_cs}
The set $\Bmax_t$ is a valid $(1-\alpha)$-CS.
\end{proposition}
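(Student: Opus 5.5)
The plan is to reduce validity of $\Bmax_t$ to a union bound over the $D$ coordinate-wise CSs, each of which is valid by Ville's inequality.

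\textbf{Step 1 (coordinate-wise validity).} For each $d \in [D]$, the scalar process $\dot k_t^d(\mu_d) = \theta k_t^{d,+}(\mu_d) + (1-\theta) k_t^{d,-}(\mu_d)$ is a test martingale with respect to $\mathcal{F}_t = \sigma(\mathbf{y}_{1:t})$. This is the argument of \citet{waudby2024estimatingmean} from \Cref{sec:AS:one_dim_conf_seq}, applied to the $d$-th coordinate. The one point to check is that it survives the larger filtration. It does, because $\mathbb{E}[y_{t,d} \mid \mathbf{y}_{1:t-1}] = \mu_d$ and the stakes $\lambda^{\pm}_{t}$ are predictable with respect to $\mathcal{F}_{t-1}$, so each factor $1 \pm \lambda_t^{\pm}(y_{t,d}-\mu_d)$ has conditional mean one. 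Since $k_t^d \le \dot k_t^d$, Ville's inequality applied at level $\alpha w_d$ gives
\[
\mathbb{P}\bigl(\exists t: k_t^d(\mu_d) > 1/(\alpha w_d)\bigr) \le \mathbb{P}\bigl(\exists t: \dot k_t^d(\mu_d) \ge 1/(\alpha w_d)\bigr) \le \alpha w_d .
\]

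\textbf{Step 2 (reduce the max-form to the union).} The set $\Bmax_t$ as written uses $\max_d w_d k_t^d(m_d) \le 1/\alpha$, i.e.\ $k_t^d(m_d) \le 1/(\alpha w_d)$ for all $d$, which is exactly $\Bbonf_t$. (Read literally, the weight multiplies $k$; the scaling intended to match $\Bbonf_t$ is $k_t^d/(1/w_d)$. I will keep the thresholds $1/(\alpha w_d)$, which are what the union bound needs, and note that with the literal product $w_d k_t^d \le 1/\alpha$ one instead gets threshold $1/(\alpha w_d)$ as well.) So the event $\{\exists t: \boldsymbol{\mu} \notin \Bmax_t\}$ equals $\bigcup_{d} \{\exists t: k_t^d(\mu_d) > 1/(\alpha w_d)\}$.

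\textbf{Step 3 (union bound).} Summing the bounds from Step 1 over $d$ gives total probability at most $\sum_d \alpha w_d = \alpha$, using $\mathbf{w} \in \Delta^D$. Hence $\mathbb{P}(\forall t: \boldsymbol{\mu} \in \Bmax_t) \ge 1-\alpha$. As a byproduct this also proves validity of $\Bbonf_t$, which the paper promised to establish indirectly.

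The only genuinely delicate step is Step 1's filtration issue together with bookkeeping the exact threshold/weight convention in $\Kmax_t$. Beyond that, the argument deliberately uses only a union bound, with no joint supermartingale. This is necessary because the maximum of supermartingales is generally not a supermartingale.
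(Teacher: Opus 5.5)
Your proof is correct, but it takes a different route from the paper. The paper never splits $\alpha$. It dominates the max-wealth by a single process, \[ \Kmax_t(\mathbf m)=\max_d w_d k_t^d(m_d)\le \sum_d w_d k_t^d(m_d)\le \sum_d w_d \dot k_t^d(m_d). \] It then notes that the right-hand side at $\mathbf m=\boldsymbol\mu$ is a test martingale, being a convex combination of test martingales since $\mathbf w\in\Delta^D$, and applies Ville's inequality once at level $\alpha$. You instead apply Ville $D$ times at levels $\alpha w_d$ and take a union bound. That is the classical Bonferroni argument: it proves validity of $\Bbonf_t$ directly and transfers it via $\Bmax_t=\Bbonf_t$.

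Your route is more elementary and modular, since it works for any collection of valid coordinate-wise CSs. It does not even need the filtration check you single out. The union bound uses only each coordinate's marginal validity, and $\dot k_t^d(\mu_d)$ is already a martingale in coordinate $d$'s own filtration by the tower property. That check is, however, exactly what the paper's route needs and leaves implicit: a weighted sum of martingales is only guaranteed to be a martingale when they share a filtration.

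In exchange, the paper's route proves more. The same single martingale certifies the smaller portfolio set $\Bavg_t$, and even the sublevel set of $\sum_d w_d \dot k_t^d$. This underlies $\Bavg_t\subseteq\Bbonf_t$ and the rest of the paper. It also shows that your closing claim is wrong: a union bound is not \emph{necessary}. One never needs the maximum to be a supermartingale, only to be dominated by a test supermartingale, and the weighted average supplies one.

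Finally, the convention worry in your Step 2 is a non-issue. Since $w_d>0$, $w_d k_t^d(m_d)\le 1/\alpha$ holds iff $k_t^d(m_d)\le 1/(\alpha w_d)$, which is precisely how the paper proves $\Bmax_t=\Bbonf_t$.
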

The proof follows by showing that $\Kmax_t(\boldsymbol{\mu})$ is upper bounded by a test martingale and applying Ville’s inequality; full details are provided in \Cref{apdx:AS:kmax_is_cs}.
The equivalence $\Bbonf_t = \Bmax_t$ is shown in \Cref{apdx:AS:thm_set_relations} in the appendix. This equivalence also (re)confirms the validity of $\Bbonf_t$ as a CS.

The main advantage of (weighted) Bonferroni rectangles (and thus also their equivalent $\Bmax_t$ form) is their simplicity. They are easy to compute, and the confidence set defines a $D$-dimensional hyperrectangle.
However, eliminating candidates $\vect{m}$ from this set is achieved by considering only a single coordinate's wealth function $k_t^d(m_d)$ at a time. The dependence between coordinates is not exploited, typically making them overly conservative as $D$ grows.

\subsection{Portfolio Process}
Inspired by the convex-combination baseline in~\citep{ryu_gamingbasedconfseq}, a stronger alternative that combines the wealths across coordinates is to consider a weighted sum of wealth processes (i.e., a \emph{portfolio process}):
\begin{align}
\Kavg_t(\mathbf{m}) = \sum_{d=1}^D w_d k_t^d(m_d), \label{sec:AS:mixture_of_wealth_Kavg}
\end{align}
with $\mathbf{w} \in \Delta^D$. It is straightforward to verify, in an analogous fashion to \Cref{prop:AS:kmax_is_cs}, that the set 
\begin{align}
    \Bavg_t
:=\bigl\{ \mathbf{m} \in [0, 1]^D \mid \Kavg_t(\mathbf{m}) \le \frac{1}{\alpha} \bigr\} \label{eq:AS:Bavg}
\end{align}
 is a $(1-\alpha$)-CS. The fact that $ \Bavg_t \subset \Bbonf_t$ is a known result~\citep{cho_peeking_with_PEAK}. For a proof, we refer to \Cref{apdx:AS:thm_set_relations} in the appendix.

While $\Bavg_t$ can be substantially smaller than $\Bbonf_t$, its practical use raises a new challenge: how can we represent this confidence set in a form that allows efficient computation of quantities such as volume or maximum width? Although each coordinate-wise wealth function $k_t^d(\cdot)$ is quasiconvex, the weighted sum $\Kavg_t(\cdot)$ does not preserve this property in general, and the resulting sublevel set is not necessarily convex anymore. 

For practical applications, we thus require tractable approximations. A naive approach would be to approximate $\Bavg_t$ by uniformly discretizing $[0,1]^D$ and evaluating $\Kavg_t(\mathbf{m})$ at each grid point. However, this quickly becomes computationally infeasible: a discretization with $N$ points per coordinate requires $N^D$ evaluations of $\Kavg_t(\cdot)$, which scales exponentially with the dimension. 

In the following sections, we therefore develop efficient \emph{outer} approximations to $\Bavg_t$ that have explicit representations, yet remain computationally efficient.

\section{Approximating the Portfolio Process} \label{sec:AS:approximating_the_portfolio_process}
We introduce three \emph{outer} approximations, i.e., supersets, of the region $\Bavg$ that allow for easy computation of volume and maximum width: the bounding box $\Bbbox$, the $\ell_p$-ellipsoid $\Bellip$, and their intersection $\Bbboxellip$. Importantly, since these approximations contain $\Bavg$, they still form valid CSs. We omit the $t$-indexing for simplicity.

\subsection{Bounding Box}
\label{sec:AS:bounding_box}
We define the axis-aligned bounding box
\begin{align}
\Bbbox
&=
\left\{\vect{x}\in[0,1]^D \mid x_d\in[l_d,u_d],
\ ~\forall d \in [D]\right\}, \label{AS:eq:bbox}
\\
[l_d,u_d]
&=
\left\{x_d\in[0,1]\mid
w_d k^d(x_d)
\le
\frac{1}{\alpha}-\sum_{j\neq d} w_j k^j_\star
\right\},
\end{align}
with $k_\star^j$ the global minimum of $k^j(\cdot)$ (derivation in \Cref{apdx:AS:bounding_box}).  It satisfies $ \Bavg \subseteq \Bbbox \subseteq \Bmax = \Bbonf$. Intuitively, $\Bbbox$ defines each coordinate's interval assuming the \emph{minimum possible contribution} of all other coordinates (their global minima), and thus, still contains the full portfolio set $\Bavg$.
At the same time, these coordinate-wise constraints remain tighter than the max/Bonferroni threshold, which gives $\Bbbox\subseteq\Bmax=\Bbonf$. Formal proofs are given in \Cref{apdx:AS:set_relations}.
A discussion on how to compute the bounding box in practice, together with a principled method to estimate the global minimum $k_\star^j$, follows in \Cref{sec:AS:computing_regions_in_practice}.

\subsection{$\ell_p$-Ellipsoid}\label{sec:AS:lp-ellipsoid}
The $\Bbbox$ in \Cref{AS:eq:bbox} is simple and efficiently computable, but can become a loose approximation when $\Bavg$ does not have an approximately rectangular shape. An alternative is to outer-approximate $\Bavg$ through an $\ell_p$-hyper-ellipsoid $\Bellip$. Concretely, we replace each function $k^d(x)$ with a simpler counterpart $\tilde k^d(x)$, resulting in the sublevel set
\begin{align}
	\Bellip := \{ \vect{x} \,|\, \sum_{d=1}^D w_d \tilde k^d(x_d) \le \frac{1}{\alpha} \}, 
	\quad
	\tilde{k}^d(x) := \frac{|x-a_d|^p}{b_d^p} - c_d. 
	\label{eq:AS:Bellip}
\end{align}
with parameters $a_d, b_d, c_d \in \mathbb{R}$, $b_d > 0$, and a shared degree $p \in \mathbb{N}_{>0}$. The practical benefit of this construction is that the sublevel set $\Bellip$ can be expressed as an axis-aligned $\ell_p$-hyper-ellipsoid:
$$
\sum_{d=1}^D \frac{|x_d-a_d|^p}{(b'_d)^p} \;\le\; 1, \quad
b'_d := \frac{b_d\, \sqrt[p]{1/\alpha + \sum_{j=1}^D w_j\,c_j} }{\sqrt[p]{w_d}},
$$
and thus its volume can be computed in closed form \citep{ahmed2018volumes, wang_volumes_of_generalized_unit_balls}: $V(\Bellip)=2^D \left( \prod_{d=1}^D  b_d' \right) \frac{ \Gamma\left(1+\frac{1}{p}\right)^D}{\Gamma\left(1+ \frac{D}{p}\right)}.$
Here, $\Gamma(\cdot)$ denotes the Gamma function. Note that for $p = 2$, we get the conventional $D$-dimensional ellipsoid (see \Cref{apdx:AS:ellipsoid} for details on the derivation and the generalization to its matrix form).

To ensure that $\Bavg \subseteq \Bellip$, it suffices that
$ \tilde k^d(x) \le k^d(x)
\quad \text{for all } d \in [D] \text{ and all } x$ (proven in \Cref{apdx:AS:thm_set_relations}).
The domain constraint can be made more specific:  it is not necessary to enforce this inequality over the entire domain $x \in [0,1]$. Instead, it suffices to satisfy the constraint on a restricted interval $[\xstart^d, \xend^d] \subset [0,1]$, where $(\xstart^d, \xend^d)$ denote the boundary points of the interval $\{ x \mid k^d(x) \le \tfrac{1}{\alpha w_d} \}$. As shown in the proof, outside this interval, the constraint is irrelevant for membership in $\Bavg$, which allows us to restrict attention to a smaller domain when fitting $\tilde k^d$.
How to obtain these parameters such that $\tilde k^d(x) \le k^d(x)$ is described in \Cref{sec:AS:in_practice_ellipsoid}.

\subsection{Intersection: Bounding Box $\cap$ $\ell_p$-Ellipsoid} \label{AS:sec:intersection_bbox_ellip}

Finally, we also consider the intersection $\Bbboxellip := \Bbbox \cap \Bellip.$ Since $\Bavg \subseteq \Bbbox$ and $\Bavg \subseteq \Bellip$, it immediately follows that $\Bavg \subseteq \Bbboxellip$. This completes the set relations shown in \Cref{fig:AS:overview-venn}, with the details and formal proofs described in \Cref{apdx:AS:set_relations}.

Among the proposed approximations, $\Bbboxellip$ is also our smallest superset. However, contrary to $\Bbbox$ and $\Bellip$, to our knowledge, there is no longer an analytical solution for the volume. As a result, volume estimation must rely on hit-or-miss Monte Carlo methods. A practical advantage is that the bounding rectangle from $\Bbbox$ can serve as the sampling region, instead of the naive $[0,1]^D$ hypercube.  
Since $\Bbbox$ is typically much closer to $\Bbboxellip$, a larger fraction of sampled points fall inside the target region, reducing the number of samples needed for a given precision.

Moreover, when membership evaluation in $\Bavg$ is computationally expensive (e.g., due to costly evaluations of $k^d(\cdot)$),  one can first construct $\Bbboxellip$ using only a limited number of such evaluations,  and then perform all subsequent membership tests directly on $\Bbboxellip$, 
which is computationally inexpensive.

Certain other interesting properties of $\Bbboxellip$ are also easy to bound. For instance, for its maximum width, we have: $\mathrm{width}(\Bbboxellip) \le \min\{\mathrm{width}(\Bbbox),\,\mathrm{width}(\Bellip)\}$.

\section{Computing the Regions in Practice} \label{sec:AS:computing_regions_in_practice}
Having defined the outer approximations $\Bbbox$ and $\Bellip$, we now turn to their implementation, with emphasis on preserving statistical validity: the numerical procedures should only enlarge the proposed regions, so the resulting computed sets remain valid CSs.

\subsection{A Lower Bound on the Global Minimum of $k(\cdot)$}
\label{sec:global_minimium}
Several steps in constructing $\Bbbox$ and $\Bellip$ require a \emph{conservative} estimate of the global minimum 
$$
k_\star := \min_{x\in[0,1]} k(x) 
\quad \text{ with } \quad
k(x)=\max\{\theta k^+(x),(1-\theta)k^-(x)\}
$$  
the one-dimensional function defined in \Cref{Eq:AS:k_t_hedged_capital}. Rather than estimating $k_\star$ via dense grid search, we exploit the monotonic structure of $k^+$ and $k^-$ to obtain a lower-bounded estimate through a small number of root-finding calls.
Concretely, we show that (i) minimizers always exist, (ii) they are characterized by the roots of $h(x)=(1-\theta)k^-(x)-\theta k^+(x)$, and (iii) based on this result derive \Cref{alg:AS:bound-global-min-explicit} which returns explicit upper and lower bounds on $k_\star$ (together with minimizer estimate $\hat{x}$). Thus, the resulting procedure is theory-driven and ensures that $\underline{k}^\star \le k_\star$, with the gap controlled by a preset tolerance. See \Cref{apdx:AS:global_minimum_bound_k} for details.

\subsection{Computing the Bounding Box}\label{sec:AS:in_practice_bbox}

Following \Cref{sec:AS:bounding_box}, $\Bbbox$ is given by $\{\mathbf{x} \mid w_d k^d(x_d) \le \tfrac{1}{\alpha} - \sum_{j \neq d} w_j k_\star^{j}, \quad ~\forall d \in [D] \}.$ Since the true minima $k_\star^j$ are unknown, we use lower bounds $\underline{k_\star^j}$ from \textsc{Estimate-$k_\star$} (\Cref{alg:AS:bound-global-min-explicit}), which also returns minimizer estimates $\hat{x}_j$. As $\underline{k_\star^j} \le k_\star^j$, this enlarges the box. Given $\hat{x}_d$, a conservative interval $[l_d,u_d]$ is obtained by solving $w_d k^d(x_d) - \! (\tfrac{1}{\alpha}-\sum_{j\neq d} w_j k^j_\star) - \epsilon = 0$ on $[0,\hat{x}_d]$ and $[\hat{x}_d,1]$, with constant $\epsilon>0$ ensuring enclosure in case of flat regions. Applying this for all $d \in [D]$ yields $\Bbbox$, which remains a valid outer approximation of $\Bavg$ after implementation.

\paragraph{Computational Complexity.}
Using bisection for root finding, the global minimum of each $k^d(\cdot)$ is first approximated using one bisection call per dimension. Next, the endpoints $l_d$ and $u_d$ are computed via two additional bisection calls per dimension. In total, this results in $3D$ bisections. Let $C_{\mathrm{eval}}$ denote the cost of evaluating $k^d(\cdot)$. Since bisection converges in $O\!\left(\log(1/\varepsilon)\right)$ iterations, the total time complexity is $O\!\left(D\, C_{\mathrm{eval}}\, \log(1/\varepsilon)\right),$
which is equivalent to the complexity of the Bonferroni construction $\Bbonf$.

\subsection{Computing the $\ell_p$-Ellipsoid}
\label{sec:AS:in_practice_ellipsoid}
Obtaining an efficient implementation of the $\ell_p$-ellipsoid while preserving validity requires substantially more technical care. Recall that $\Bellip := \{ \vect{x}\; | \;\sum_{d=1}^D w_d\,\tilde{k}^d(x_d)\le \tfrac{1}{\alpha} \}$, where each $\tilde{k}^d(x) := \frac{|x-a_d|^p}{b_d^p}-c_d$ is constructed to satisfy $\tilde{k}^d(x)\le k^d(x)$ for all $x\in[\xstart^d,\xend^d]$. We describe two variants: (i) an approximate construction that enforces this condition only on a grid, and (ii) a conservative modification $\BellipS$ that guarantees $\Bavg\subseteq\BellipS$.

\paragraph{Overview.}
For a fixed degree $p$, we fit parameters $\{(a_d,b_d,c_d)\}_{d=1}^D$ independently per dimension. We then select $p$ by a small grid search over candidates $\mathcal{P}$ (e.g., $\{6,7,\dots,16\}$), choosing the minimum-volume ellipsoid: 
$$p^\star = \arg\min_{p\in\mathcal{P}} V\left( \Bellip(p) \right).$$

\begin{wrapfigure}{r}{0.356\columnwidth}
  \vspace{-0.2\baselineskip}
  \centering
  \begin{tikzpicture}
    \begin{axis}[
      unbounded coords=discard,
      width=1.06\linewidth,
      height=4.05cm,
      grid=none,
      xmin=0.2000, xmax=1.0000,
      ymin=-5.0000, ymax=50.0000,
      label style={font=\scriptsize},
      tick label style={font=\scriptsize},
	xtick={0, 0.25,0.5,0.75,1.0},
      ytick={0,10,20,30,40,50},
      extra x ticks={0.448227,0.812329},
      extra x tick labels={$\xtLd$,$\xtRd$},
      extra x tick style={tick label style={font=\scriptsize,yshift=-1.1ex}},
      extra y ticks={16.817930},
      extra y tick labels={$\gamma_d$},
      extra y tick style={tick label style={font=\scriptsize,xshift=-0.8ex}},
      legend style={
        at={(0.5,-0.24)},
        anchor=north,
        draw=none,
        font=\scriptsize,
        legend columns=2
      },
    ]
      \addplot[very thick, black] table[col sep=comma] {assets/example_tilde_k_fit/figure-d0-hedged.csv};
      \addlegendentry{$k^d(x)$}
      \addplot[very thick, dashed, color=magenta!80!black] table[col sep=comma] {assets/example_tilde_k_fit/figure-d0-parabola.csv};
      \addlegendentry{$\tilde k^d(x)$}

      \addplot[black!70, dashed, forget plot] coordinates {(0.448227,-5.000000) (0.448227,50.000000)};
      \addplot[black!70, dashed, forget plot] coordinates {(0.812329,-5.000000) (0.812329,50.000000)};
      \addplot[black!70, dashed, forget plot] coordinates {(0.200000,16.817930) (1.000000,16.817930)};
    \end{axis}
  \end{tikzpicture}
  \vspace{-0.8\baselineskip}
  \caption{
  Example where $\tilde k^d(x)$ lower-bounds $k^d(x)$, touching $\gamma_d$ at two points; tight near $\gamma_d$ and looser farther away.
  }
  \label{fig:AS:fit_and_extension_examples}
  \vspace{-2.5\baselineskip}
\end{wrapfigure}

\paragraph{Per-Dimension Fit via Two Touch Points.}
Rather than approximating $k^d(\cdot)$ globally, we enforce a good fit only in the most relevant regions. We select a target level $\gamma_d>k^d_\star$ and fit $\tilde{k}^d(\cdot)$ so that it (approximately) touches $k^d(\cdot)$ at two points. Outside this region, some slack is allowed (\Cref{fig:AS:fit_and_extension_examples}).
Formally, given two touch points $\xtLd$ and $\xtRd$, we enforce
\begin{align}
	\tilde{k}^d(\xtLd)=\tilde{k}^d(\xtRd)=\gamma_d,
	\label{eq:AS:touch_gamma}
	\\
	\tilde{k}^d(x_i)\le k^d(x_i)\ \ \text{for all } x_i \in \{ x_i\}_{i=1}^N,
	\label{eq:AS:dominate}
\end{align}
where $\{ x_i\}_{i=1}^N$ is a fine grid on the restricted domain $[\xstart^d,\xend^d]$. Importantly, the fit is largely analytic: $a_d$ is fixed directly by the two touch-point constraints, and $c_d$ is obtained in closed form once $b_d$ is chosen. Numerically, the procedure reduces to a single pass over $\{x_i\}_{i=1}^N$ to obtain feasibility bounds for $b_d^p$. Full derivations, the choice of $\gamma_d$, and fallback procedures (including adaptive touch-point widening and domain extension) are given in \Cref{apdx:AS:find_abc,apdx:AS:adaptive_touch,apdx:AS:k_extension}.

\paragraph{Discretization and Safe Option.} \label{sec:AS:disc_and_conservative_alternative}
Constraint~\eqref{eq:AS:dominate} enforces $\tilde{k}^d(x_i)\le k^d(x_i)$ only on a grid $\{ x_i\}_{i=1}^N$. In \Cref{apdx:AS:a_bound_on_the_gridding_err}, we derive an upper bound on the worst-case violation between the grid points: 
$$
\max_{x\in[\xstart^d,\xend^d]}\!\bigl(\tilde{k}^d(x)-k^d(x)\bigr)
\;\le\;
\max_{i}\!\left(\max\{\tilde{k}^d(x_i),\tilde{k}^d(x_{i+1})\}-L_i^d\right),
$$

where $L_i^d$ is a computable lower bound on $k^d(\cdot)$ over the interval $[x_i,x_{i+1}]$ (defined in \Cref{thm:phi_bound}). Based on this bound, we introduce the ``safe'' construction $\BellipS$ that modifies the grid points so that $\tilde{k}^d(x)\le k^d(x)$ holds \textbf{for all} $x\in[\xstart^d,\xend^d]$ (\Cref{prop:safe_shift}). This safe mode can then be used when a true superset guarantee is required. \Cref{tab:cs_validity} summarizes all outer approximations and whether they remain valid after implementation.

\paragraph{Computational Complexity.}
Let $N$ denote the grid size per dimension and let $N_p = |\mathcal{P}|$ denote the number of $p$-candidates. The ellipsoid construction has complexity
$O\!\left(D\,N_p\,N\,\left(\log(L/\text{tol}) + C_{\text{eval}}\right)\right),$
where $C_{\mathrm{eval}}$ is the cost of evaluating $k^d(\cdot)$ and the logarithmic term comes from the bisection-based fallback procedures. Algorithms and a detailed breakdown are provided in \Cref{AS:apdx:ellipse_algorithms_and_complexity}.

\begin{wraptable}{r}{0.3\columnwidth}
\vspace{-1.0\baselineskip}
\centering
\caption{Overview of our outer approximations and whether they form a valid CS after implementation. 
$\sim$~indicates an approximation without a formal CS guarantee.}
\label{tab:cs_validity}
\small
\setlength{\tabcolsep}{4pt}
\begin{tabular*}{\linewidth}{@{\extracolsep{\fill}} l c @{}}
\toprule
\textbf{Approximation} & \textbf{Valid CS} \\
\midrule
$\Bbbox$             & \cmark \\
$\Bellip$            & \approxmark \\
$\BellipS$& \cmark \\
$\Bbboxellip$& \approxmark \\
$\BbboxellipS$& \cmark \\
\bottomrule
\end{tabular*}
\vspace{-3.5\baselineskip}
\end{wraptable}

\section{Empirical Results} \label{sec:AS:empirical_results}
We evaluate the tightness and efficiency of the proposed CSs, compare them against recent alternatives, and demonstrate their practical use in two case studies. We show that the portfolio is the tightest but intractable, while our approximations recover much of this gap while remaining efficient.

\paragraph{Baselines.}
We compare with four recent CSs: \emph{Banach Sphere}~\citep{martinez2024empirical_bernstein_banach_spaces}, \emph{Conf Sphere}~\citep{chugg2023confsphere}, and \emph{Sub-$\psi$ ellipsoids}~\citep{whitehouse2023time_uniform_self_normalized_concentration_vector_valued_processes}, all of which have analytical solutions and are inexpensive to compute. We refer to smaller-scale results for the bounded-vector reduction \emph{Bounded UP}~\citep{ryu_gamingbasedconfseq} in \Cref{apdx:AS:extra_experimetns}, since its full comparisons are infeasible due to lack of a closed-form solution and membership checking costs $O(t^{D+1})$.

\paragraph{Synthetic Experiments Setup.}
We use two synthetic multivariate distributions: \textbf{(i)} i.i.d. samples from the unit ball (favors sphere baselines), and \textbf{(ii)} a Bernoulli-mixture construction in $[0,1]^D$ (favors our constructions and Bounded UP). Full details are in \Cref{apdx:AS:extra_experimetns}.

\paragraph{Tightness and Coverage.}
\Cref{fig:AS:combined_experiments} compares the various CSs, and demonstrates how across all settings, our proposed regions achieve substantially smaller volumes than the sphere-based baselines (columns 1, 3, and 4). This also results in less conservative CSs with a per-time-step coverage closer to the $1-\alpha$ target (column 5).

\paragraph{Approximation Quality.}
We observe from \Cref{fig:AS:combined_experiments}, column 2, that the gap between $\Bbonf$ and $\Bavg$ rapidly grows with the dimension. For example, at $D=10$, the volume of $\Bavg$ is up to 50\% smaller than that of $\Bbonf$. Both the $\ell_p$-ellipsoid $\BellipS$ and the intersection $\BbboxellipS$ approximate $\Bavg$ well, showing a clear improvement over the Bonferroni construction, yet at substantially lower computational cost than $\Bavg$.  Finally, the bounding-box construction $\Bbbox$ produces slightly smaller regions than $\Bbonf$ at the exact same computational complexity.

 \begin{figure}[!t]
 	\centering
 	\begin{tikzpicture}
 		\node[anchor=south west, inner sep=0] (img) at (0,0)
		{\includegraphics[width=\linewidth]{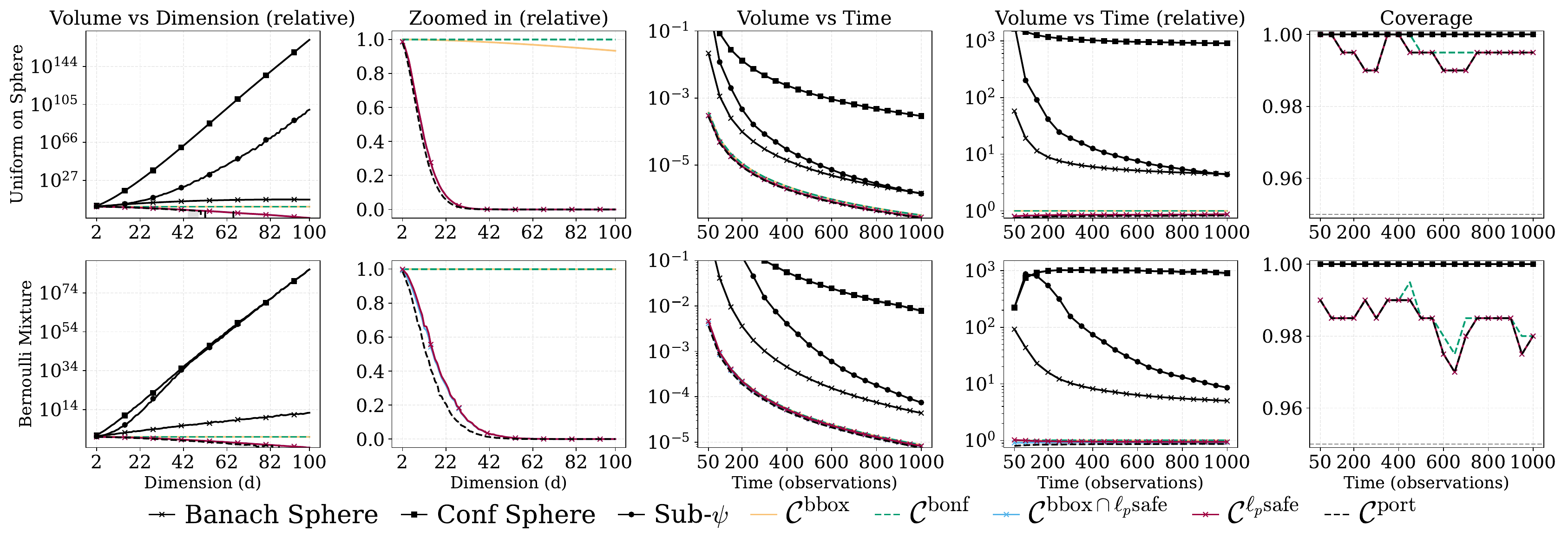}};
		
 		\begin{scope}[x={(img.south east)}, y={(img.north west)}]
			
 			\tikzset{
                zoombox/.style={
                    draw=myblue,
                    line width=0.5pt,
                    rounded corners=1.2pt
                },
                zoomarrow/.style={
                    -{Triangle[length=1.5mm,width=2mm]},
                    draw=myblue,
                    line width=0.5pt
                }
            }
			
 			\coordinate (A1) at (0.052, 0.600); 
 			\coordinate (A2) at (0.208,  0.651); 
 			\draw[zoombox] (A1) rectangle (A2);
			
 			\coordinate (B1) at (0.226, 0.605); 
 			\coordinate (B2) at (0.430, 0.9);
			
 			\coordinate (Aeast) at ($(A2|-A1)!0.5!(A2)$);
 			\coordinate (Bwest) at ($(B1)!0.56!(B1|-B2)$); 
			
 			\draw[zoomarrow]
 			($(Aeast) + (0,0)$) 
 			to[out=8,in=180]
 			($(Bwest) + (0.006,0)$); 
			
 			\coordinate (C1) at (0.052, 0.189); 
 			\coordinate (C2) at (0.208,  0.239); 
 			\draw[zoombox] (C1) rectangle (C2);
			
 			\coordinate (D1) at (0.226, 0.194);
 			\coordinate (D2) at (0.430, 0.489);
			
 			\coordinate (Ceast) at ($(C2|-C1)!0.5!(C2)$);
 			\coordinate (Dwest) at ($(D1)!0.56!(D1|-D2)$);
			
 			\draw[zoomarrow]
 			($(Ceast) + (0,0)$)
 			to[out=8,in=180]
 			($(Dwest) + (0.006,0)$);
			
 		\end{scope}
 	\end{tikzpicture}
     \caption{\textbf{Comparison on synthetic data}. The proposed constructions consistently achieve smaller volumes while maintaining the target coverage.
     ``Relative'' plots report $V(\text{method})/V(\Bbonf)$.
     }
 	\label{fig:AS:combined_experiments}
 \end{figure}

\paragraph{Computational Cost}
\Cref{tab:AS:hedged_volume_comparison_safe} reports the number of $k_t^d(\cdot)$ evaluations required to obtain volumes at $t=100$. Our approximations are orders of magnitude cheaper than direct $\Bavg$ gridding ($N^D$), with $N=2001$. For $\BbboxellipS$, we count only the construction-time $k_t^d(\cdot)$ calls; although its volume has no closed-form expression, once constructed, it can be estimated using inexpensive membership tests on the bounding box and the $\ell_p$-ellipsoid, rather than direct evaluations of $k_t^d(\cdot)$. The external baselines are not included since they do not define $k^d_t(\cdot)$. We instead report wall-clock evaluation times in \Cref{tab:AS:evaluation_time}.

 \begin{table}[!t]
	\centering
	\caption{Number of $k^d_t(\cdot)$ calls to compute the volume at $t=100$ (sorted from most to least efficient). 
    The outer approximations are orders of magnitude more efficient than $\Bavg$'s prohibitive $N^D$ cost.}
	\label{tab:AS:hedged_volume_comparison_safe}
	\scriptsize
	\setlength{\tabcolsep}{2.5pt}
	\renewcommand{\arraystretch}{0.95}
	{%
		\resizebox{\columnwidth}{!}{%
		\begin{tabular}{c c c c c c}
			\toprule
			$D$ 
			& $\Bbbox$ 
			& $\Bbonf$ 
			& $\BellipS$ 
			& $\BbboxellipS$ 
            & $\Bavg$  \\
            \midrule
            2  & $5.01\!\times\!10^{2}$ & $5.05\!\times\!10^{2}$ & $1.60\!\times\!10^{5}$ & $1.61\!\times\!10^{5}$ & $8.00\!\times\!10^{6}$ \\
            3  & $7.78\!\times\!10^{2}$ & $7.78\!\times\!10^{2}$ & $2.40\!\times\!10^{5}$ & $2.41\!\times\!10^{5}$ & $2.40\!\times\!10^{10}$ \\
            4  & $1.02\!\times\!10^{3}$ & $1.02\!\times\!10^{3}$ & $3.20\!\times\!10^{5}$ & $3.21\!\times\!10^{5}$ & $6.40\!\times\!10^{13}$ \\
            5  & $1.30\!\times\!10^{3}$ & $1.30\!\times\!10^{3}$ & $4.00\!\times\!10^{5}$ & $4.01\!\times\!10^{5}$ & $1.61\!\times\!10^{17}$ \\
            6  & $1.54\!\times\!10^{3}$ & $1.55\!\times\!10^{3}$ & $4.80\!\times\!10^{5}$ & $4.82\!\times\!10^{5}$ & $3.85\!\times\!10^{20}$ \\
            7  & $1.85\!\times\!10^{3}$ & $1.85\!\times\!10^{3}$ & $5.61\!\times\!10^{5}$ & $5.63\!\times\!10^{5}$ & $8.96\!\times\!10^{23}$ \\
            8  & $2.11\!\times\!10^{3}$ & $2.12\!\times\!10^{3}$ & $6.41\!\times\!10^{5}$ & $6.43\!\times\!10^{5}$ & $2.06\!\times\!10^{27}$ \\
            9  & $2.40\!\times\!10^{3}$ & $2.40\!\times\!10^{3}$ & $7.22\!\times\!10^{5}$ & $7.24\!\times\!10^{5}$ & $4.63\!\times\!10^{30}$ \\
            10 & $2.68\!\times\!10^{3}$ & $2.68\!\times\!10^{3}$ & $8.03\!\times\!10^{5}$ & $8.06\!\times\!10^{5}$ & $1.03\!\times\!10^{34}$ \\
			\bottomrule
			Complexity 
			& $O\!\left(D\,C_{\mathrm{eval}}\,\log(1/\varepsilon)\right)$
			& $O\!\left(D\,C_{\mathrm{eval}}\,\log(1/\varepsilon)\right)$
			& $O\!\bigl(D\,N_p\,N\,(\log(L/\text{tol}) + C_{\text{eval}})\bigr)$
			& $\displaystyle
			O\!\Bigl(
			\begin{aligned}
				& D\,C_{\mathrm{eval}}\,\log(1/\varepsilon) \, + \\
				& {} D\,N_p\,N\bigl(\log(L/\text{tol}) + C_{\text{eval}}\bigr)
			\end{aligned}
			\Bigr)$
			& $O\!\left(DC_{\mathrm{eval}}N^D \right)$\\
		\end{tabular}
		}
	}
	\vspace{-2em}
\end{table}

\paragraph{Paired A/B-Testing Experiment of Simulated Web Metrics.}
We compare two versions of a website across ten simulated metrics, with the goal of determining whether one version shows a statistically significant difference for a particular metric. 

This is achieved by simulating batches of web-shop sessions for variants $A$ and $B$, with each session described by ten metrics, including time-on-site, clicks, scroll depth, purchase behavior, and two noise variables. The simulation captures several aspects of real-world settings: the number of observations assigned to each variant may differ across batches, and correlations between metrics are induced through a shared latent variable. To simulate a treatment effect, variant $B$ is optionally improved on a subset of metrics by slightly increasing this latent variable. \Cref{fig:AS:ab_batch_distributions} visualizes the resulting per-session metric distributions for a representative batch.

The metrics are fed to the CS as follows: let $\mathbf{y}_t^{(A)}$ and $\mathbf{y}_t^{(B)}$ denote the metric vectors for variants $A$ and $B$ in batch $t$, respectively, and define $\boldsymbol{\Delta}_t := \mathbf{y}_t^{(B)}-\mathbf{y}_t^{(A)}$. A CS then gives a region $\mathcal{C}_t$ for $\mathbb{E}[\boldsymbol{\Delta}_t]$ at every time $t$. If the interval for a particular coordinate lies entirely above or below $0$, we can conclude that the variants differ significantly on that metric. On the other hand, if the region becomes sufficiently small without excluding $0$ (measured through volume), we may conclude that there is no practically meaningful difference and stop collecting data. See \Cref{apdx:AS:extra_experimetns} for details.

\Cref{tab:AS:ab_results} shows that our methods achieve \textbf{much earlier stopping times} than existing tractable CSs when an effect is present, and in the no-effect setting, the $\ell_p$-ellipsoid-based variants stop the fastest.

\begin{wraptable}{r}{0.43\textwidth}
		\vspace{-1.2\baselineskip}
	\caption{\textbf{A/B stopping outcomes.}
		We stop for \emph{significance} once a coordinate interval of $\boldsymbol{\Delta}_t$ excludes $0$. Otherwise, we stop and report \emph{unsubstantial} once volume has shrunken to a preset threshold. If at the end of the $10{,}000$ batches no decision was made, we report \emph{no decision}.}
	\label{tab:AS:ab_results}
	\scriptsize
	\setlength{\tabcolsep}{2pt}
	\renewcommand{\arraystretch}{0.95}
	\begin{tabularx}{\linewidth}{@{}>{\raggedright\arraybackslash}p{0.22\linewidth} c >{\raggedright\arraybackslash}X c >{\centering\arraybackslash}p{0.24\linewidth}@{}}
		\toprule
		& \multicolumn{2}{c}{With effect} & \multicolumn{2}{c}{No effect} \\
		\cmidrule(lr){2-3}\cmidrule(lr){4-5}
		Method & Stop$_t$& Decision& Stop$_t$ & Decision \\
		\midrule
		ConfSphere        & 10,000& No decision    & 10,000& No decision \\
		BanachSphere      & 5,018& Significant    & 8,052& Unsubstantial \\
		Sub-$\psi$& 5,198& Significant    & 6,650& Unsubstantial \\
		$\Bbonf$                   & 178& Significant    & 399& Unsubstantial \\
		$\Bbbox$                   & 178& Significant    & 398& Unsubstantial \\
		$\BellipS$& 178& Significant    & 356& Unsubstantial \\
		$\BbboxellipS$& \textbf{178}& Significant    & \textbf{356}& Unsubstantial \\
		\bottomrule
	\end{tabularx}
	\vspace{-2\baselineskip}
\end{wraptable}

\paragraph{Multi-metric Model Comparison Experiment.} 
To illustrate the value on a real-world dataset, we compare a challenger and baseline classifier on the Adult~\citep{adult_dataset} dataset across six metrics: overall accuracy, Brier score, and four subgroup accuracies. The goal is to certify an improvement in overall accuracy while simultaneously monitoring subgroup degradations with statistical guarantees.

We let model $A$ denote the baseline and model $B$ the challenger. Both classifiers use features such as gender, education, and age to predict whether a person's income exceeds \$50{,}000. The setup is similar to the A/B experiment: each $\boldsymbol{\Delta}_t$ is defined as the difference between two metric vectors, with $\mathbf{y}_t^{(A)}$ representing the metrics of model $A$ (fully defined in \Cref{AS:eq:model_comparision_defin_y_A}) and $\mathbf{y}_t^{(B)}$ those of model $B$. Within these vectors, overall accuracy is represented by the indicator $\mathbf{1}\{f_A(\mathbf{x}_t)=f_t\}$ where $f_t$ is the true label, while subgroup performance is represented through joint indicators, e.g., $\mathbf{1}\{f_A(\mathbf{x}_t)=f_t \text{ and } \mathbf{x}_t \text{ is woman}\}$. The main technical challenge is handling conditional accuracies, such as $\Pr(f_B(\mathbf{x}_t)=\text{correct} \mid \mathbf{x}_t \text{ is woman})$, while preserving anytime validity. We achieve this by expressing each conditional accuracy as a ratio of two means: the corresponding joint accuracy and an additional coordinate representing the subgroup proportion, e.g., $\Pr(\mathbf{x}_t \text{ is woman})$. Technical details are provided in \Cref{apdx:AS:model_comparison}.

Our CSs detect an improvement in overall accuracy after about 356 batches, whereas existing tractable multivariate CS baselines remain inconclusive even after observing the entire test dataset of 509 batches. 
The intervals in \Cref{fig:AS:model_comparison_intervals} also flag possible subgroup degradation.
While the new model shows a statistically significant improvement in overall accuracy, its accuracy on women shows a plausible performance decrease of up to 4\%. This suggests a concrete next step, such as collecting more representative data or improving performance on that subgroup before deployment.

\begin{figure}
	\centering
	\includegraphics[width=\linewidth]{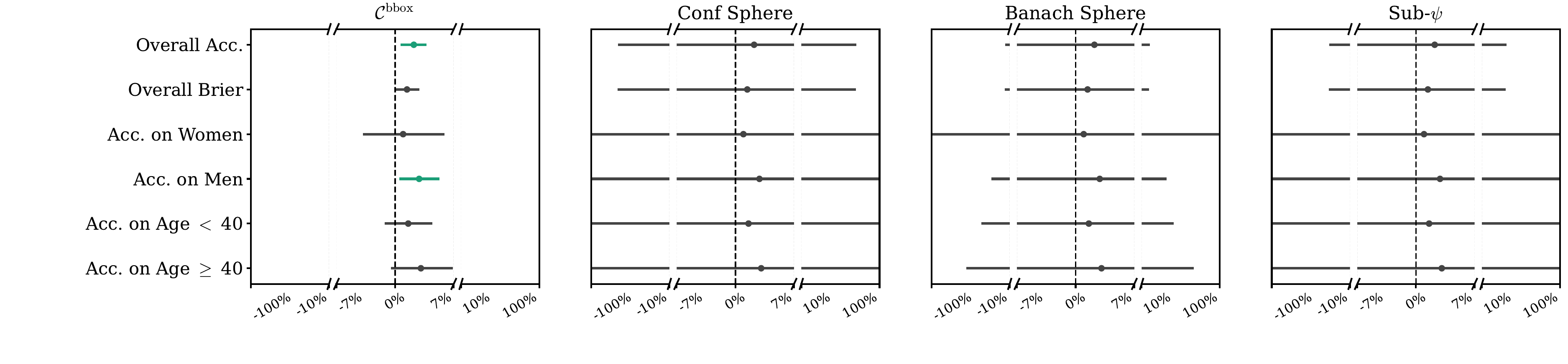}
	\caption{
    \textbf{Model-gain intervals on six Adult metrics} after 509 batches. Only $\mathcal{C}^{\mathrm{bbox}}$ certifies overall-accuracy improvement (0.6\%-3.7\%), reaching significance at 356 batches; baselines remain inconclusive. The other hedged-capital variants behave similarly to $\mathcal{C}^{\mathrm{bbox}}$ (\Cref{fig:AS:model_comparison_intervals_all}). 
	}
	\label{fig:AS:model_comparison_intervals}
\end{figure}

\section{Conclusion} \label{sec:AS:Conclusion}
We study three approaches for lifting one-dimensional confidence sequences (CSs), defined as sublevel sets of a wealth process, to higher-dimensional means. These include a weighted Bonferroni construction $\Bbonf$, an equivalent maximum-wealth formulation $\Bmax$, and a portfolio-based construction $\Bavg$, which leads to the tightest CS but is computationally intractable. For $\Bavg$, we propose three \emph{outer} approximations: 
a bounding box $\Bbbox$, an $\ell_p$-ellipsoid $\Bellip$, and their intersection.

We theoretically prove set relations among all constructions and obtain several interesting insights: we show that $\Bmax$ is equivalent to $\Bbonf$, and that the region of $\Bbbox$ is always contained in the corresponding Bonferroni region. At exactly the same computational cost, one can thus replace $\Bbonf$ by $\Bbbox$ and obtain an at least as tight valid CS. We also find that $\Bellip$ tracks $\Bavg$ closely in practice while allowing for a closed-form expression for its volume. Importantly, our proposed approximations produce regions that maintain the statistical validity of the CS, even after implementation.

Overall, the proposed CSs and their outer approximations lead to substantially smaller regions than existing tractable alternatives while remaining practical to compute in higher dimensions. The practical value is further highlighted in both the A/B testing and model-comparison case studies, which lead to detecting significance after substantially fewer observations, allowing practitioners to make reliable decisions earlier and reduce data-collection costs.

\paragraph{Limitations and Future Work}
While our outer approximations define valid CSs, the $\ell_p$-ellipsoid relies on per-dimension discretization, which can be costly for dense grids. Also, the bound used for its safe construction is very general, requiring only that the underlying function is quasiconvex, continuous, and globally lower bounded by a scalar $\phi$. However, this generality can make the bound conservative, requiring a dense grid to compensate. Moreover, our smallest outer approximation, given by the intersection $\Bbbox \cap \Bellip$, no longer has a closed-form volume expression and must therefore be estimated using Monte Carlo sampling. This estimate becomes less accurate as the dimension increases, requiring more samples to maintain the same level of accuracy. Several directions for future work remain. First, one could study whether other wealth processes (e.g.,~\citet{ryu_gamingbasedconfseq}) can be integrated into the proposed framework. Second, the computational efficiency of our procedures could be further improved through more advanced root-finding or adaptive discretization strategies. Third, one could investigate whether dependencies between coordinates can be exploited to construct tighter CSs. Finally, since CSs remain valid under intersections over time, $\mathcal{C}_t := \cap_{i\le t} \,\mathcal{C}_i$, it would be interesting to incorporate this structure into the approximations as well.

\FloatBarrier

\section*{Acknowledgments and Disclosure of Funding}
We thank Dario Coscia and Rajeev Verma for the discussions that led to this work. We also thank José Oramas, Roel Hulsman, Alexander Timans, and Mátyás
Schubert for their valuable input and discussions during the early stages of the project. 
This project was supported by the University of Antwerp, the University of Amsterdam, Eindhoven University of Technology, the Research Foundation - Flanders (FWO) under grant agreement V429625N, and the MOSAIC project under Grant Agreement No. 101194414. 

Funded by the European Union. Views and opinions expressed are however those of the author(s) only and do not necessarily reflect those of the European Union or the European Commission. Neither the European Union nor the European Commission can be held responsible for them.

\bibliographystyle{unsrtnat}
\bibliography{_biblio}


\newpage
\appendix
\section*{Appendix}
This appendix is divided into three parts. Part~I presents the core validity and set relation proofs as well as several auxiliary theoretical results. Part~II focuses on the proposed outer approximations: it first derives the bounding-box construction, and then develops the $\ell_p$-ellipsoid construction, together with its certified between-grid control and safe variant. Part~III provides experimental setup details, together with ablations and additional case studies.

\makeatletter
\let\origaddcontentsline\addcontentsline
\def\addcontentsline#1#2#3{%
  \origaddcontentsline{#1}{#2}{#3}%
  \def\@tempa{#1}%
  \def\@tempb{toc}%
  \ifx\@tempa\@tempb
    \addtocontents{atoc}{\protect\contentsline{#2}{#3}{\thepage}{}\protected@file@percent}%
  \fi
}
\makeatother
{\setcounter{tocdepth}{2}
\renewcommand{\contentsname}{Contents}
\makeatletter
\renewcommand{\l@section}{\@dottedtocline{1}{1.5em}{2.3em}}
\@starttoc{atoc}
\makeatother}

\section*{Part I: Theory and Proofs}
\addcontentsline{toc}{part}{Part I: Theory and Proofs}

\section{Proof of \Cref{prop:AS:kmax_is_cs}}
\label{apdx:AS:kmax_is_cs}

\begin{proposition}\label{apdx:prop:AS:kmax_is_cs}
The set $\Bmax_t$ is a valid $(1-\alpha)$-CS.
\end{proposition}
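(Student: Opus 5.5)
The plan is to dominate $\Kmax_t(\boldsymbol{\mu})$ by a single nonnegative test martingale and then apply Ville's inequality, exactly as in the one-dimensional construction of \Cref{eq:AS:confseq_Mt}.

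First, recall from the Remark in \Cref{sec:AS:one_dim_conf_seq} that for each coordinate $d$ the convex-combination process $\dot k_t^d(m)=\theta k_t^{d,+}(m)+(1-\theta)k_t^{d,-}(m)$ is a test martingale with respect to $\mathcal{F}_{t}=\sigma(\mathbf{y}_{1:t})$ when $m=\mu_d$. Moreover, $k_t^d(m)\le \dot k_t^d(m)$ holds pointwise. One point needs checking: the filtration is now generated by the full vector observations rather than by coordinate $d$ alone. The stakes $\lambda_i^{\pm}$ are predictable with respect to this larger filtration, and $\mathbb{E}[y_{t,d}\mid \mathbf{y}_{1:t-1}]=\mu_d$ by assumption. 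So each factor $1\pm\lambda_t^{\pm}(y_{t,d}-\mu_d)$ has conditional mean one and is nonnegative by the stake ranges. This makes each $\dot k_t^d(\mu_d)$ a test martingale in the joint filtration. This is the only step requiring care; the rest is bookkeeping.

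Next, define $M_t:=\sum_{d=1}^D w_d\,\dot k_t^d(\mu_d)$. It is a convex combination of test martingales on a common filtration with deterministic weights $\mathbf w\in\Delta^D$. By linearity of conditional expectation it is a nonnegative martingale, and $M_0=\sum_d w_d=1$, so it is a test martingale. The process $M_t$ is also $\mathcal{F}_t$-measurable.

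Finally, the max over a set of nonnegative terms is at most their sum. Hence
\[
\Kmax_t(\boldsymbol{\mu})=\max_d w_d k_t^d(\mu_d)\le \sum_d w_d k_t^d(\mu_d)\le M_t .
\]
Ville's inequality then gives
\[
\mathbb{P}\bigl(\exists t:\Kmax_t(\boldsymbol\mu)>1/\alpha\bigr)\le \mathbb{P}(\exists t: M_t\ge 1/\alpha)\le\alpha .
\]
Since $\boldsymbol\mu\in\Bmax_t$ exactly when $\Kmax_t(\boldsymbol\mu)\le 1/\alpha$, we get $\mathbb{P}(\forall t:\boldsymbol\mu\in\Bmax_t)\ge 1-\alpha$. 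As a byproduct, the same chain shows the validity of $\Bavg_t$, since $\Kavg_t(\boldsymbol\mu)$ is the middle term.
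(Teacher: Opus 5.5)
Your proof is correct and matches the paper's argument: the chain $\Kmax_t(\boldsymbol\mu)\le\sum_d w_d k_t^d(\mu_d)\le\sum_d w_d\dot k_t^d(\mu_d)$, the fact that a weighted sum of test martingales with weights in $\Delta^D$ is again a test martingale, and then Ville's inequality. You also check explicitly that each $\dot k_t^d(\mu_d)$ is a martingale with respect to the joint filtration $\sigma(\mathbf{y}_{1:t})$ and that $M_0=1$; the paper uses both facts without stating them.
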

\begin{proof}
First, note that for any $\vect{m} \in [0,1]^D$,
\begin{align}
    \Kmax_t(\mathbf m)
& = \max_d \, w_d k_t^d(m_d) \;\le\; \sum_{d=1}^D w_d k_t^d(m_d) \nonumber \\
&\;\le\; \sum_{d=1}^D w_d \dot k_t^d(m_d), \label{eq:AS:test_martingale_to_show}
\end{align}
where the inequalities use the non-negativity of $w_d$, $k_t^d(\cdot)$, and $\dot k_t^d(\cdot)$ with $\dot k(\cdot)$ defined in Eq.~\eqref{eq:AS:k_dot_convex_comb}. 

Since each $\dot k_t^d(m_d)$ is a test martingale (for $m_d = \mu_d$), their weighted sum is also a test martingale:
\begin{align*}
\mathbb{E} \! \left[  \sum_{d=1}^D w_d \dot k_t^d(m_d) \, \middle|\, \mathcal F_{t-1} \right]
&= \sum_{d=1}^D w_d \mathbb{E}\!\left[ \dot k_t^d(m_d) \,\middle|\, \mathcal F_{t-1} \right] \\
&= \sum_{d=1}^D w_d \dot k_{t-1}^d(m_d).
\end{align*}
By Ville’s inequality~\eqref{eq:AS:confseq_Mt}, the sublevel set
\[
\Bigl\{ \mathbf m : \sum_{d=1}^D w_d \dot k_t^d(m_d) \le \tfrac1\alpha \Bigr\}
\]
is therefore a valid $(1-\alpha)$-CS. Finally, since $\Kmax_t(\mathbf m)$ is upper bounded by a test martingale, its sublevel set
\[
\Bigl\{ \mathbf m : \Kmax_t(\mathbf m) \le \tfrac1\alpha \Bigr\}
\]
is a conservative outer CS and hence also valid.

\end{proof}

\section{Set Relations} \label{apdx:AS:set_relations}

\begin{theorem}
\label{apdx:AS:thm_set_relations}
Assume $\alpha \in (0, 1], D \in \mathbb{N}_{> 0}$, and $\vect{w} \in \Delta^D$ fixed with $w_d > 0$ for all $d$, then the following relations hold: 
\begin{align*}
    \Bavg \subseteq  \Bbboxellip &\subseteq \Bbbox \subseteq
        \Bmax = \Bbonf, \\
        \Bbboxellip &\subseteq \Bellip.
\end{align*}
\end{theorem}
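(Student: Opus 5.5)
I will establish the relations one by one, treating each as an elementwise inclusion for fixed $t$. The inclusions $\Bbboxellip \subseteq \Bbbox$ and $\Bbboxellip\subseteq\Bellip$ are immediate from the definition $\Bbboxellip=\Bbbox\cap\Bellip$. Given these, the chain reduces to four facts: (a) $\Bmax=\Bbonf$, (b) $\Bbbox\subseteq\Bmax$, (c) $\Bavg\subseteq\Bbbox$, and (d) $\Bavg\subseteq\Bellip$. Together, (c) and (d) give $\Bavg\subseteq\Bbboxellip$.

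For (a), I will use that $w_d>0$. Then
\[
\max_d w_d k^d(m_d)\le 1/\alpha \iff w_dk^d(m_d)\le 1/\alpha \text{ for all } d \iff k^d(m_d)\le \tfrac{1}{\alpha w_d} \text{ for all } d,
\]
which is exactly membership in $\Bbonf$.

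For (c), take $\vect m\in\Bavg$ and fix $d$. Since $k^j(m_j)\ge k^j_\star$ for every $j$, we get
\[
w_dk^d(m_d)\le \tfrac1\alpha-\sum_{j\ne d}w_jk^j(m_j)\le \tfrac1\alpha-\sum_{j\ne d}w_jk^j_\star .
\]
Hence $m_d\in[l_d,u_d]$ for every $d$, so $\vect m\in\Bbbox$. This requires that the global minima $k^j_\star$ exist. I will argue this from continuity of $k^j$ on the compact interval $[0,1]$, or cite the existence result of \Cref{apdx:AS:global_minimum_bound_k}. It also requires that $[l_d,u_d]$ is precisely the sublevel set in question, which holds by definition.

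For (b), take $\vect m\in\Bbbox$. Each $k^j_\star\ge0$ because the wealth processes are nonnegative. Therefore
\[
w_dk^d(m_d)\le \tfrac1\alpha-\sum_{j\ne d}w_jk^j_\star\le\tfrac1\alpha \quad\text{for all } d,
\]
so $\Kmax(\vect m)\le1/\alpha$. Nonnegativity holds because each factor $1\pm\lambda_i(Y_i-m)$ is nonnegative under the stake constraints $\lambda_i^+\in[0,1/m)$ and $\lambda_i^-\in[0,1/(1-m))$.

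Step (d) is the main obstacle, because the condition $\tilde k^d\le k^d$ is only imposed on $[\xstart^d,\xend^d]$ and not on all of $[0,1]$. My plan has two parts. First, show that every $\vect m\in\Bavg$ has $m_d\in[\xstart^d,\xend^d]$ for each $d$. This follows because
\[
w_dk^d(m_d)\le\sum_j w_jk^j(m_j)\le1/\alpha,
\]
using nonnegativity of the other terms. So $k^d(m_d)\le 1/(\alpha w_d)$, and quasiconvexity makes this sublevel set the interval $[\xstart^d,\xend^d]$. Second, on that interval apply $\tilde k^d(m_d)\le k^d(m_d)$ termwise:
\[
\sum_d w_d\tilde k^d(m_d)\le \sum_dw_dk^d(m_d)\le 1/\alpha,
\]
so $\vect m\in\Bellip$. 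I will make explicit that this step assumes $\tilde k^d$ has been fitted to satisfy the domination condition, since the theorem concerns the idealized sets rather than their grid implementation.
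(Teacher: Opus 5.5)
Your proposal is correct and matches the paper's proof essentially step for step: the same decomposition into $\Bmax=\Bbonf$, $\Bbbox\subseteq\Bmax$, $\Bavg\subseteq\Bbbox$ and $\Bavg\subseteq\Bellip$, plus the definitional intersection facts. The ellipsoid step uses the same observation that $w_d k^d(m_d)\le 1/\alpha$ forces $m_d\in[\xstart^d,\xend^d]$. Your added justifications fill in points the paper leaves implicit: nonnegativity of the $k^j_\star$ from the stake constraints, and existence of the minima from continuity on $[0,1]$.
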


\begin{proof}
We show the relations:

\begin{enumerate}

\item         $\Bmax = \Bbonf$.

Denote 
$$\Bmax = \left\{ \vect{x} \,\middle|\, \max_{d\in[D]} \, w_d\,k^d(x_d) \le \frac{1}{\alpha} \right\} 
\quad \text{ and } 
\quad \Bbonf = \left\{ \vect{x} \,\middle|\, \forall d \in [D]: k^d(x_d) \le \frac{1}{\alpha w_d} \right\}.
$$

We have, for any $\mathbf{x}\in[0,1]^D$,
\begin{align}
    \mathbf{x}\in\Bmax
& \;\iff\;
\max_{d\in[D]} w_d k^d(x_d) \le \tfrac{1}{\alpha}
\;\iff\;
\forall d\in[D]:\; w_d k^d(x_d) \le \tfrac{1}{\alpha} \\
& \;\iff\;
\forall d\in[D]:\; k^d(x_d) \le \tfrac{1}{\alpha w_d}
\;\iff\;
\mathbf{x}\in\Bbonf,
\end{align}
which shows that $\Bmax = \Bbonf$.

        \item  $\Bbbox \subseteq\Bmax$.

From the definition in \Cref{AS:eq:bbox},
\begin{align*}
    [l_d,u_d]
    =
    \left\{x_d\in[0,1]\mid
    w_d k^d(x_d)
    \le
    \frac{1}{\alpha}-\sum_{j\neq d} w_j k_\star^j
    \right\}.
\end{align*}
Hence
\begin{align*}
    \Bbbox
    =
    \Bigl\{ \vect{x}\in[0,1]^D \,\Big|\, \forall d:\;
    w_d k^d(x_d)\le \frac{1}{\alpha}-\sum_{j\neq d} w_j k_\star^j \Bigr\}.
\end{align*}

        Let $\vect{x} \in \Bbbox$. Then for each $d$,
        \[
            w_d k^d(x_d) \le \tfrac{1}{\alpha} - \sum_{j\neq d} w_j k_\star^j \le \tfrac{1}{\alpha}.
        \]
        Since this holds for every $d$,
        \[
            \max_d \bigl(w_d k^d(x_d)\bigr) \le \tfrac{1}{\alpha}.
        \]
        Hence $\vect{x} \in \Bmax$.

Note that the reverse direction is not necessarily true, as shown by the example in \Cref{fig:AS:apdx_counter_example_bbox}. For an equality between the two sets to hold, $k^d( \cdot )$ should have a minimum = 0 ($k_\star^d = 0$) for all $d$:

    \begin{figure}[h]
\centering

\begin{tikzpicture}
\begin{axis}[
    axis lines=middle,
    xmin=-1.2, xmax=1.2,
    ymin=-1.2, ymax=1.2,
    axis equal,
    xlabel={$x$},
    ylabel={$y$},
    grid=none,
    axis on top,
    legend pos=outer north east,
]

\pgfmathsetmacro{\xr}{sqrt(0.5)}          
\pgfmathsetmacro{\yr}{pow(0.5,0.25)}      

\addplot[domain=-\xr:\xr, samples=600, thick, purple]
{(0.5 - x^2)^(1/4)};
\addlegendentry{$x^2+0.5+y^4=1$}

\addplot[domain=-\xr:\xr, samples=600, thick, purple, forget plot]
{-(0.5 - x^2)^(1/4)};

\addplot[dashed, thick, black!60!black]
coordinates {(-\xr,-\yr) (\xr,-\yr) (\xr,\yr) (-\xr,\yr)}
\closedcycle;
\addlegendentry{bbox of $x^2+0.5+y^4=1$}

\addplot[dashed, thick, green!60!black]
coordinates {(-\xr,-1) (\xr,-1) (\xr,1) (-\xr,1)}
\closedcycle;
\addlegendentry{$\max(f(x),g(y)) = 1$}

\addplot[domain=-1:1, samples=400, thick, black!70!black]
{x^2 + 0.5};

\addplot[domain=-1:1, samples=400, thick, black!70!black]
({x^4}, x);

\end{axis}
\end{tikzpicture}

\caption{Base functions: $f(x) = x^2 + 0.5$, $g(y) = y^4$. The figure depicts the set $f(x) + g(y) = 1$, its bounding-box and $\max \left( f(x), \, g(y) \right) = 1$ superset. We see that the bounding box and max are not necessarily equivalent.}
\label{fig:AS:apdx_counter_example_bbox}
\end{figure}

\item  $\Bavg \subseteq \Bbbox$.

    Let $\vect x \in \Bavg$ so that
    \begin{align*}
        &\sum_{d=1}^D w_d k^d(x_d) \le \frac{1}{\alpha} \iff \\  
        &\sum_{d\neq j}^D w_d k^d(x_d) + w_j k^j(x_j)  \le \frac{1}{\alpha} \implies\\   
   & w_j k^j(x_j) \le \frac{1} {\alpha} - \sum_{d \neq j} w_d k^d(x_d) \le \tfrac{1}{\alpha} - \sum_{d\neq j} w_d k_\star^d.
    \end{align*}
Hence, we have obtained that $ \forall j: w_j k^j(x_j) \le \tfrac{1}{\alpha} - \sum_{d\neq j} w_d k_\star^d$, i.e., $x_j \in [l_j,u_j]$ by \Cref{AS:eq:bbox}, such that $\vect x \in \Bbbox$.

\item $\Bavg \subseteq \Bellip$.

Define $\Bellip = \left\{ \vect{x} | \sum_d^D w_d \tilde k^d(x_d) \le \frac1\alpha \right\}$, with $\tilde k^d(x) \le k^d(x)$ for all $x \in [0, 1]$.

Let $\vect{x} \in \Bavg$ such that
\begin{align*}
    \sum_d^D w_d k^d(x_d) \le \frac1\alpha \Rightarrow \sum_d^D w_d \tilde k^d(x_d) \le \frac1\alpha,
\end{align*}
since $\tilde k^d(x) \le k^d(x)$, so $\vect{x} \in \Bellip$.

Note that when we only have that $\tilde k^d(x) \le k^d(x)$ for all $x \in [\xstart^d, \xend^d]$, with $(\xstart^d, \xend^d)$ defined as the end-point solutions of the interval: $\{ x_d | k^d(x_d) \le \frac{1}{\alpha w_d } \}$, the relation still holds. Indeed, since for $\vect{x} \in \Bavg$ we have that 
\begin{align*}
    \sum_d w_d k^d(x_d) \le 1/\alpha \Rightarrow w_d k^d(x_d) \le 1/\alpha \Rightarrow  k^d(x_d) \le \frac{1}{\alpha w_d}.
\end{align*}
I.e., for $\vect{x} \in \Bavg$, all of its coordinates must satisfy $k^d(x_d) \le \frac{1}{\alpha w_d}$ (and hence we have that $x_d \in [\xstart^d, \xend^d]$).

\item $\Bbboxellip \subseteq \Bbbox$ and $ \Bbboxellip \subseteq \Bellip$.

This is evident from its definition:
\begin{align*}
    \Bbboxellip = \Bbbox \cap \Bellip.
\end{align*}

\item $\Bavg \subseteq \Bbboxellip$.

Since $\Bavg \subseteq \Bbbox$ and $\Bavg \subseteq \Bellip$, it follows that
\begin{align*}
    \Bavg \subseteq \Bbbox \cap \Bellip = \Bbboxellip.
\end{align*}

    \end{enumerate}
\end{proof}

\section{Useful Lemmas}
\subsection{Quasiconvexity of $\tilde{k}$}
\begin{lemma}\label{lem:AS:qconvex_tilde}
$\tilde{k}(x)=\frac{|x-a|^p}{b^p}-c$ is quasiconvex on $\mathbb{R}$ for any $p \ge 1, \, b > 0$.
\end{lemma}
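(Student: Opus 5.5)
We argue directly from the definition of quasiconvexity: a function $f$ is quasiconvex on $\mathbb{R}$ iff every sublevel set $\{x \mid f(x)\le \beta\}$, $\beta\in\mathbb{R}$, is convex. Equivalently, $f(\lambda x+(1-\lambda)y)\le \max\{f(x),f(y)\}$ for all $x,y$ and $\lambda\in[0,1]$. We use the sublevel-set characterization because it is the most transparent here.

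Fix $\beta\in\mathbb{R}$ and set $S_\beta=\{x\in\mathbb{R}\mid |x-a|^p/b^p - c\le\beta\}$. Since $b>0$, the condition is equivalent to $|x-a|^p\le b^p(\beta+c)$. We split into two cases.

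\emph{Case $\beta+c<0$.} No real $x$ satisfies the inequality, because $|x-a|^p\ge 0$. Hence $S_\beta=\emptyset$, which is convex.

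\emph{Case $\beta+c\ge 0$.} The map $s\mapsto s^{1/p}$ is strictly increasing on $[0,\infty)$ for $p\ge1$, so we can take $p$-th roots without changing the inequality. This gives $|x-a|\le b(\beta+c)^{1/p}=:r\ge0$. Therefore $S_\beta=[a-r,\,a+r]$, a closed interval and hence convex.

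Since every sublevel set is convex, $\tilde k$ is quasiconvex on $\mathbb{R}$.

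As an alternative route, we could note that $\tilde k = g\circ h$, where $h(x)=|x-a|$ is convex and $g(s)=s^p/b^p-c$ is nondecreasing on $[0,\infty)$. Then $\tilde k$ is convex for $p\ge1$, and convexity implies quasiconvexity. We keep the sublevel-set argument as the main proof, however, because it uses $p\ge1$ only through the monotonicity of the $p$-th root, which holds for any $p>0$.

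There is no real obstacle here. The only points needing care are the sign case $\beta+c<0$, which yields the empty set, and using $b>0$ so that multiplying by $b^p$ preserves the direction of the inequality.
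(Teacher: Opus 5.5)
Your proof is correct and follows the paper's argument: both show that the sublevel sets are empty when $\beta+c<0$ and equal the closed interval $[a-r,a+r]$ otherwise. The paper treats the degenerate case $\beta+c=0$ (giving $\{a\}$) separately, which your $r=0$ already covers. In your side remark, the composition rule also needs $g$ to be convex, not only nondecreasing; this holds because $s\mapsto s^p$ is convex on $[0,\infty)$ for $p\ge 1$.
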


\begin{proof}
For $\gamma\in\mathbb{R}$ the sublevel set is
\[
\{x:\tilde{k}(x)\le \gamma\}=\{x:|x-a|^p\le |b|^p(\gamma+c)\}.
\]
If $\gamma+c<0$ it is empty; if $\gamma+c=0$ it is $\{a\}$; if $\gamma+c>0$ it is the interval
\[
[a-|b|(\gamma+c)^{1/p},\ a+|b|(\gamma+c)^{1/p}].
\]
Hence, every sublevel set is convex and $\tilde{k}$ is quasiconvex
\end{proof}

\subsection{Continuity of $k(x)$}
\begin{lemma} \label{lem:AS:continuity_k}
Assume the constants $\theta \in [0, 1]$, $c \in \mathbb{R}_{>0}$ and the series of fixed constants $\tilde\lambda_i\in(0,\infty)$ and $x_i \in [0, 1]$ for $i\le t$. Define, for $m\in[0,1]$,
\[
\lambda^{+}_i(m):=\min\Big(\tilde\lambda_i\,;\frac{c}{m}\Big),\qquad
\lambda^{-}_i (m):=\min\Big(\tilde\lambda_i\,;\frac{c}{1-m}\Big),
\]
with the boundary conventions $\lambda^{+}_i(0):=\tilde\lambda_i$ and $\lambda^{-}_i(1):=\tilde\lambda_i$. Then the following three functions are continuous on $[0, 1]$:
\[
k_t^{+}(m):=\prod_{i=1}^t\bigl(1+\lambda^{+}_i(m)(x_i-m)\bigr),\qquad
k_t^{-}(m):=\prod_{i=1}^t\bigl(1-\lambda^{-}_i(m)(x_i-m)\bigr),
\]
and
\[
k_t(m):=\max\big( \theta k_t^{+}(m),\, (1-\theta) k_t^{-}(m)\big).
\]
\end{lemma}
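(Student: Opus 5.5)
The plan is to build continuity up from the stakes. Everything follows from the standard closure facts: sums, products, and maxima of finitely many continuous functions are continuous. The only genuine work is showing that each stake $\lambda_i^{\pm}$ is continuous on all of $[0,1]$, including the boundary points where the paper imposes conventions.

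\textbf{Step 1: continuity of $\lambda_i^{+}$.} On $(0,1]$, the map $m\mapsto c/m$ is continuous and real-valued, and $\min(\tilde\lambda_i, \cdot)$ is continuous as the minimum of two continuous functions. So $\lambda_i^+$ is continuous on $(0,1]$.

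At $m=0$, observe that $c/m\to+\infty$ as $m\downarrow 0$. Since $c>0$, there is a $\delta=c/\tilde\lambda_i>0$ such that $c/m\ge\tilde\lambda_i$ for all $m\in(0,\delta]$. Hence $\lambda_i^+(m)=\tilde\lambda_i$ on $(0,\delta]$. This agrees with the convention $\lambda_i^+(0)=\tilde\lambda_i$, so $\lambda_i^+$ is in fact constant on $[0,\min(\delta,1)]$, and therefore continuous at $0$.

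The symmetric argument, with $1-m$ in place of $m$ and using the convention $\lambda_i^-(1)=\tilde\lambda_i$, gives continuity of $\lambda_i^-$ on $[0,1]$.

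\textbf{Step 2: continuity of $k_t^{\pm}$.} Each factor $1+\lambda_i^+(m)(x_i-m)$ is a sum and product of continuous functions of $m$: the stake from Step 1, the constant $x_i$, and the identity. So each factor is continuous, and $k_t^+$, being a finite product of such factors, is continuous on $[0,1]$. The same reasoning applies to $k_t^-$.

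\textbf{Step 3: continuity of $k_t$.} The functions $\theta k_t^+$ and $(1-\theta)k_t^-$ are continuous. The maximum of two continuous functions is continuous, since $\max(f,g)=\tfrac12\bigl(f+g+|f-g|\bigr)$. Hence $k_t$ is continuous on $[0,1]$.

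The only point needing care is the boundary behavior in Step 1. There the key input is $c>0$ together with the finiteness of $\tilde\lambda_i$: these guarantee that the truncation $c/m$ becomes inactive near $m=0$, which is exactly what makes the convention at the boundary consistent with the limit. Nonnegativity of the factors is not needed for continuity.
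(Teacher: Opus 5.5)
Your proposal is correct and follows essentially the same route as the paper: continuity of the stakes on $[0,1]$ (with the boundary conventions agreeing with the limits at $m=0$ and $m=1$), then of each factor, of the finite products $k_t^{\pm}$, and finally of the pointwise maximum $k_t$. Your observation that $\lambda_i^{+}$ is identically $\tilde\lambda_i$ on $(0, c/\tilde\lambda_i]$ is just a more explicit form of the paper's limit argument at the boundary.
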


\begin{proof}
\begin{enumerate}
    \item For each $i$, the function $m\mapsto c/m$ is continuous on $(0,1]$ and diverges to $+\infty$ as $m \to 0^+$. The function $\lambda^{+}_i(m) = \min(\tilde\lambda_i\,;c/m)$ is therefore continuous on $(0,1]$ and has the limit $\tilde\lambda_i$ as $m\to 0^+$. So by defining $\lambda^{+}_i(0):=\tilde\lambda_i$ we obtain a function continuous on the whole $[0,1]$. The same argument (with $c/{1-m}$ and the convention at $m=1$) shows $\lambda^{-}_i(\cdot)$ is continuous on $[0,1]$.

\item For each $i$ the function $m\mapsto x_i-m$ is continuous and $\lambda^{+}_i(m)$ is continuous, hence their product $\lambda^{+}_i(m)(x_i-m)$ is continuous. Adding the constant $1$ gives the continuous factor
  \[
   f_i^+(m):=1+\lambda^{+}_i(m)(x_i-m).
   \]
   Analogously, define $f_i^-(m):=1-\lambda^{-}_i(m)(x_i-m)$; it is continuous on $[0,1]$.

\item Each $k_t^{+}(m)$ is a finite product of continuous factors $f_i^+(m)$, therefore continuous on $[0,1]$. The same holds for $k_t^{-}(m)$.

\item Finally, the function $m\mapsto\max(\theta k_t^{+}(m), \, (1-\theta)k_t^{-}(m))$ is the pointwise maximum of two continuous functions and is therefore continuous. This proves the claim.
\end{enumerate}
\end{proof}

\section{Bounding the Global Minimum of $k(x)$}
\label{apdx:AS:global_minimum_bound_k}
The construction of $\Bbbox$ in \Cref{sec:AS:bounding_box} requires knowledge of the global minimum of each function $k(x) \;=\; \max \{\theta k^+(x), \, (1-\theta)k^-(x) \}$. In this section, we introduce an algorithm to approximate this minimum and provide explicit guarantees on its correctness. Specifically, given an estimate $\tilde{k}_\star$, the algorithm produces bounds $[\underline{k}^\star, \overline{k}_\star]$ that are guaranteed to contain the true minimum $k_\star$. Note that we assume continuity of $k^+$ and $k^-$ which follows from \Cref{lem:AS:continuity_k}.

This discussion is also used later to certify the discretization error in the $\ell_p$-ellipsoid construction.

\begin{lemma}
Let $[a,b]\subseteq[\xstart, \xend]$ with $a<b$, and define
\[
k(x)=\max\big(\theta k^+(x),\,(1-\theta)k^-(x)\big),
\]
where $k^+$ and $k^-$ are continuous on $[\xstart,\xend]$, with $k^+$ non-increasing and $k^-$ non-decreasing, and $\theta \in [0, 1]$. Let $x^\star$ be a global minimizer of $k$ on $[a,b]$. Then
\begin{itemize}
    \item if $\theta k^+$ and $(1-\theta)k^-$ meet inside $[a, b]$ (i.e., there exists $x \in [a, b]$ with $\theta k^+(x) = (1-\theta)k^-(x)$), then
    $$
    \underbrace{\max \left( \theta k^+(b) \, ,  (1-\theta)k^-(a) \right)}_{\text{lower bound } \underline{k_\star}} \le k(x^\star) \le \underbrace{\min \left( \theta k^+(a), (1-\theta)k^-(b) \right)}_{\text{upper bound }\overline{k_\star}},
    $$
    \item if $\theta k^+$ and $(1-\theta)k^-$ do not meet on $[a, b]$, then the global minimum occurs at an endpoint and
    $$
    k(x^\star) = \min \left(k(a), k(b) \right).
    $$
\end{itemize}
\end{lemma}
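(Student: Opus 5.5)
Write $f(x):=\theta k^+(x)$, which is non-increasing, and $g(x):=(1-\theta)k^-(x)$, which is non-decreasing, so that $k=\max(f,g)$. Both are continuous on $[a,b]$, so $k$ is continuous and attains its minimum there by Weierstrass. The plan is to handle the two bullets separately, using only monotonicity for the bounds and the intermediate value theorem for the case split.

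\emph{First bullet.} Suppose there is $x_0\in[a,b]$ with $f(x_0)=g(x_0)$.

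For the lower bound, take any $x\in[a,b]$. Monotonicity gives $f(x)\ge f(b)$ and $g(x)\ge g(a)$. Hence
\[
k(x)\ge \max\bigl(f(b),g(a)\bigr),
\]
and in particular this holds at $x^\star$. This step does not even need the meeting point.

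For the upper bound, evaluate $k$ at the meeting point:
\[
k(x^\star)\le k(x_0)=f(x_0)=g(x_0).
\]
Monotonicity gives $f(x_0)\le f(a)$ and $g(x_0)\le g(b)$. Therefore
\[
k(x^\star)\le \min\bigl(f(a),g(b)\bigr),
\]
which is the stated upper bound.

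\emph{Second bullet.} Suppose $f$ and $g$ never meet on $[a,b]$. Then $f-g$ is continuous and nonvanishing on $[a,b]$, so by the intermediate value theorem it has constant sign there.

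If $f>g$ throughout, then $k=f$ on $[a,b]$. Since $f$ is non-increasing, the minimum is $k(b)$, and $k(b)\le k(a)$. If $f<g$ throughout, then $k=g$ on $[a,b]$. Since $g$ is non-decreasing, the minimum is $k(a)\le k(b)$. In both cases
\[
k(x^\star)=\min\bigl(k(a),k(b)\bigr),
\]
and the minimum is attained at an endpoint.

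The only step that needs care is the sign-constancy argument in the second bullet, which is exactly where continuity from \Cref{lem:AS:continuity_k} enters. Everything else is elementary monotonicity.
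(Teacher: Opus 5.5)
Your proof is correct. It differs from the paper's mainly in the upper bound of the first bullet.

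For the second bullet you follow the paper's Case A. Without a crossing, continuity of $h:=(1-\theta)k^- - \theta k^+$ forces a constant sign, so $k$ coincides with a single monotone branch.

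For the upper bound, the paper first proves a structural fact. Since $h$ is non-decreasing, $k=\theta k^+$ to the left of any root $x_0$ of $h$ and $k=(1-\theta)k^-$ to its right, so every root of $h$ is a global minimizer. It then evaluates at that root using the equality $k(x^\star)=k(x_0)$. You skip this step and use only $k(x^\star)\le k(x_0)$, which follows from the minimality of $x^\star$, before applying the same monotonicity estimates. As a result you never need the monotonicity of $h$ at all. Your lower bound, valid for every $x\in[a,b]$ without the crossing hypothesis, is essentially the paper's argument, which also never uses the crossing point.

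Your route is shorter and is all the lemma as stated requires. The paper's detour buys extra information the algorithm relies on. Because a root of $h$ is a minimizer over the whole domain $[x_{\mathrm{start}},x_{\mathrm{end}}]$, the bounds computed on the shrinking bracket $[a,b]$ in \Cref{alg:AS:bound-global-min-explicit} bound the global $k_\star$, not just the minimum over the bracket. The same fact is what justifies taking the bisection point on $h$ as the minimizer estimate $\hat x$. That estimate is later used as a splitting point for the bounding-box and touch-point root finding.
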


\begin{proof}
Define $h(x):=(1-\theta)k^-(x)-\theta k^+(x)$. Because $k^-$ is non-decreasing and $k^+$ is non-increasing, the function $-k^+$ is non-decreasing, so $h$ is non-decreasing on $[a, b]$.
We distinguish three cases, based on the sign of $h(x)$ on $[a, b]$:
\begin{enumerate}
    \item $h(x)>0$ for all $x \in[a, b]$,
    \item $h(x)<0$ for all $x \in[a, b]$,
    \item $h(x)=0$ for some $x \in[a, b]$.
\end{enumerate}
Note that in case 3, because of the continuity of $h$ a root does indeed exist. Because of its monotonicity, this is either a single point or an interval.

We treat the cases.
\begin{enumerate}
    \item Case A - No intersection on $[a, b]$ (cases 1 and 2). \\
If $h>0$ on $[a, b]$ then $(1-\theta)k^-(x)>\theta k^+(x)$ for every $x \in[a, b]$, so $k(x)=(1-\theta)k^-(x)$ on $[a, b]$. Since $k^-$ is non-decreasing, $k$ is non-decreasing and thus attains its minimum at $x=a$. Hence $k\left(x^{\star}\right)=k(a)= \min (k(a), k(b))$. Analogously, if $h<0$ on $[a, b]$ then $k(x)=\theta k^+(x)$ on $[a, b]$. Since $\theta k^+$ is non-increasing, $k$ is non-increasing and attains its minimum at $x=b$. Hence $k\left(x^{\star}\right)=k(b)=\min (k(a), k(b))$. This proves the endpoint statement.

\item Case B - Let $h(x) = 0$ have a solution (case 3). \\
We show the bound in two steps: 1) we show that the roots of $h(x) = (1-\theta)k^-(x) - \theta k^+(x)$ correspond to the global minima of $k(x)$.  2) From this intermediate result, we can derive a lower and upper bound.

Let $x_0 \in [a, b]$ s.t. $h(x_0) = 0$, we thus have that $(1-\theta)k^-(x_0) = \theta k^+(x_0) = k(x_0)$. 
But since $k(x) = \max \left(\theta k^+(x), (1-\theta)k^-(x) \right)$, we have:
\begin{itemize}
   	\item for $x \leq x_0: k(x)=\theta k^+(x)$ \quad (since for $x \le x_0, h(x) \le 0 \Rightarrow (1-\theta)k^-(x) \le \theta k^+(x).)$
   	\item for $x \geq x_0: k(x)=(1-\theta)k^-(x)$.
\end{itemize}
But if $k(x)$ is non-increasing to the left of $x_0$, and non-decreasing to the right of $x_0$, surely, $x_0$ must be a global minimizer of $k(x)$. 

Now the bounds:
\begin{itemize}
   	\item Upper bound. \\
   	Since we just derived that the roots of $h(x)$ are global minimizers for $k(x)$, pick an $x^\star$ from this zero set. It then follows that $k(x^\star) = (1-\theta)k^-(x^{\star}) = \theta k^+(x^\star) $. Now, from the non-decreasing property of $(1-\theta)k^-$, we get $(1-\theta)k^-\left(x^{\star}\right) \leq (1-\theta)k^-(b)$, and analogously for $k$, we get $\theta k^+\left(x^{\star}\right) \leq \theta k^+(a)$. Combining these two inequalities, together with the equality at $k(x^\star)$, gives us the upper bound:
    $$
    k\left(x^{\star}\right) \leq \min \left((1-\theta)k^-(b), \theta k^+(a)\right).
    $$
Note that for any global optimizer $x^{\star}_2 \in[a, b]$ lying outside the zero set of $h(x)$, the bound still applies, since by definition of a global optimizer we have $k\left(x^{\star}\right)=k\left(x^{\star}_2\right)$.

    \item Lower bound. \\
    Because $k^-$ is non-decreasing and $a \leq x^{\star}$ , we have $(1-\theta)k^-(a) \leq (1-\theta)k^-\left(x^{\star}\right)$. Because $k^+$ is non-increasing and $x^{\star} \leq b$ we have $\theta k^+(b) \leq k\left(x^{\star}\right)$. Finally, taking the maximum gives us our bound: 
    
   $$
    \max \left((1-\theta)k^-(a), \theta k^+(b)\right) \leq \max\left((1-\theta)k^-(x^\star), \theta k^+(x^\star)\right) = k\left(x^{\star}\right).
    $$
\end{itemize}
\end{enumerate}
\end{proof}

The theorem implies two practical consequences. First, a conservative estimate $\underline{k}^\star$ of $k(x^\star)$ can be computed via a root-finding procedure applied to $h(x)$; since $h$ is non-decreasing, its roots form an interval. Second, the approximation error on this minimum can be explicitly controlled using the derived bounds. Algorithm \textsc{Estimate-$k_\star$} (\Cref{alg:AS:bound-global-min-explicit}) describes this procedure in detail.

\begin{algorithm}[tb]
  \caption{\textsc{Estimate-$k_\star$}}
  \label{alg:AS:bound-global-min-explicit}
  \begin{algorithmic}
    \STATE {\bfseries Input:} functions $k^+(\cdot), k^-(\cdot)$ on $[\xstart,\xend]$; $\theta \in[0, 1]$; tolerances $\epsilon_x,\epsilon_y$
    \STATE {\bfseries Output:} $(\underline{k_\star}, \overline{k_\star}, \hat x)$
    \STATE $a \gets \xstart,\quad b \gets \xend$
    \STATE $h(x) \gets (1-\theta) k^-(x)-\theta k^+(x)$
    \IF{$h(a)>0$ {\bfseries or} $h(b)<0$}
        \STATE $k(x) = \max \{ \theta k^+(x), (1-\theta)k^-(x) \}$
        \STATE $x_\star \gets \arg\min_{x \in \{a,b\}} k(x)$
        \STATE $k_\star \gets k(x_\star)$
      \STATE {\bfseries return} $(k_\star, k_\star, x_\star)$
    \ENDIF
    \STATE $\underline{k_\star} \gets -\infty, \quad \overline{k_\star} \gets \infty$
    \WHILE{$(b-a>\epsilon_x)$ {\bfseries or} $(\overline{k_\star}-\underline{k_\star}>\epsilon_y)$}
      \STATE $\underline{k_\star} \gets \max\!\big((1-\theta) k^-(a),\,\theta k^+(b)\big), \quad \overline{k_\star} \gets \min\!\big((1-\theta) k^-(b),\,\theta k^+(a)\big)$
      \STATE $m \gets \tfrac12(a+b)$
      \IF{$h(m)<0$}
        \STATE $a \gets m$
      \ELSIF{$h(m)>0$}
        \STATE $b \gets m$
      \ELSE
        \STATE $x_\star \gets m, \quad k_\star \gets k(m)$
        \STATE {\bfseries return} $(k_\star, k_\star, x_\star)$
      \ENDIF
    \ENDWHILE
    \STATE $\underline{k_\star} \gets \max\!\big((1-\theta) k^-(a),\,\theta k^+(b)\big), \quad \overline{k_\star} \gets \min\!\big((1-\theta) k^-(b),\,\theta k^+(a)\big)$
    \STATE $\hat x \gets \tfrac12(a+b)$
    \STATE {\bfseries return} $(\underline{k_\star}, \overline{k_\star}, \hat x)$
  \end{algorithmic}
\end{algorithm}

\section*{Part II: Outer Approximations and Practical Details}
\addcontentsline{toc}{part}{Part II: Outer Approximations and Practical Details}
\section{Bounding box} \label{apdx:AS:bounding_box}
We derive the construction of the axis-aligned bounding box $\Bbbox$ of $\Bavg$. This is achieved by projecting $\Bavg$ onto each coordinate $i$ and deriving its interval $[l_i,u_i]$.
To compute it, write
\[
\sum_{d=1}^D w_d k^d(x_d) \le \frac{1}{\alpha}
\iff
w_i k^i(x_i) + \sum_{d\neq i} w_d k^d(x_d) \le \frac{1}{\alpha},
\]
such that isolating $k^i(x_i)$ leads to
\[
w_i k^i(x_i)
\le
\frac{1}{\alpha}-\sum_{d\neq i} w_d k^d(x_d).
\]
Remark that maximizing the right-hand side gives the largest admissible level for $k^i(\cdot)$, which is achieved when $\sum_{d\neq i} w_d k^d(x_d)$ is minimized. Denote the global minimum for each $k^d(\cdot)$ as $k_\star^d \in \mathbb{R}_{\ge 0}$, then the largest level set is achieved by the constant: $\frac{1}{\alpha}-\sum_{d\neq i} w_d k_\star^d$. Now, because $k^i(\cdot)$ is quasiconvex, its sublevel set is a (possibly degenerate) interval, and the endpoints $l_i,u_i$ are therefore obtained by the outer two solutions of
\begin{align}
    w_i k^i(x_i) = \frac{1}{\alpha}-\sum_{d\neq i} w_d k_\star^d. \label{eq:AS:bbx_end_point_solutions}
\end{align}

Under the quasiconvexity alone, $k^i(\cdot)$ could in principle contain flat segments, such that more than two $x_i$ exist satisfying \Cref{eq:AS:bbx_end_point_solutions}. As such, a safe choice is to solve for the equation:
\begin{equation}
    w_i k^i(x_i) = \frac{1}{\alpha}-\sum_{d\neq i} w_d k_\star^d + \epsilon, \label{eq:AS:bbx_end_point_solutions_epsilon}
\end{equation}
where $\epsilon$ is a small positive constant (e.g., $1\times 10^{-12}$) ensuring that any two solutions found are guaranteed to contain the interval of \Cref{eq:AS:bbx_end_point_solutions}.

In practice, the two solutions to \Cref{eq:AS:bbx_end_point_solutions_epsilon} can be computed numerically, for example via two bisection searches applied to
$$
w_i k^i(x_i) - \left(\frac{1}{\alpha}-\sum_{d\neq i} w_d k_\star^d\right) - \epsilon= 0,
$$
performed on intervals on either side of a minimizer of $k^i(\cdot)$. In our implementation, we use the minimizer $x^\star_i$ (corresponding to the global minimum $k_\star^i$) as the splitting point. All global minima are obtained using \Cref{alg:AS:bound-global-min-explicit} and selecting the theoretically ``worst case'' minimum. Repeating this construction for all coordinates results in the bounding box
\[
\Bbbox = \prod_{d=1}^D [l_d,u_d],
\qquad
\Bavg \subseteq \Bbbox.
\]

\section{$\ell_p$-Ellipsoid}
\subsection{$\ell_p$-Ellipsoid Derivation}
\label{apdx:AS:ellipsoid}
We derive the $\ell_p$-ellipsoid representation for the set $\Bellip$ .
\begin{align}
    \sum_{d=1}^D w_d\!\left(\frac{|x_d-a_d|^p}{b_d^p} - c_d\right) \;\le\; \frac{1}{\alpha}.
\end{align}
Rearranging gives
\begin{align}
    \sum_{d=1}^D w_d\,\frac{|x_d-a_d|^p}{b_d^p}
    &\;\le\; \underbrace{\frac{1}{\alpha} + \sum_{d=1}^D w_d\,c_d}
    _{\;=:\; R},
\end{align}
where $R > 0$ is required for the set to be nonempty. Noting that
\begin{align}
    w_d\,\frac{|x_d-a_d|^p}{b_d^p}
    = \frac{|x_d-a_d|^p}{(b_d / \sqrt[p]{w_d})^p},
\end{align}
we can rewrite the inequality as
\begin{align}
    \sum_{d=1}^D \frac{|x_d-a_d|^p}{(b_d / \sqrt[p]{w_d})^p} \;\le\; R.
\end{align}
Dividing through $R$ results in
\begin{align}
    \sum_{d=1}^D \frac{|x_d-a_d|^p}{\left( b_d \sqrt[p]{R} / \sqrt[p]{w_d} \right)^p} \;\le\; 1.
\end{align}
We therefore identify the semi-axis lengths
\begin{align}
    b'_d := \frac{b_d\, \sqrt[p]{R}}{\sqrt[p]{w_d}},
    \quad
    R = \frac{1}{\alpha} + \sum_{j=1}^D w_j\,c_j,
\end{align}
and the $\ell_p$-ellipsoid can be written as
\begin{align}
    \sum_{d=1}^D \frac{|x_d-a_d|^p}{(b'_d)^p} \;\le\; 1.
\end{align}
The result is an axis-aligned $\ell_p$-ellipsoid, and can also be written in matrix form as follows:
$$
\|B^{-1}(\vect{x}-\vect{a})\|_p \leq 1,
$$
with
\begin{align}
    B := \mathrm{diag}\! \big( b'_1, \dots, b'_D \big)
\end{align}
with $\vect{a} \in \mathbb{R}^D$, consisting of the elements $a_1 \dots a_D$.

\subsection{Fitting $\tilde{k}(x)$ from Two Touch Points}
\label{apdx:AS:find_abc}
Given a threshold $\gamma > k_\star$, we are given two distinct points \(\xtL,\xtR\) that lie on the left and right of the global optimizer $x^\star$ so that:
\[
    k(\xtL)=k(\xtR)=\gamma + \epsilon,
\]
for a small positive constant $\epsilon$. The motivation for this small constant is discussed in \Cref{analysis_of_rx}, but it generally also helps with the feasibility of $\tilde k$.

For a fixed integer $p \ge 1$, we seek parameters \((a,b,c)\) such that
\[
    \tilde{k}(\xtL)=\tilde{k}(\xtR)=\gamma
    \qquad\text{and}\qquad
    \forall x \in[x_{\mathrm{start}},x_{\mathrm{end}}]:\ \tilde{k}(x)\le k(x),
\]
where
\[
    \tilde{k}(x)=\frac{\left|x-a\right|^p}{b^p}-c
\]

with $b > 0$ . Note that once $a, b ,$ and $c$ are obtained, $p$ can be calculated by considering a series of candidates over a small fixed grid of integers (e.g.,  $\{ 6 \dots 16 \}$ ) and choosing the $p$ that minimizes the volume of the respective $\ell_p$-ellipsoid. 

\paragraph{Deriving (a, b c)}
\begin{enumerate}
    \item \textbf{Finding \(a\).}
    From \(\tilde k(\xtL)=\tilde k(\xtR)=\gamma\) we have
    \[
        \frac{|\xtL-a|^p}{b^p}-c=\gamma,\qquad
        \frac{|\xtR-a|^p}{b^p}-c=\gamma.
    \]
    Subtracting the two equations results in
    \[
        |\xtL-a|^p=|\xtR-a|^p \Rightarrow |\xtL-a|=|\xtR-a|.
    \]
    \item And since, \(\xtL\neq \xtR\) we obtain   \[
        a=\frac{\xtL+\xtR}{2},
    \]
    i.e., \(a\) is the midpoint of \(\xtL\) and \(\xtR\).

    \item \textbf{Finding \(c\) (in terms of \(b\)).}
    Plugging one touching point (say \(\xtL\)) into \(\tilde k(\xtL)=\gamma\) gives
    \begin{align}
        \label{eq:c_from_b}
        \frac{|\xtL-a|^p}{b^p}-c=\gamma
        \quad\Rightarrow\quad
        c=\frac{|\xtL-a|^p}{b^p}-\gamma.
    \end{align}
    Thus once $b$ is chosen, \(c\) is also determined.

    \item \textbf{Finding \(b\).} \label{finding_b}
    Substituting \(c\) from \eqref{eq:c_from_b} into \(\tilde k(x)\) gives
    \[
        \tilde k(x)
        =\frac{|x-a|^p}{b^p}-\frac{|\xtL-a|^p}{b^p}+\gamma
        =\frac{|x-a|^p-|\xtL-a|^p}{b^p}+\gamma.
    \]
    The requirement \(\tilde k(x)\le k(x)\) for all \(x\in[x_{\mathrm{start}},x_{\mathrm{end}}]\) is therefore
    \[
        \frac{|x-a|^p-|\xtL-a|^p}{b^p}+\gamma \le k(x)
        \quad\Longleftrightarrow\quad
        |x-a|^p-|\xtL-a|^p \le (k(x)-\gamma)\,b^p.
    \]
    Define the function
    \[
        D(x):=k(x)-\gamma.
    \]
    We must satisfy, for every \(x\) in the domain, the inequality
    \[
        |x-a|^p-|\xtL-a|^p \le D(x)\,b^p.
    \]
    Now, consider three separate cases depending on the sign of \(D(x)\):
    \begin{itemize}
        \item If \(D(x)=0\) (i.e. \(k(x)=\gamma\)), the constraint reduces to
        \[
            |x-a|^p-|\xtL-a|^p \le 0
            \quad\Longleftrightarrow\quad
            |x-a| \le |\xtL-a|.
        \]
Thus, every point \(x\) where \(k(x)=\gamma\) must lie within the closed interval around \(a\) of radius \(|\xtL-a|\). If there exists an \(x\) with \(k(x)=\gamma\) but \(|x-a|>|\xtL-a|\), no \(\tilde k\) of the chosen form can simultaneously touch at \(\xtL,\xtR\) and remain below \(k\) at that \(x\).
        \item If  $D(x)>0$ then dividing by \(D(x)\) (positive) gives the lower bound
        \[
            \frac{|x-a|^p-|\xtL-a|^p}{D(x)} \le b^p.
        \]
        \item If \(D(x)<0\) then dividing by \(D(x)\) (negative) reverses the inequality, resulting in the upper bound
        \[
            \frac{|x-a|^p-|\xtL-a|^p}{D(x)} \ge b^p.
        \]
    \end{itemize}
    For easier notation, we define this fraction as $ R(x):=\frac{|x-a|^p-|\xtL-a|^p}{D(x)}$ (and denote it as $\underline{R}$ when it defines a lower bound and $\overline{R}$ otherwise).  A feasible \(b\) must therefore satisfy
    \begin{equation}
        \label{eq:feasible_bp}
        b^p \ge \sup_{x:\,D(x)>0} \underline{R}(x)
        \qquad\text{and}\qquad
        b^p \le \inf_{x:\,D(x)<0} \overline{R}(x),
    \end{equation}
    while every \(x\) with \(D(x)=0\) must satisfy \(|x-a|\le|\xtL-a|\). If
   $ \sup_{x:\,D(x)>0} \underline{R}(x) > \inf_{x:\,D(x)<0} \overline{R}(x),$ then the feasible interval for \(b^p\) is empty and no \(\tilde k\) exists that meets the inequality on the whole interval (under the given $p$).
\end{enumerate}

\paragraph{Practical remark.}
Numerically, one evaluates the functions on a fine grid \(x_i\) with \(i=1,\dots,n\), covering \([x_{\mathrm{start}},x_{\mathrm{end}}]\). For example choose $x_i = x_{\mathrm{start}} + (i-1)\,h,\qquad h=\frac{x_{\mathrm{end}}-x_{\mathrm{start}}}{n-1}$. Define 
\[
    \underline{b^p}  = \max_{i \in \{1 \dots n \, | \, D(x_i)>0\} } \underline{R}(x_i),
    \qquad
    \overline{b^p} = \min_{i \in \{1 \dots n \, | \, D(x_i)<0\} } \overline{R}(x_i).
\]

When the interval $[\max \{0, \underline{b^p} \}, \overline{b^p}] \neq \emptyset$ and all $i$ with  \(|D(x_i)| \le \epsilon\) satisfy \(|x_i-a|\le|\xtL-a|\), a solution exists, and a simple choice for $b^p$ is to take the midpoint:
\[
    b^p \gets \frac{\underline{b^p}  + \overline{b^p}}{2},
    \qquad
    b \gets \bigl(b^p\bigr)^{1/p}.
\]
Finally obtain \(c\) from~\eqref{eq:c_from_b}.

\paragraph{Choosing $\gamma$}
The parameter $\gamma$ controls which areas of $k(x)$ receive the tightest fit. We generally found that a simple rule, which depends on the dimension $D$ and the original sublevel-set's threshold $1/\alpha$, seemed to work well in practice. We suggest setting it to: $\gamma_d = \frac{\sqrt[p]{w_d}}{\alpha}$ as a solid default choice. When all $w_1 \dots w_D$ are equal, this corresponds to setting each $\gamma_d = \frac{1}{\alpha \sqrt[p]{D}}$.

\paragraph{Adaptive widening when the touch constraints are infeasible}
\label{apdx:AS:adaptive_touch}
When the touch-point equalities at $(\xtL, \xtR)$  do not have a solution, we relax the tightness by widening the touch points:
\begin{align}
    \xtL \gets \max(\xtL-\delta,\ x_{\text{start}}),  \quad \xtR \gets \min(\xtR+\delta,\ x_{\text{end}}), \label{eq:apdx:AS:widen_touch_pts}
\end{align}
and re-run the feasibility test. Since sufficiently large $\delta$ eventually makes the constraints feasible in practice, we find the smallest feasible $\delta$ by bisection on $\delta \in [0,\,x_{\text{end}}-x_{\text{start}}]$. \Cref{alg:AS:fit-parameters-tilde-k} implements this search. The algorithm repeatedly splits the interval into two halves, searching for the transition point from no solution to a solution. This procedure is repeated until the resulting interval has reduced to a predefined tolerance. The parameter $\gamma$ remains fixed. 

\paragraph{Domain extension to guarantee two touch points}
\label{apdx:AS:k_extension}

For a continuous quasiconvex $k$ on $[0,1]$, the equation $k(x)=\gamma + \epsilon$ may have only a solution on one side of $x^\star$. As such, the procedure from \Cref{apdx:AS:find_abc} will fail as either $\xtL$ or $\xtR$ does not exist. 
Here, we construct an extended function $k_{\text{ext}}$  by mirroring a segment of $k$, creating a second touch point at the level $\gamma + \epsilon$ while preserving quasiconvexity and ensuring $k_{\text{ext}}(x)=k(x)$ for all $x\in[0,1]$ (so all set-containment guarantees on the original domain are unchanged).

\begin{figure}[h]
	\centering
	
	\begin{tikzpicture}
		\begin{axis}[
			axis lines=middle,
			axis line style={->},
			width=5.5cm,
			height=5.2cm,
			grid=none,
			xmin=-0.0, xmax=1.14,
			ymin=0.0,  ymax=2.5,
			xlabel={$x$},
			ylabel={$k(x)$},			
			tick label style={font=\scriptsize},
			ytick distance=0.3,
			legend style={
                xshift=3mm,
				anchor=north west,
				font=\small
			},
			legend cell align=left,
			xtick distance=0.2,
			extra x ticks={0.325,0.4364466655238867,1.0,1.1114466655238866},
			extra x tick labels={$\xtL$,$\xbetaleft$,$\xbetaright$,$x_{R}^{\mathrm{ext}}$},
			extra x tick style={
				tick label style={font=\scriptsize, yshift=-2.5ex}
			},
			extra y ticks={1.0},
			extra y tick labels={$\gamma + \epsilon$},
			extra y tick style={tick label style={xshift=-1.5ex}},
			]
			
			\pgfmathsetmacro{\xAlphaL}{0.325}
			\pgfmathsetmacro{\xBetaL}{0.4364466655238867}
			\pgfmathsetmacro{\xBetaR}{1.0}
			\pgfmathsetmacro{\xAlphaR}{1.1114466655238866}
			\pgfmathsetmacro{\xMapConst}{\xBetaR + \xBetaL} 
			
			\addplot[thick, domain=0:1, samples=700]
			({x},{2*(pow(x-0.1,8) + pow(x-1.1,2)) - 0.2});
			\addlegendentry{$k(x)$}
			
			\addplot[thick, purple, dashed, domain=\xAlphaL:\xBetaL, samples=250]
			({x},{2*(pow(x-0.1,8) + pow(x-1.1,2)) - 0.2});
			\addlegendentry{Copied segment}
			
			\addplot[thick, purple, dashed, domain=\xBetaR:\xAlphaR, samples=350]
			({x},{
				2*(pow((\xMapConst - x)-0.1,8) + pow((\xMapConst - x)-1.1,2)) - 0.2
			});
			\addlegendentry{Extended part}
			
			\addplot[black, dashed, forget plot] coordinates {(\xAlphaL,0) (\xAlphaL,3)};
			\addplot[black, dashed, forget plot] coordinates {(\xBetaL,0)  (\xBetaL,3)};
			\addplot[black, dashed, forget plot] coordinates {(\xBetaR,0)  (\xBetaR,3)};
			\addplot[black, dashed, forget plot] coordinates {(\xAlphaR,0) (\xAlphaR,3)};
			
			\addplot[black, dashed, forget plot] coordinates {(0,1) (1.5,1)};
			
		\end{axis}
	\end{tikzpicture}
	
	\caption{Right part extension.}
	\label{fig:AS:k_extended}
\end{figure}

\begin{itemize}
    \item \textbf{Right extension (missing right touch point).}
Assume $k$ crosses $\gamma + \epsilon$ on the left of $x^\star$ but not on the right. I.e., there exists $x_L^{\text{touch}}\in[0,x^\star]$ (so that $k(x_L^\text{touch}) = \gamma + \epsilon$) but no $x_L^{\text{touch}}\in[x^\star,1]$. We additionally assume that $k$ is not monotone on $[0,1]$; the monotone case is discussed separately below. 

Let $x_{\beta,R}:=1$ and define $\beta := k(x_{\beta,R}) < \gamma + \epsilon$. Since $k$ is continuous and quasiconvex but not monotone, there exists $x_{\beta,L}\in(x_L^{\text{touch}},x^\star)$ with $k(x_{\beta,L})=\beta$.

Define the extended endpoint $x_R^{\text{ext}}:=x_{\beta,R}+(x_{\beta,L}-x_L^{\text{touch}})$ and
\[
k_{\text{ext}}(x) :=
\begin{cases}
k(x), & x\le x_{\beta,R},\\
k(-x + x_{\beta,R}+x_{\beta,L}), & x> x_{\beta,R}.
\end{cases}
\]
Then $k_{\text{ext}}(x_R^{\text{ext}})=\gamma + \epsilon$ and $k_{\text{ext}}$ is quasiconvex.
The process is visualized in \Cref{fig:AS:k_extended}.


\end{itemize}

\begin{itemize}
    \item \textbf{Left extension (missing left touch point).}
    If instead there exists \(\xtR\) but no \(\xtL\), the above procedure is applied symmetrically by reflecting a segment of \(k\) from the right side onto the left. The construction is entirely analogous.
    \item \textbf{Edge case:}
Finally, we consider the case where $k(x) = \max \{ \theta k^+(x), (1-\theta) k^-(x)\}$ is non-decreasing or non-increasing over its entire domain (with $k^{+}$ non-increasing and $k^{-}$ non-decreasing on $[0,1]$). To check whether these two functions ever intersect, define $h(x)=(1-\theta)k^{-}(x)-\theta k^{+}(x)$, which is non-decreasing. Hence, the curves do not intersect on $[0,1]$ iff either $h(0)>0$ (so $(1-\theta)k^{-}(x)> \theta k^{+}(x)$ for all $x$ ) or $h(1)<0$ (so $(1-\theta)k^{-}(x)<\theta k^{+}(x)$ for all $x$ ). In this ``no-intersection'' case, $k$ is monotone on $[0,1]$. We then extend the domain by mirroring: if $k$ is non-decreasing, mirror $[0,1]$ to the left to obtain $[-1,1]$; otherwise mirror it to the right to obtain $[0,2]$.
\end{itemize}

\subsection{Algorithms}\label{AS:apdx:ellipse_algorithms_and_complexity}
The algorithm to construct the $l_p$-ellipsoid is presented in \textsc{FindEllipsoid} (\Cref{alg:AS:find-ellipsoid}). It iterates over a candidate set $\mathcal{P}$, and for each $p \in \mathcal{P}$ computes an $l_p$-ellipsoid by fitting $D$ functions $\tilde k^d(\cdot)$ through \textsc{FitParameters-$\tilde{k}$} (\Cref{alg:AS:fit-parameters-tilde-k}). This algorithm performs a bisection search to find two valid touch points, and  in turn, uses the \textsc{FitParamsAtTouch}~(\Cref{alg:AS:fit-params-at-touch}) to fit the actual $\tilde k(x)$ given the required touching points. 

The \textsc{FindInterval} call (used by  \textsc{FindEllipsoid}), given a threshold $\gamma$, returns the two distinct points ($a$ and $b$) where $k$ approximately touches $\gamma$, i.e., $k(a)\approx \gamma$. This can be achieved through two root-finding calls (e.g., bisection), on $f(x) = k(x) - \gamma$, one call on the interval $[a, \hat{x})$ and the second on $(\hat{x}, b]$.  
Here, $\hat{x}$ corresponds to an estimated global minimizer of $k(x)$, obtained from \textsc{Estimate-$k_\star$} (\Cref{alg:AS:bound-global-min-explicit}).

\begin{algorithm}[tb]
  \caption{\textsc{FindEllipsoid} (grid-search over $p$)}
  \label{alg:AS:find-ellipsoid}
  \begin{algorithmic}
    \STATE {\bfseries Input:} Per-dimension functions $\{k^d(\cdot)\}_{d=1}^D$, weights $\vect{w} \in \Delta^D$, error level $\alpha$, candidate set $\mathcal{P}$, grid size $n$, $\epsilon$, $\text{max\_iters}$, $\text{tol}$
    \STATE {\bfseries Output:} Best parameters $\{(a_d,b_d,c_d)\}_{d=1}^D$ and best $p\in\mathcal{P}$

    \STATE best\_params $\gets \varnothing$, best\_p $\gets \varnothing$ , best\_vol $\gets +\infty$

    \FOR{each $p \in \mathcal{P}$}
    
      \STATE current\_params $\gets$ empty list
      \STATE feasible $\gets$ \textsc{True}

      \FOR{$d=1$ {\bfseries to} $D$}
        \STATE $\gamma_d \gets \frac{\sqrt[p]{w_d}}{\alpha}$ 

        \STATE $( k^d, x_{\text{start}}^d, x_{\text{end}}^d) \gets$ \textsc{Extend}$(k^d, \gamma_d + \epsilon)$
        \STATE $(x_{\text{start}}^d, x_{\text{end}}^d) \gets$ \textsc{FindInterval}$( k^d, \text{target}=\frac{1}{\alpha w_d} + \epsilon;\ x_{\text{start}}^d, x_{\text{end}}^d)$ 
        \STATE $S^d \gets \{(x_j, k^d(x_j))\}_{j=1}^n$ \COMMENT{$x_j$ uniform on $[x_{\text{start}}^d,x_{\text{end}}^d]$}

        \STATE $\gamma_{d} = \min\{ \gamma_d, k^d(x^d_\text{start}), k^d(x^d_\text{end}) \}$
        \STATE $(x_{L,d}^{\text{touch}}, x_{R,d}^{\text{touch}}) \gets$ \textsc{FindInterval}$( k^d, \gamma_d + \epsilon;\ x_{\text{start}}^d, x_{\text{end}}^d)$ \COMMENT{$\epsilon$ comes from \Cref{analysis_of_rx}}

        \STATE params$_d \gets$ \textsc{FitParameters-$\tilde{k}$}$(S^d, x_{L,d}^{\text{touch}}, x_{R,d}^{\text{touch}}, [x_{\text{start}}^d,x_{\text{end}}^d], p, \gamma_d, \epsilon, \text{max\_iters}, \text{tol})$

        \IF{params$_d$ is \textsc{No Solution}}
          \STATE feasible $\gets$ \textsc{False}
          \STATE {\bfseries break}
        \ENDIF

        \STATE append params$_d$ to current\_params
      \ENDFOR

      \IF{feasible = \textsc{False}}
        \STATE {\bfseries continue}
      \ENDIF

      \STATE vol $\gets$ \textsc{EllipsoidVolume}$(\{(a_d,b_d,c_d)\}_{d=1}^D,\ \vect{w},\ \alpha,\ p)$
      \IF{vol $<$ best\_vol}
        \STATE best\_vol $\gets$ vol
        \STATE best\_params $\gets$ current\_params
        \STATE best\_p $\gets p$
      \ENDIF
    \ENDFOR

    \STATE {\bfseries return} (best\_params, best\_p)
  \end{algorithmic}
\end{algorithm}

\begin{algorithm}[tb]
  \caption{\textsc{FitParameters-$\tilde{k}$} (binary search over touch-distance)}
  \label{alg:AS:fit-parameters-tilde-k}
  \begin{algorithmic}
    \STATE {\bfseries Input:} Samples $S=\{(x_i,k(x_i))\}_{i=1}^n$, initial touch points $x_L^{\text{touch}}$, $x_R^{\text{touch}}$, domain $[x_{\text{start}},x_{\text{end}}]$, $p$, $\gamma$, $\epsilon$, $\text{max\_iters}$, $\text{tol}$
    \STATE {\bfseries Output:} $(a,b,c)$ or \textsc{No Solution}

    \STATE params $\gets$ \textsc{FitParamsAtTouch}$(S, x_L^{\text{touch}}, x_R^{\text{touch}}, p, \gamma, \epsilon)$
    \IF{params exist}
      \STATE {\bfseries return} params
    \ENDIF

    \STATE $\delta_{\text{low}} \gets 0$, $\delta_{\text{high}} \gets (x_{\text{end}} - x_{\text{start}})$
    \STATE $\text{best\_params} \gets \varnothing$

    \FOR{$t=1$ {\bfseries to} $\text{max\_iters}$}
      \STATE $\delta_{\text{mid}} \gets 0.5(\delta_{\text{low}} + \delta_{\text{high}})$
      \STATE $x_L^{\text{test}} \gets \max(x_L^{\text{touch}} - \delta_{\text{mid}},\, x_{\text{start}})$
      \STATE $x_R^{\text{test}} \gets \min(x_R^{\text{touch}} + \delta_{\text{mid}},\, x_{\text{end}})$

      \STATE params $\gets$ \textsc{FitParamsAtTouch}$(S, x_L^{\text{test}}, x_R^{\text{test}}, p, \gamma, \epsilon)$
      \IF{params exist}
        \STATE $\delta_{\text{high}} \gets \delta_{\text{mid}}$
        \STATE $\text{best\_params} \gets$ params
      \ELSE
        \STATE $\delta_{\text{low}} \gets \delta_{\text{mid}}$
      \ENDIF

      \IF{$\delta_{\text{high}} - \delta_{\text{low}} \le \text{tol}$ {\bfseries and} $\text{best\_params} \neq \varnothing$}
        \STATE {\bfseries return} $\text{best\_params}$
      \ENDIF
    \ENDFOR

    \STATE {\bfseries return} \textsc{No Solution}
  \end{algorithmic}
\end{algorithm}

\begin{algorithm}[tb]
  \caption{\textsc{FitParamsAtTouch}}
  \label{alg:AS:fit-params-at-touch}
  \begin{algorithmic}
    \STATE {\bfseries Input:} Samples $S=\{(x_i,k(x_i))\}_{i=1}^n$, $x_L^{\text{touch}}$, $x_R^{\text{touch}}$, $p$, $\gamma$, $\epsilon$
    \STATE {\bfseries Output:} $(a,b,c)$ or \textsc{No Solution}

    \STATE $a \gets (x_L^{\text{touch}} + x_R^{\text{touch}})/2$ \COMMENT{Midpoint of touch points}
    \STATE $\underline{b^p} \gets 0$, $\overline{b^p} \gets +\infty$

    \FOR{$i=1$ {\bfseries to} $n$}
      \STATE $N_i \gets |x_i - a|^p - |x_L^{\text{touch}} - a|^p$
      \STATE $D_i \gets k(x_i) - \gamma$
      \IF{$|D_i| < \epsilon$ {\bfseries and} $N_i > 0$}
        \STATE return \textsc{No Solution}
      \ENDIF
      \IF{$D_i > 0$}
        \STATE $\underline{R_i} \gets N_i / D_i$ \COMMENT{Lower bound}
        \STATE $\underline{b^p} \gets \max(\underline{b^p}, \underline{R_i})$
      \ELSE
        \STATE $\overline{R_i} \gets N_i / D_i$ \COMMENT{Upper bound}
        \STATE $\overline{b^p} \gets \min(\overline{b^p}, \overline{R_i})$
      \ENDIF
    \ENDFOR
    \IF{$\underline{b^p} > \overline{b^p}$}
      \STATE return \textsc{No Solution}
    \ENDIF
    \IF{$\overline{b^p} = \infty$}
      \STATE $b^p \gets \underline{b^p}$
    \ELSE
      \STATE $b^p \gets (\underline{b^p} + \overline{b^p})/2$ \COMMENT{Average bounds}
    \ENDIF
    \STATE $c \gets |\xtL - a|^p / b^p - \gamma$ \COMMENT{Compute parameter $c$}
    \STATE return $(a, b, c)$
  \end{algorithmic}
\end{algorithm}

\paragraph{Computational complexity.} \label{AS:apdx:ellipse_complexity}
\textsc{FitParamsAtTouch}:  Given grid evaluations $S^d$ and a pair of touch points $(x_{L,d}^{\text{touch}},x_{R,d}^{\text{touch}})$, the algorithm performs a single pass over the grid and thus runs in $O(N)$ time.

\textsc{FitParameters-$\tilde{k}$}: repeatedly calls \textsc{FitParamsAtTouch} while performing a bisection search over the touch-distance parameter $\delta$ on an interval of length
$L := \delta_{\text{high}}-\delta_{\text{low}} \le x_{\text{end}}-x_{\text{start}}$. With termination tolerance \texttt{tol}, the number of bisection iterations is $T_{\text{bisect}} = O\!\left(\log\frac{L}{\texttt{tol}}\right)$. Since $L \le 1$ when $k(x)$ was not extended (since its domain is $[0, 1]$), or $L\le 2$ when $k$-extension is enabled (as in the worst case, the entire domain is copied).  Consequently, the dominant cost of fitting a single 1D $\tilde k^d$ is
\[
O\!\left(N \log\frac{L}{\texttt{tol}}\right) = O\!\left(N \log\frac{1}{\texttt{tol}}\right) 
\]
operations on precomputed samples $S$. 

\textsc{FindEllipsoid}: finally, \textsc{FindEllipsoid} repeats this procedure for each coordinate $d\in\{1,\dots,D\}$ and each candidate $p$ in a set $\mathcal{P}$ of size $N_p$.  Additionally, for each $d$ and $p$ the grid samples $S^d$ are computed, with each sample's cost denoted as $C_{\text{eval}}$ (i.e., the cost of a single evaluation of $k^d(\cdot)$).
Note that $S^d$ depends on \textsc{Extend} which requires $\gamma_d = \frac{\sqrt[p]{w_d}}{\alpha}$  (\Cref{alg:AS:find-ellipsoid}).
This is why $S^d$ is recomputed for the various $p$, but there are certainly more efficient alternatives which we do not cover here. 
Returning to \textsc{FindEllipsoid}'s complexity analysis, we obtain a total of
\[
O\!\left(D\,N_p\,N  ( \log\frac{L}{\texttt{tol}} + C_{\text{eval}} ) \right)\, 
\]
where we have ignored the comparatively small overhead of \textsc{FindInterval} and \textsc{Extend}.

\subsection{Analysis of $R(x)$ when dividing by zero} \label{analysis_of_rx}

The construction in \Cref{apdx:AS:find_abc} obtains the parameters $(a,b,c)$ such that
$k(x_i) \ge \tilde k(x_i)$ holds for all points $x_i$. Recall that the feasibility condition for $b^p$ is expressed in terms of
\[
R(x) := \frac{|x-a|^p - |\xtL-a|^p}{D(x)}, 
\qquad D(x) := k(x) - \gamma.
\]
Points with $D(x)>0$ impose a lower bound on $b^p$, while points with $D(x)<0$
impose an upper bound. When $D(x)=0$, the constraint reduces to
\[
|x-a| \le |\xtL-a|.
\]
In practice, testing the condition $D(x)=0$ exactly is numerically unstable. Therefore, the implementation (see \Cref{alg:AS:fit-params-at-touch}) treats all points with
$|D(x)|<\epsilon$ as belonging to the $D(x)=0$ case.
Here, we analyze whether this relaxed handling could eliminate critical constraints on $b^p$ and thereby violate the condition $k(x_i) \ge \tilde k(x_i)$.

\subsubsection{Analytical analysis}
\paragraph{Setup.}
Let $x^\star$ satisfy $D(x^\star)=0$, i.e., $k(x^\star)=\gamma$ (not to be confused with the global minimizer of $k(x)$). Following the choice for the touch points in~\Cref{apdx:AS:find_abc}, we assume that the two touch points $\xtL,\xtR$ are chosen such that
\[
k(\xtL) > \gamma, \quad \text{ and } \quad k(\xtR) > \gamma,
\]
such that by the quasiconvexity of $k$, we obtain the strict inequalities $\xtL < x^\star_L$ and $x^\star_R < \xtR$. In total, we have:
\begin{equation}
	x_{\mathrm{start}} \le \xtL < x^\star_L < a < x^\star_R < \xtR \le x_{\mathrm{end}}. \label{eq:apdx:AS:total_pts_structure}
\end{equation}

\paragraph{Behavior of $R(x)$ near $D(x)=0$.}
We focus on the right-side root $x^\star_R$ and analyze the limiting behavior of $R(x)$ as $x \to x^\star_R$.
Since $D(x^\star_R)=0$, $R(x)$ is undefined at this point, so we consider its one-sided limits.

The numerator
\begin{equation}
	|x-a|^p - |\xtL-a|^p \label{eq:apdx:AS:x-a-xtouch-a}
\end{equation}
evaluated at $x = x^\star_R$ is strictly negative and non-zero, because $|x^\star_R-a| < |\xtL-a|$ by construction (follows from Eq.~\eqref{eq:apdx:AS:total_pts_structure} and the fact that $|\xtL - a| = |\xtR - a|$).
We denote Eq.~\eqref{eq:apdx:AS:x-a-xtouch-a} by $-c$ for some $c > 0$. 
As $x \to {x^\star_R}^-$ we have $D(x)<0$, while as $x \to {x^\star_R}^+$ we have
$D(x)>0$. Therefore,
\[
\lim_{x \to {x^\star_R}^-} \overline{R}(x) = \lim_{x \to {x^\star_R}^-} \frac{-c}{D(x)} = +\infty,
\qquad
\lim_{x \to {x^\star_R}^+} \underline{R}(x) = \lim_{x \to {x^\star_R}^+} \frac{-c}{D(x)} = -\infty.
\]

\paragraph{Implication for feasibility bounds.}
Recall that points with $D(x)<0$ contribute upper bounds on $b^p$ via $\inf_{D(x)<0} \overline{R}(x)$, while points with $D(x)>0$ contribute lower bounds via $\sup_{D(x)>0} \underline{R}(x)$.
The divergent limits above, therefore, do not lead to finite or active constraints:
\begin{itemize}
    \item As $x \to {x^\star_R}^-$, $\overline{R}(x)\to+\infty$, which cannot tighten the
    upper bound on $b^p$.
    \item As $x \to {x^\star_R}^+$, $\underline{R}(x)\to-\infty$, which cannot tighten the lower bound on $b^p$.
\end{itemize}

Consequently, for sufficiently small $\epsilon$, treating points with $|D(x)|<\epsilon$ as belonging to the $D(x)=0$ case does not eliminate any finite, critical bounds on $b^p$. We also validate this empirically below.

\subsubsection{Empirical analysis}
\begin{figure}[h!]
    \centering
    \begin{minipage}[t]{0.49\textwidth}
        \centering
        \includegraphics[width=\linewidth]{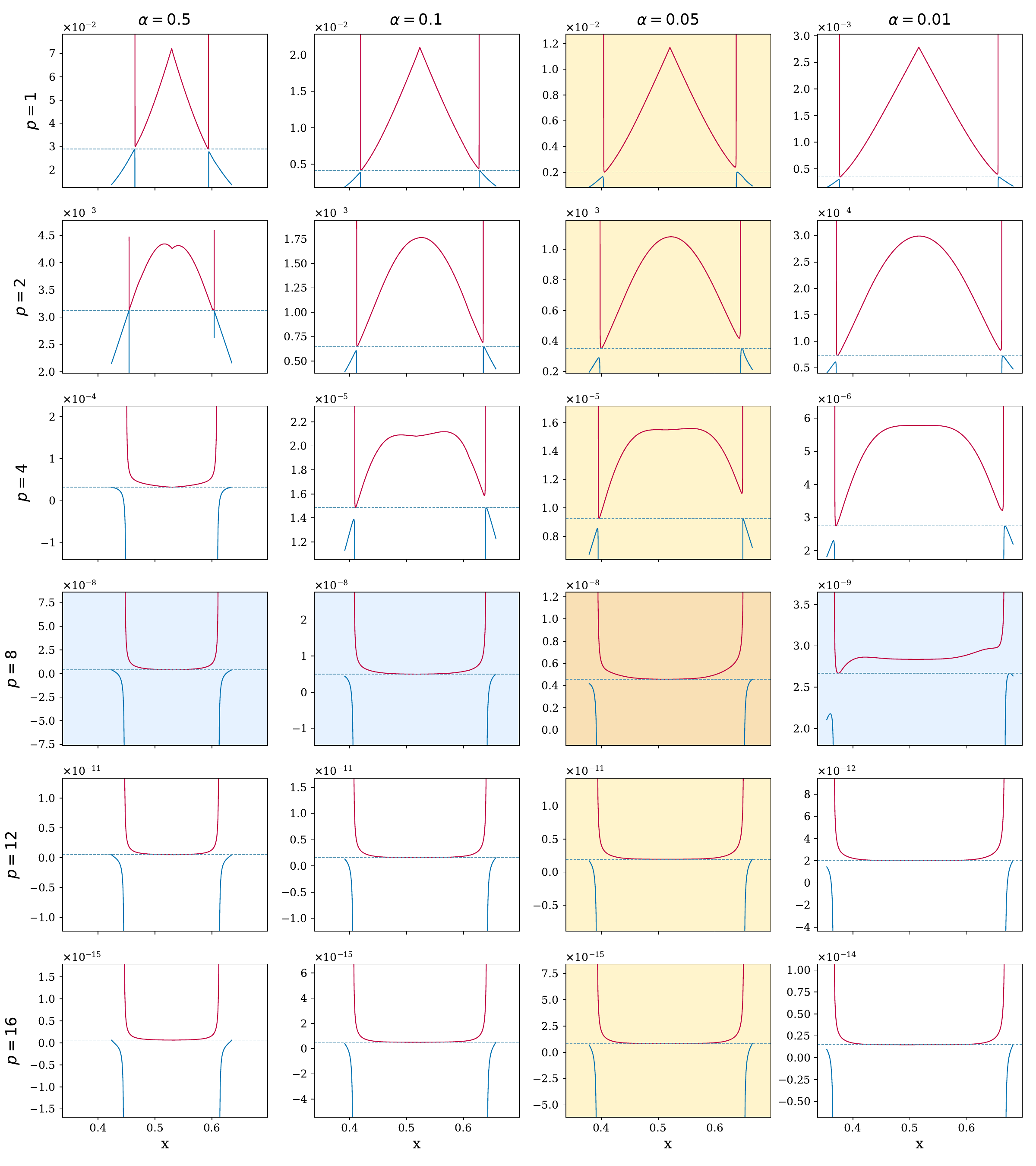}
        \caption*{(a) 50 samples}
    \end{minipage}\hspace{-0em}%
    \begin{minipage}[t]{0.49\textwidth}
        \centering
        \includegraphics[width=\linewidth]{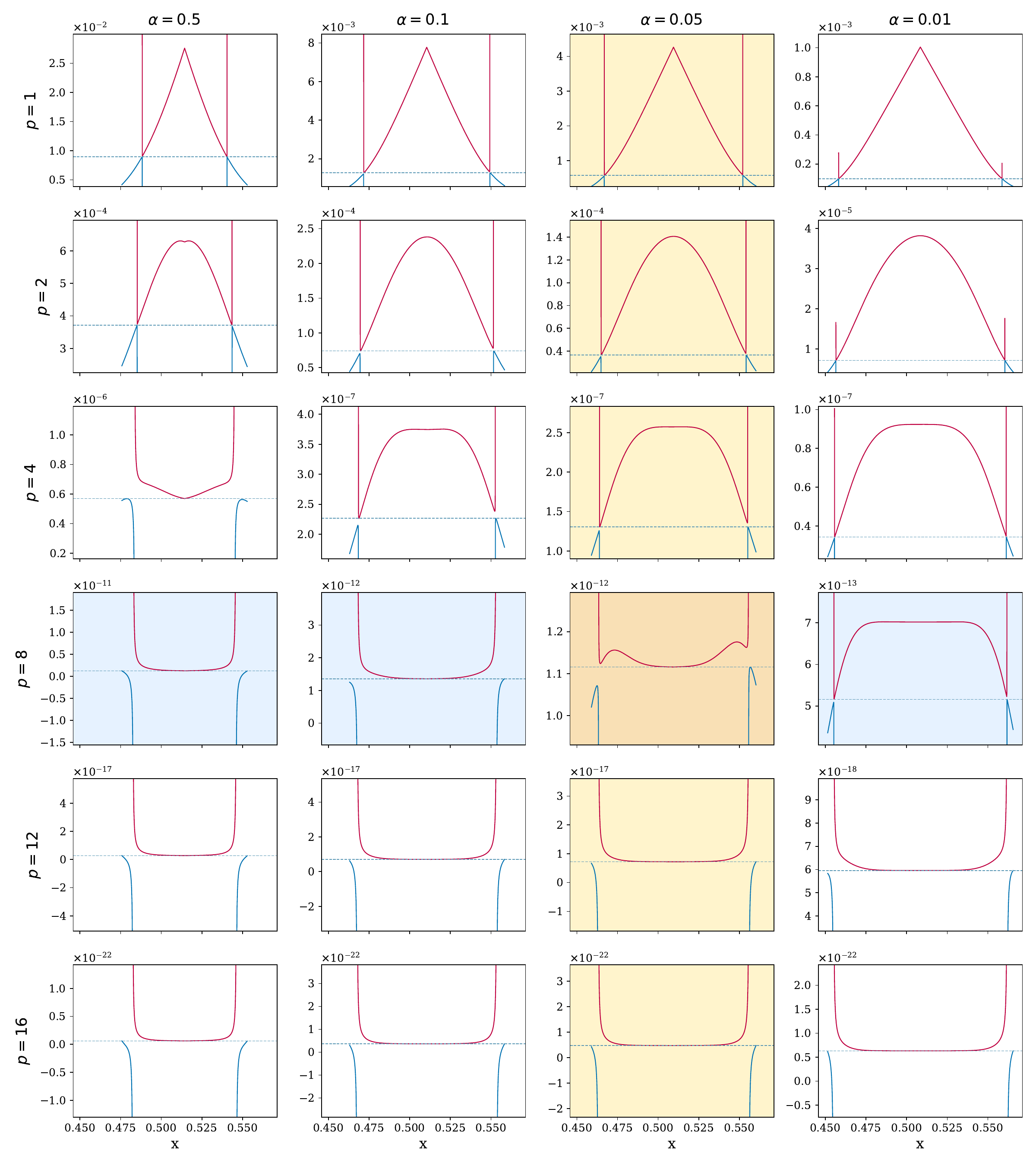}
        \caption*{(b) 500 samples}
    \end{minipage}

    \caption{Plots of $R(x)=\frac{|x-a|^p-|x_1-a|^p}{D(x)}$ for various $\alpha$ and $p$ combinations. 
    The blue part of $R(x)$ corresponds to the lower bound of $b^p$ (i.e., $D(x)>0$), and the red part to the upper bound (i.e., $D(x)<0$).}
    \label{fig:AS:b_range_comparison}
\end{figure}

\Cref{fig:AS:b_range_comparison} depicts the upper (red) and lower bound (blue) for legal $b^p$ candidates. Based on the equation:
$$
R(x) = \frac{|x-a|^p-|\xtL-a|^p}{D(x)}
$$
from \Cref{apdx:AS:find_abc}. We find that when $D(x) = k(x) - \gamma$ approaches zero, $R(x)$ diverges either to $\pm \infty$. For the upper bound $R(x)$ approaches $+ \infty$, resulting in a meaningless bound on $b^p$, and similarly, the lower bound is $- \infty$ near $D(x) = 0$. Hence, at the limit, no crucial upper/lower bound values are being ignored by treating $|D(x)| \le \delta$ as case 1 in \Cref{finding_b} (from the three items).  
Interestingly, as $p$ increases, the key boundary points move away from the points where $D(x)=0$. We also observe that the sub-regions containing these crucial boundary points appear to be somewhat convex, suggesting that future algorithms could exploit this structure to derive upper and lower bounds for $b$ more efficiently than the uniform grid evaluation used here. \\

Details to reproduce the figure:
Underlying data distribution: $y \sim \mathcal{U}(0.5, 0.8)$, where $k(x)$ is based on either 50 or 500 observations. The grid size $N$ is set to 1,000,001 pts.

\subsection{Side Note - Function Approximation with Two Distinct Touch Levels}
So far, we assumed that the level $\gamma$ is touched on both sides of the global minimizer of $k$, but this does not always hold in practice, motivating the extended construction $k_{\text{ext}}$ introduced in \Cref{apdx:AS:k_extension}.

An alternative approach is to directly work with a more general formulation in which the two sides are allowed to correspond to different levels. Specifically, we consider two points $\xtL$ and $\xtR$ satisfying
\[
k(\xtL)=\alpha, \qquad k(\xtR)=\beta,
\]
with $\alpha$ and $\beta$ not necessarily equal, and construct an approximation $\tilde{k}$ that touches $k$ at these locations.

We derive this construction for the quadratic case
\[
\tilde{k}(x)=\frac{(x-a)^2}{b^2}-c,
\]
corresponding to $p=2$. For $p>2$, the resulting parameter equations become substantially more involved, and we are not aware of closed-form solutions. Below, we present the full derivation for $p=2$ and highlight the technical difficulties that appear for $p>2$.

Assume that $\xtL, \xtR$ and the corresponding levels $\alpha, \beta$ are given. Our goal is to determine parameters $(a,b,c)$ such that
\[
\tilde{k}(\xtL)=\alpha, \qquad \tilde{k}(\xtR)=\beta,
\]
while additionally ensuring that
\[
\forall x \in [x_{\text{start}}, x_{\text{end}}]: \quad \tilde{k}(x) \le k(x).
\]
We first solve for $a$ and $c$ in terms of $b$, and subsequently derive the constraints that $b$ must satisfy.

    \begin{enumerate}
        \item \textbf{Solving for $a$ (as a function of $b$ ).}
        Setting $\tilde{k}\left(\xtL\right)=\alpha$ and $\tilde{k}\left(\xtR\right)=\beta$ and subtracting gives
        $$
        \frac{\left(\xtR-a\right)^2}{b^2}-\beta=\frac{\left(\xtL-a\right)^2}{b^2}-\alpha \Longleftrightarrow \frac{\left(\xtR-a\right)^2-\left(\xtL-a\right)^2}{b^2}=\beta-\alpha .
        $$
        Using $v^2-w^2=(v-w)(v+w)$ the numerator becomes
        $$
        \left(\xtR-a\right)^2-\left(\xtL-a\right)^2=\left(\xtR-\xtL\right)\left(\xtL+\xtR-2 a\right) .
        $$
        Hence
        $$
        \left(\xtR-\xtL\right)\left(\xtL+\xtR-2 a\right)=(\beta-\alpha) b^2,
        $$
        so
        \begin{equation}
        	a=\frac{\xtL+\xtR}{2} - 
        	\frac{(\beta-\alpha) b^{2}}{2 (\xtR-\xtL)}. \label{eq:apdx:AS:assymetric_tilde_k_isolated_a}
        \end{equation}
        For compactness define the constants
        $$
        A_0:=\frac{\xtL+\xtR}{2}, \quad K:=\frac{\beta-\alpha}{2\left(\xtR-\xtL\right)},
        $$
        so that
        $$
        a=A_0-K b^2.
        $$
        (When $\alpha=\beta$ we obtain $a=A_0$, the same midpoint as we obtained before.)
        \item \textbf{Solving for $c$ (as function of $a, b$).}
        From $\tilde{k}\left(\xtL\right)=\alpha$ we get
        $$
        \frac{\left(\xtL-a\right)^2}{b^2}-c=\alpha \quad \Longrightarrow \quad c=\frac{\left(\xtL-a\right)^2}{b^2}-\alpha .
        $$

        Thus, once $b$ is known, we can compute $a$ and then $c$.
        \item \textbf{Constraint on $b$ from $\tilde{k}(x) \leq k(x).$}
        Write $\tilde{k}(x)$ using the expression for $c$ :
        \begin{align*}
            \tilde{k}(x)
            &= \frac{(x-a)^2}{b^2}-\left(\frac{(\xtL-a)^2}{b^2}-\alpha\right)
            = \frac{(x-a)^2-(\xtL-a)^2}{b^2}+\alpha.
        \end{align*}
        Denote
        $$
        N(x):=(x-a)^2-\left(\xtL-a\right)^2 .
        $$
        Factor $N(x)$ as
        $$
        N(x)=\left(x-\xtL\right)\left(x+\xtL-2 a\right) .
        $$
        Substitute $a=A_0-K b^2$. Since $2 A_0=\xtL+\xtR$ we have
        $$
        x+\xtL-2 a=x+\xtL-2 A_0+2 K b^2=x-\xtR+2 K b^2
        $$
        and therefore
        $$
        N(x)=\left(x-\xtL\right)\left(x-\xtR\right)+2 K b^2\left(x-\xtL\right) .
        $$
        The requirement $\tilde{k}(x) \leq k(x)$ becomes
        $$
        \frac{N(x)}{b^2}+\alpha \leq k(x) \quad \Longleftrightarrow \quad N(x) \leq b^2(k(x)-\alpha) .
        $$
        Plugging the expression for $N(x)$ gives
        $$
        \left(x-\xtL\right)\left(x-\xtR\right)+2 K b^2\left(x-\xtL\right) \leq b^2(k(x)-\alpha),
        $$
        or equivalently
        $$
        b^2\left(k(x)-\alpha-2 K\left(x-\xtL\right)\right) \geq\left(x-\xtL\right)\left(x-\xtR\right) .
        $$
        Define
        $$
        D(x):=k(x)-\alpha-2 K\left(x-\xtL\right) .
        $$
        Then for each $x$ the inequality reads
        $$
        b^2 D(x) \geq\left(x-\xtL\right)\left(x-\xtR\right)
        $$

        Now consider three cases depending on the sign of $D(x)$ :
        \begin{itemize}
            \item If $D(x)=0$, the constraint reduces to $0 \geq\left(x-\xtL\right)\left(x-\xtR\right)$, which must hold for that $x$.
            \item If $D(x)>0$, dividing by $D(x)$ results in
            $$
            b^2 \geq \frac{\left(x-\xtL\right)\left(x-\xtR\right)}{D(x)}
            $$
            \item If $D(x)<0$, dividing by the negative $D(x)$ flips the inequality:
            $$
            b^2 \leq \frac{\left(x-\xtL\right)\left(x-\xtR\right)}{D(x)}
            $$
        \end{itemize}
        To satisfy $\tilde{k}(x) \leq k(x)$ for all $x \in\left[x_{\text {start }}, x_{\text {end }}\right]$ we therefore need to find $b$ such that the above inequalities hold simultaneously for every $x$ in the domain. Equivalently, letting
       \begin{align*}
            \underline{R}(x):=\frac{\left(x-\xtL\right)\left(x-\xtR\right)}{D(x)} \quad \text { for } D(x)>0, \\ 
            \overline{R}(x):=\frac{\left(x-\xtL\right)\left(x-\xtR\right)}{D(x)} \quad \text { for } D(x)<0,
       \end{align*}
        a feasible $b^2$ must satisfy
        $$
        b^2 \geq \sup _{x: D(x)>0} \underline{R}(x) \quad \text { and } \quad b^2 \leq \inf _{x: D(x)<0} \overline{R}(x)
        $$
        and additionally all $x$ with $D(x)=0$ must satisfy $\left(x-\xtL\right)\left(x-\xtR\right) \leq 0$. 
        Hence, our $k_\text{ext}$ construction instead, that works for any $p$.
    \end{enumerate}

\paragraph{Discussion on higher-degree polynomials}
When $p > 2$, however, the problem is more challenging and we are unaware on how to solve this problem in  these settings. For instance, in Eq.~\eqref{eq:apdx:AS:assymetric_tilde_k_isolated_a}, $a$ was isolated by explicitly exploiting that $p = 2$, and for larger $p$ this isolation of $a$ in terms of $b$ is no longer evident.

\subsection{A bound on the Between-Grid Violation of $\tilde k$}
\label{apdx:AS:a_bound_on_the_gridding_err}
This section belongs with the $\ell_p$-ellipsoid construction because it turns the grid-based fit into a guaranteed outer approximation. We first derive a local between-grid bound when a global minimizer lies on the grid, then generalize it to the practical case where only a lower bound $\phi$ on the minimum is known, and finally derive the safe construction used for $\BellipS$.

\subsubsection{Setup and Assumptions}
Let $k:[x_{\mathrm{start}},x_{\mathrm{end}}]\to\mathbb{R}$ and let
$x_1<\dots<x_n$ be a uniform grid with spacing $h$:
\[
x_i=x_{\mathrm{start}}+(i-1)h,\qquad h=\frac{x_{\mathrm{end}}-x_{\mathrm{start}}}{n-1}.
\]
Let $\tilde{k}(x)=\frac{|x-a|^p}{b^p}-c$ with $a\in\mathbb{R}$, $b > 0$, $c\in\mathbb{R}$ and integer $p\ge 1$.
Assume:
\begin{enumerate}
\item[\textbf{(A1)}] $k$ is quasiconvex and continuous on $[x_{\mathrm{start}},x_{\mathrm{end}}]$ (follows from \Cref{lem:AS:continuity_k}).
\item[\textbf{(A2)}] $\tilde{k}$ is quasiconvex (\Cref{lem:AS:qconvex_tilde}).
\item[\textbf{(A3)}] Grid lower bound: $\tilde{k}(x_i)\le k(x_i)$ for all $i\in\{1,\dots,n\}$.
\end{enumerate}
Define the pointwise error $\epsilon(x):=\tilde{k}(x)-k(x)$.

\subsubsection{Error Bound when a Global Minimizer Lies on the Grid}
\begin{assumption}\label{ass:grid_minimizer}
Assume there exists a global minimizer $x^\star \in [\xstart, \xend]$ that corresponds with a grid point $x_m$.
\end{assumption}

\begin{theorem}[Local interval bound]\label{thm:local_bound}
Under \textbf{(A1)--(A3)} and \Cref{ass:grid_minimizer}, for any $i\in\{1,\dots,n-1\}$ and any $x\in[x_i,x_{i+1}]$,
\[
\epsilon(x)\le
\max\{\tilde{k}(x_i),\tilde{k}(x_{i+1})\}
-
\begin{cases}
k(x_{i+1}), & i < m,\\
k(x_i), & i \ge m.
\end{cases}
\]
\end{theorem}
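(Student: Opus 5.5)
The bound splits into two separate estimates: an upper bound on $\tilde k$ over the cell $[x_i,x_{i+1}]$ and a lower bound on $k$ over the same cell. Then we subtract. Since $\epsilon(x)=\tilde k(x)-k(x)$, it suffices to show, for every $x\in[x_i,x_{i+1}]$,
\[
\tilde k(x)\le \max\{\tilde k(x_i),\tilde k(x_{i+1})\}
\quad\text{and}\quad
k(x)\ge
\begin{cases}
k(x_{i+1}), & i<m,\\
k(x_i), & i\ge m.
\end{cases}
\]

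\textbf{Upper bound on $\tilde k$.} This follows from quasiconvexity (A2, \Cref{lem:AS:qconvex_tilde}). Any $x\in[x_i,x_{i+1}]$ is a convex combination of the two endpoints. The sublevel set $\{z:\tilde k(z)\le \max\{\tilde k(x_i),\tilde k(x_{i+1})\}\}$ is convex and contains both endpoints, so it contains $x$. This step needs neither (A3) nor the grid-minimizer assumption.

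\textbf{Lower bound on $k$.} Here I use (A1) together with \Cref{ass:grid_minimizer}, which gives a global minimizer $x^\star=x_m$. The key fact is that a quasiconvex function is non-increasing on $[x_{\mathrm{start}},x^\star]$ and non-decreasing on $[x^\star,x_{\mathrm{end}}]$. To see the first claim, take $y<z\le x^\star$. Then $z$ lies between $y$ and $x^\star$, so quasiconvexity gives $k(z)\le\max\{k(y),k(x^\star)\}=k(y)$, because $k(x^\star)\le k(y)$. The second claim is symmetric.

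Now split on the position of the cell relative to $m$.
\begin{itemize}
\item If $i<m$, then $x_{i+1}\le x_m$, so the whole cell lies left of the minimizer. Since $k$ is non-increasing there, $k(x)\ge k(x_{i+1})$.
\item If $i\ge m$, then $x_i\ge x_m$, so the cell lies right of the minimizer. Since $k$ is non-decreasing there, $k(x)\ge k(x_i)$.
\end{itemize}
Because the minimizer is a grid point, no cell straddles it, so these two cases are exhaustive. This is exactly where \Cref{ass:grid_minimizer} is needed.

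\textbf{Combining.} Subtracting the lower bound on $k$ from the upper bound on $\tilde k$ gives the stated inequality for $\epsilon(x)$. The only step that needs care is the monotonicity of $k$ on each side of $x^\star$, and it is short. Continuity from (A1) is not really needed beyond guaranteeing that the minimizer exists, and \Cref{ass:grid_minimizer} already supplies one. I expect no serious obstacle.
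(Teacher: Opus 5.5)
Your proposal is correct and follows essentially the same route as the paper: you use monotonicity of $k$ on either side of the grid minimizer $x_m$ to get $k(x)\ge k(x_{i+1})$ (for $i<m$) or $k(x)\ge k(x_i)$ (for $i\ge m$), and quasiconvexity of $\tilde k$ to get $\tilde k(x)\le\max\{\tilde k(x_i),\tilde k(x_{i+1})\}$, then subtract. The only difference is that you explicitly derive the one-sided monotonicity of $k$ from quasiconvexity, which the paper simply asserts.
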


\begin{proof}
Fix an interval $I=[x_i,x_{i+1}]$ and an arbitrary $x\in I$. Since $x^\star=x_m$ is a global minimizer and $k$ is quasiconvex, $k$ is non-increasing on $[x_{\mathrm{start}},x_m]$ and non-decreasing on $[x_m,x_{\mathrm{end}}]$.
We distinguish two cases based on the grid index $m$.

\begin{enumerate}
\item \textbf{Case A: $i\ge m$ (interval to the right of $x_m$).}
Then $I\subseteq[x_m,x_{\mathrm{end}}]$ and $k$ is non-decreasing on $I$, hence $k(x)\ge k(x_i)$. Therefore,
\begin{align*}
\epsilon(x)
&= \tilde k(x) - k(x) \\
&\le \tilde k(x) - k(x_i) \quad\text{since } k(x)\ge k(x_i)\text{ on }I\\
&\le \max\{\tilde k(x_i), \tilde k(x_{i+1})\} - k(x_i)
\quad\text{since }\tilde k\text{ is quasiconvex.}
\end{align*}

\item \textbf{Case B: $i<m$ (interval to the left of $x_m$).}
Then $I\subseteq[x_{\mathrm{start}},x_m]$ and $k$ is non-increasing on $I$, hence $k(x)\ge k(x_{i+1})$. Thus, analogously
\[
\epsilon(x)\le \tilde k(x)-k(x_{i+1})
\le \max\{\tilde k(x_i),\tilde k(x_{i+1})\}-k(x_{i+1}).
\]
\end{enumerate}

This proves the bound with $k(x_i)$ for $i\ge m$ and $k(x_{i+1})$ for $i<m$.
\end{proof}

\subsubsection{When the Minimizer is Not a Grid Point}
The bound in \Cref{thm:local_bound} assumes that a global minimizer $x^{\star}$ corresponds to a grid point. If $x^{\star}$ is unknown, but a global lower bound $\phi$ on $k$ is available (i.e.\ $\forall x:\ k(x)\ge\phi$), we can still obtain a valid upper bound on $\epsilon(x)$. In practice, this lower bound comes directly from the global-minimum procedure in \Cref{apdx:AS:global_minimum_bound_k}, by setting $\phi \gets \underline{k_\star}$.

\begin{theorem}[Bound using a global lower bound]\label{thm:phi_bound}
Assume \textbf{(A1)--(A3)} and that a global lower bound $\phi$ is known such that $k(x)\ge \phi$ for all $x\in[x_{\mathrm{start}},x_{\mathrm{end}}]$.
Let $m := \arg\min_i k(x_i)$ be any index achieving the grid minimum.
Let $l<m$ be the largest index with $k(x_l)>k(x_m)$ (if it exists) and $r>m$ the smallest index with $k(x_r)>k(x_m)$ (if it exists).
Then every minimizer $x^\star$ lies in $(x_l,x_r)$ whenever both $l$ and $r$ exist.
Moreover, for any interval $[x_i,x_{i+1}]$,
\[
\epsilon(x)\le
\max\{\tilde{k}(x_i),\tilde{k}(x_{i+1})\}
-
L_i
\qquad\forall x\in[x_i,x_{i+1}],
\]
where
\[
L_i :=
\begin{cases}
k(x_{i+1}), & i \le l-1,\\[2pt]
k(x_i),     & i \ge r,\\[2pt]
\phi,       & l \le i \le r-1.
\end{cases}
\]
\end{theorem}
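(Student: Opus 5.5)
The plan is to mirror the proof of \Cref{thm:local_bound}, replacing the unknown minimizer location with the bracket $(x_l,x_r)$ and using $\phi$ on the intervals where the monotonicity direction of $k$ is unknown.

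\textbf{Step 1: localization of minimizers.} Since $k$ is continuous and quasiconvex (A1), the set of global minimizers $M$ is a nonempty closed interval. Moreover, $k$ is non-increasing on $[\xstart,\min M]$ and non-decreasing on $[\max M,\xend]$. Suppose some minimizer $x^\star$ satisfies $x^\star\le x_l$. Then $x_l$ lies between $x^\star$ and $x_m$, so quasiconvexity gives
\[
k(x_l)\le\max\{k(x^\star),k(x_m)\}=k(x_m),
\]
because $k(x^\star)\le k(x_m)$. This contradicts $k(x_l)>k(x_m)$. The symmetric argument rules out $x^\star\ge x_r$, so every minimizer lies in $(x_l,x_r)$.

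\textbf{Step 2: monotonicity on the outer intervals.} Take $i\le l-1$, so that $x_{i+1}\le x_l<\min M$. Then $[x_i,x_{i+1}]\subseteq[\xstart,\min M]$, where $k$ is non-increasing. Hence $k(x)\ge k(x_{i+1})$ for all $x$ in the interval. Symmetrically, for $i\ge r$ we have $x_i\ge x_r>\max M$, so $k$ is non-decreasing there and $k(x)\ge k(x_i)$.

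For the middle range $l\le i\le r-1$, the interval may contain minimizers, so we only use the global bound $k(x)\ge\phi$.

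\textbf{Step 3: bounding $\tilde k$.} By (A2), $\tilde k$ is quasiconvex, so $\tilde k(x)\le\max\{\tilde k(x_i),\tilde k(x_{i+1})\}$ on $[x_i,x_{i+1}]$. Subtracting the lower bound $k(x)\ge L_i$ from Step 2 gives the claimed bound on $\epsilon(x)$.

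\textbf{Main obstacle.} The main difficulty is handling the cases where $l$ or $r$ does not exist, and checking that the three cases in $L_i$ cover every interval. I would adopt the natural conventions. If $l$ does not exist, then all $k(x_j)$ with $j<m$ equal $k(x_m)$; set $l:=1$, so the first case is empty and the $\phi$-case extends to the left boundary. If $r$ does not exist, set $r:=n$ analogously. Using $\phi$ on these extended ranges is always valid, since $k\ge\phi$ holds everywhere. One more check is needed: a minimizer equal to a grid point $x_l$ would contradict $k(x_l)>k(x_m)\ge k(x^\star)$, so the strict inequalities in Step 2 really do hold.
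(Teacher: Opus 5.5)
Your proposal is correct and follows the paper's proof essentially step for step. You localize the minimizers by applying quasiconvexity to $x_m$ and $x^\star$, use the resulting monotonicity of $k$ on the outer intervals (where the paper simply reuses the argument of \Cref{thm:local_bound}), and fall back to the global bound $\phi$ in the middle range; your explicit conventions for a missing $l$ or $r$ fill in a case the paper leaves implicit.
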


\begin{proof}
By construction, we have $x_l<x_m<x_r$ and $k(x_m)<k(x_l),k(x_r)$. The quasiconvexity of $k(\cdot)$ then implies that all true minimizers $x^{\star}$ must lie in the open interval $(x_l,x_r)$: otherwise, if $x^{\star}$ were to the right of $x_r$, applying the quasiconvexity inequality (that for any $x,y$ and any $\lambda \in [0, 1]$, one has $k(\lambda x + (1-\lambda)y) \le \max \{ k(x), k(y) \}$) to $x_m$ and $x^{\star}$ and choosing the convex combination that gives $x_r$ would force $k(x_r)\le\max\{k(x_m),k(x^{\star})\}$, which is a contradiction.
The argument for $x^{\star}$ being left of $x_l$ is analogous.

Consequently, for left-side intervals: $[x_1, x_2], [x_2, x_3], \dots, [x_{l-1}, x_l]$,  the function $k$ is non-increasing, and for the right-side intervals, $[x_r, x_{r+1}], \dots$, it is non-decreasing, so the local bound derived in \Cref{thm:local_bound} applies there. This addresses the cases for $L_i$ where $i \le l-1$ or $i \ge r$.
For intervals with $l \le i \le r-1$ (i.e., inside $[x_l, x_r]$), the monotonicity assumption fails, but we can use the global lower bound $\phi$ to obtain a weaker bound: since $k(x)\ge\phi$ for all $x$,
\[
\epsilon(x)=\tilde k(x)-k(x) \le \tilde k(x)-\phi,
\]
and consequently for an interval $[x_i,x_{i+1}]\subset(x_l,x_r)$ we may bound
\[
\epsilon(x)\le \max\{\tilde k(x_i),\tilde k(x_{i+1})\}-\phi.
\]
Hence, the case for $L_i$ with $l \le i \le r-1$ is also addressed. This concludes the proof.

\end{proof}

\begin{corollary}[Global bound]\label{cor:global_bound}
Under the assumptions of \Cref{thm:phi_bound},
\[
\epsilon_{\max}:=\max_{x\in[x_{\mathrm{start}},x_{\mathrm{end}}]}\epsilon(x)
\le
\max_{i=1,\dots,n-1}
\left(
\max\{\tilde{k}(x_i),\tilde{k}(x_{i+1})\}
-
L_i
\right).
\]
\end{corollary}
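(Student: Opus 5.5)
The corollary follows directly from \Cref{thm:phi_bound}, by covering the whole domain with the grid cells. Since the grid is uniform with $x_1=\xstart$ and $x_n=\xend$, we have
\[
[\xstart,\xend]=\bigcup_{i=1}^{n-1}[x_i,x_{i+1}].
\]
So every $x\in[\xstart,\xend]$ lies in at least one cell $[x_i,x_{i+1}]$. For that index $i$, \Cref{thm:phi_bound} gives
\[
\epsilon(x)\le \max\{\tilde k(x_i),\tilde k(x_{i+1})\}-L_i,
\]
and the right-hand side is at most the maximum of the same expression over all $i\in\{1,\dots,n-1\}$.

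Taking the supremum over $x$ then gives the claim. This supremum is attained, so it is a genuine maximum. The reason is that $\epsilon=\tilde k-k$ is continuous: $\tilde k$ is continuous, and $k$ is continuous by \textbf{(A1)}/\Cref{lem:AS:continuity_k}. A continuous function on a compact interval attains its maximum. Since the index set is finite, the right-hand side is also a well-defined finite maximum.

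Before this works, I would check that the case split defining $L_i$ covers every $i\in\{1,\dots,n-1\}$. This needs care when $l$ or $r$ does not exist.
\begin{itemize}
\item If $l$ does not exist, the first case is vacuous, and the third case should be read as $1\le i\le r-1$. Intervals to the left of $x_m$ then only ever receive the bound $\phi$, which is still valid because $k\ge\phi$ everywhere.
\item If $r$ does not exist, the second case is vacuous and the third case should be read as $l\le i\le n-1$, again using $\phi$.
\item If neither exists, $L_i=\phi$ for all $i$, and the bound reduces to the trivial one $\epsilon(x)\le \tilde k(x)-\phi\le \max\{\tilde k(x_i),\tilde k(x_{i+1})\}-\phi$, using the quasiconvexity of $\tilde k$ (\Cref{lem:AS:qconvex_tilde}).
\end{itemize}

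I expect this bookkeeping of the degenerate index ranges to be the only real obstacle; the rest is a one-line union-and-maximum argument. I would handle it with a brief remark that in each degenerate case the relevant per-interval bound is still supplied by the $\phi$-argument in the proof of \Cref{thm:phi_bound}, which only uses $k\ge\phi$ and quasiconvexity of $\tilde k$.
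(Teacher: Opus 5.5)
Your proof is correct and follows the paper's argument. The paper gives no separate proof and treats the corollary as an immediate consequence of applying the per-interval bound of \Cref{thm:phi_bound} on each cell $[x_i,x_{i+1}]$ covering $[\xstart,\xend]$, then maximizing over the finitely many $i$. Your remarks on why the maximum over $x$ is attained (continuity from \textbf{(A1)}) and on how to read the index ranges for $L_i$ when $l$ or $r$ does not exist are also correct, and they make explicit details the paper leaves implicit.
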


This bound is easily computable from the grid evaluations and allows verifying whether $\tilde k \le k$ holds everywhere. It also suggests an adaptive refinement scheme: whenever the bound at an interval exceeds a prescribed tolerance, subdivide that interval by adding grid points and recomputing $\tilde k$ on the refined grid.

\subsubsection{A Conservative Construction with Zero Between-Grid Violation}
\begin{proposition}[One-step shifted safe values]\label{prop:safe_shift}
Assume \textbf{(A1)--(A3)} and that a global lower bound $\phi$ is known such that $k(x)\ge \phi$ for all $x\in[x_{\mathrm{start}},x_{\mathrm{end}}]$.
Define modified grid values
\[
k_i^{\mathrm{s}}:=
\begin{cases}
k(x_{i+1}), & i< l,\\
\phi, & l\le i \le r,\\
k(x_{i-1}), & i>r,
\end{cases}
\]
(with $k_1^{\mathrm{s}}:=k(x_2)$ if $i>1$ and $k_n^{\mathrm{s}}:=k(x_{n-1})$ if $r<n$).
If $\tilde{k}$ is constructed to satisfy $\tilde{k}(x_i)\le k_i^{\mathrm{s}}$ for all $i$, then the bound on $\epsilon_{\max}$ from \Cref{cor:global_bound} is zero or less:
\[
\max\{\tilde{k}(x_i),\tilde{k}(x_{i+1})\} - L_i
\le 0
\qquad\forall i,
\]
and therefore $\epsilon_{\max}\le 0$.
\end{proposition}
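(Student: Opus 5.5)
The plan is a direct case analysis over the interval index $i$ in \Cref{cor:global_bound}. For each $i$ we show that both endpoint values $\tilde k(x_i)$ and $\tilde k(x_{i+1})$ are at most $L_i$ from \Cref{thm:phi_bound}. Then $\max\{\tilde k(x_i),\tilde k(x_{i+1})\}-L_i\le 0$, and the corollary gives $\epsilon_{\max}\le 0$. The only inputs are the hypothesis $\tilde k(x_j)\le k_j^{\mathrm{s}}$, the monotonicity of $k$ on the grid outside $(x_l,x_r)$ (established in the proof of \Cref{thm:phi_bound} from quasiconvexity), and the global bound $\phi\le k$.

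\textbf{Case $i\le l-1$.} Here $L_i=k(x_{i+1})$. Since $i<l$, we get $\tilde k(x_i)\le k_i^{\mathrm{s}}=k(x_{i+1})=L_i$ directly. For the right endpoint there are two subcases. If $i+1<l$, then $\tilde k(x_{i+1})\le k(x_{i+2})\le k(x_{i+1})$, because $k$ is non-increasing on $[x_{\mathrm{start}},x_l]$, which lies left of every minimizer. If $i+1=l$, then $\tilde k(x_l)\le\phi\le k(x_l)=L_i$.

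\textbf{Case $i\ge r$.} This is symmetric, with $L_i=k(x_i)$. For the right endpoint, $i+1>r$ gives $\tilde k(x_{i+1})\le k_{i+1}^{\mathrm{s}}=k(x_i)$. For the left endpoint, either $i>r$, in which case $\tilde k(x_i)\le k(x_{i-1})\le k(x_i)$ by monotonicity on $[x_r,x_{\mathrm{end}}]$, or $i=r$, in which case $\tilde k(x_r)\le\phi\le k(x_r)$.

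\textbf{Case $l\le i\le r-1$.} Here $L_i=\phi$. Both $i$ and $i+1$ lie in $\{l,\dots,r\}$, so $k_i^{\mathrm{s}}=k_{i+1}^{\mathrm{s}}=\phi$ and both endpoint values are at most $\phi$.

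The main obstacle is bookkeeping at the boundaries rather than any real estimate. First, one must justify the monotonicity on the closed stretches $[x_1,x_l]$ and $[x_r,x_n]$. This follows because every minimizer lies in $(x_l,x_r)$, and a continuous quasiconvex function is non-increasing to the left of a minimizer and non-decreasing to its right. Second, one must handle the degenerate situations where $l$ or $r$ does not exist. In that case we take $l:=1$ or $r:=n$, so the middle band extends to the boundary and $L_i=\phi$ there. The hypothesis still gives $\tilde k\le\phi$ at those nodes, and the end conventions $k_1^{\mathrm{s}}=k(x_2)$ and $k_n^{\mathrm{s}}=k(x_{n-1})$ fit the general formula consistently, since they are just its instances at $i=1<l$ and $i=n>r$.
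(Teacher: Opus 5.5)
Your proof is correct and follows the same three-case analysis over $i$ as the paper. In each case it uses $\tilde k(x_j)\le k_j^{\mathrm{s}}$, the monotonicity of $k$ outside $(x_l,x_r)$, and $\phi\le k$. Your separate treatment of the boundary indices $i=r$ and $i+1=l$ is more careful than the paper's: the paper asserts $k_r^{\mathrm{s}}=k(x_{r-1})$, whereas the definition gives $k_r^{\mathrm{s}}=\phi$, though the conclusion is unaffected since $\phi\le k(x_r)$.
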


\begin{proof}
We treat the cases:
\begin{enumerate}
    \item If $i\ge r$, then the bound from \Cref{cor:global_bound} is given by
    $$
    \epsilon(x) \le \max\{\tilde k(x_i), \tilde k(x_{i+1})\} - k(x_i).
    $$
    Then, since by construction $\tilde k(x_i) \le k_i^s$ and $\tilde k(x_{i+1}) \le k_{i+1}^s$, we get:
    \begin{align*}
        \max\{\tilde k(x_i), \tilde k(x_{i+1})\} - k(x_i)
        & \le  \max \{ k_i^s, k_{i+1}^s\} - k(x_i).
        \end{align*}
        And since $k_i^{\mathrm{s}}=k(x_{i-1})$ and $k_{i+1}^{\mathrm{s}}=k(x_i)$, we obtain
        \begin{align*}
            \max \{ k_i^s, k_{i+1}^s\} - k(x_i) & = \max \{ k(x_{i-1}), k(x_{i})\} - k(x_i) \quad \text{By definition of } k^s  \\
            &= 0 \quad \text{Since } k \text{ is non-decreasing on } [x_r,\xend]
        \end{align*}
    To understand why $k$ is non-decreasing on $[x_r, \xend]$, we refer to \Cref{thm:phi_bound}, first sentence in paragraph two of the proof. Thus, we have that the per-interval bound is $\le 0$ for $i\ge r$.

    \item If $i\le l-1$, then the proof is analogous to 1) by using the non-increasing property of $k$ on the left $[\xstart, x_l]$.
    \item If $l \le i \le r - 1$, then the bound is given by
    \begin{align*}
        \epsilon(x)
        & \le \max\{\tilde k(x_i), \tilde k(x_{i+1})\} - \phi \\
        & \le \max\{k_i^s, k_{i+1}^s\} - \phi \quad \text{since by construction $\tilde k(x_i) \le k_i^s$ and $\tilde k(x_{i+1}) \le k_{i+1}^s$} \\
        &= \max \{\phi, \phi \} - \phi = 0.
    \end{align*}
    Making the per-interval bound $l \le i \le r - 1$.
\end{enumerate}
\end{proof}

This construction can thus be viewed as shrinking the feasible set of $\tilde k$ to account for the worst-case between-grid behavior under the quasiconvexity assumptions of $k$.

\section*{Part III: Experiments}
\addcontentsline{toc}{part}{Part III: Experiments}
\section{Experimental Setup \& Additional Experiments} \label{apdx:AS:extra_experimetns}

\subsection{Setup} \label{apdx:AS:experimental_setup}
\subsubsection{Data and various experiment parameters.}
We consider observations generated from two multivariate distributions. The first consists of i.i.d.\ samples drawn uniformly from the unit ball, so that each observation satisfies $\lVert \vect{y} \rVert \le 1$. This setting deliberately favors confidence-sphere baselines, which explicitly assume a global norm bound of this form, whereas our method only requires coordinate-wise bounds $y_d \in [0,1]$. The second distribution is a Bernoulli mixture: half of the coordinates follow $y_d\sim\mathrm{Bernoulli}(0.5)$, while the remaining coordinates are shifted and rescaled to lie in $[0.5,1]$, again with success probability $0.5$. In this case, Bounded UP and our constructions are advantaged, as the spherical baselines require rescaling the data to satisfy the $\lVert\vect{y}\rVert \le 1$ constraint. For the experiments in~\Cref{AS:apdx:ablation_alpha_corelation}, we additionally consider a highly correlated distribution with correlation parameter $\rho=0.98$. Each coordinate is uniform on $[0,1]$, but the coordinates within an observation are nearly equal, producing a narrow cloud along the main diagonal.

Across all experiments, methods assuming a norm bound $\lVert \vect{y} \rVert \le 1$ (i.e., Banach Sphere and Conf Sphere) receive data that are appropriately shifted and scaled to lie within the unit ball, whereas for our method and Bounded UP, the data are scaled to lie in $[0,1]^D$. Additional details, such as method parameters and how volume is computed, are summarized in \Cref{apdx:AS:experimental_setup}.

Generally, we set the number of observations to 1,000, unless stated otherwise. For the volume-versus-time experiments in the main text, we fix the dimension to $D = 5$. For the small-scale experiments involving Bounded UP, we use $D = 2$. All portfolio weights and Bonferroni weights are chosen uniformly as $w_d = 1/D$. Empirical per-time-step coverage is evaluated using $200$ independent repetitions for each configuration. The error level $\alpha$ is set to 0.05 across all experiments.

\subsubsection{Hedged Capital Process and Capital Stakes} \label{apdx:AS:hedged_capital_stakes}
Following the discussion from \Cref{sec:AS:one_dim_conf_seq}, we summarize the authors' concrete suggestions for the stakes ($\lambda_i^+(\cdot)$ and $\lambda_i^-(\cdot)$), making them adaptive to the variance of the observed data:
\[
\widetilde\lambda_t
\;=\;
\sqrt{\frac{2\log(2/\alpha)}{\widehat{\sigma}_{t-1}^2\,t\,\log(t+1)}} ,
\]
where
\[
\widehat{\mu}_{t}
=\frac{\frac12 + \sum_{i=1}^{t}y_i}{t+1},
\quad
\widehat{\sigma}_{t}^2
=\frac{\frac14 + \sum_{i=1}^{t}(y_i-\widehat\mu_{i})^2}{t+1}.
\]
Finally, the stakes are then defined as:
\begin{equation}
    \lambda_t^+(m)
=\min\Bigl(\widetilde\lambda_t,\;\frac{c}{m}\Bigr),
\,
\lambda_t^-(m)
=\min\Bigl(\widetilde\lambda_t,\;\frac{c}{{1-m}}\Bigr), \label{eq:AS:apdx:stakes_min}
\end{equation}
where $c$ is a constant ($c=1/2$ or $3/4$). In our experiments, we set $c = 3/4$ and its parameter $\theta$ in Eq.~\eqref{Eq:AS:k_t_hedged_capital} to $1/2$.

The term stake is inspired by $k_t^\pm$'s interpretation as a \emph{betting process}. $k_t^{+}(m)$ represents the accumulated wealth after $t$ sequential bets on the hypothesis $\mu\ge m$. At round $i$, the bettor risks a fraction $\lambda_i^{+}(m)$ of their current wealth, resulting in the multiplicative update $k_i^{+}(m)=k_{i-1}^{+}(m)\left(1+\lambda_i^{+}(m)\left(y_i-m\right)\right) .$
Here, $\lambda_i^{+}(m) \in[0,1 / m)$ is the ``stake'', chosen based on the past $i-1$ observations. An analogous interpretation holds for $k_t^{-}(m)$, with stakes $\lambda_i^{-}(m) \in[0,1 /(1-m))$, corresponding to bets on $\mu<m$. This interpretation motivates the term \emph{betting-based} CS.

\subsubsection{Ellipsoid construction.}
The $\ell_p$-ellipsoid construction uses a fixed one-dimensional grid of size $N=2001$ for each coordinate. $N$ remains fixed independent of the interval width $[\xstart, \xend]$. The exponent $p$ is selected from the candidate set $\{6, \dots, 16\}$. The touching parameter is set to $\gamma_d = 1 / (\alpha \sqrt[p]{D})$.

\subsubsection{Volume estimation.}
When the volume of a region is not available in closed form, for example, for ellipsoid- or sphere-based regions intersecting the domain $[0,1]^D$, or for $\Bavg$, $\Bbboxellip$ and Bounded UP, we resort to Monte Carlo estimation. Specifically, volumes are estimated using $5$ independent runs of $25{,}000$ samples each. We report the mean volume estimate, and visualize uncertainty via a transparent overlay corresponding to a $95\%$ standard-deviation band.

Whenever possible, sampling is restricted to a tight outer bounding box. For $\Bavg$ and $\Bbboxellip$, we sample uniformly from the bounding box defined by $\Bbbox$. For $\Bellip$ and the sphere-based baselines, we use the minimal axis-aligned bounding box enclosing the ellipsoid. For Bounded UP, we are not aware of a tighter sampling region, and therefore sample uniformly from the full domain $[0,1]^D$.

Note that as the number of dimensions increases, the accuracy of the Monte Carlo volume estimate decreases, requiring more samples to compensate. We analyze this accuracy both theoretically and empirically in Ablation~\Cref{apdx:AS:ablation_monte_carlo_volume_estimation_accuracy}.

\subsubsection{Compute resources.} \label{apdx:AS:compute_resources}
All experiments were run on a single laptop, without GPU acceleration or specialized hardware. The machine has an Intel Core i7-13650HX processor and 16\,GB of RAM. We used this same environment for all reported runs in the paper and appendix.

\subsubsection{Evaluation time.}
\Cref{tab:AS:evaluation_time} reports mean wall-clock times for method construction and volume evaluation at $t=100$. $\Bavg$ and Bounded UP are not included due to their exponential computational cost.
But Bounded UP would require the same exponential number of calls as $\Bavg$ for its respective wealth function with the difference that each evaluation is much more costly in Bounded UP. 

\begin{table*}
    \centering
    \caption{Mean wall-clock evaluation time in seconds of computing the volume at $t=100$.}
    \label{tab:AS:evaluation_time}
    \scriptsize
    \setlength{\tabcolsep}{2.5pt}
    \renewcommand{\arraystretch}{0.95}
    \resizebox{\textwidth}{!}{%
    \begin{tabular}{c cccc ccc}
        \toprule
        & \multicolumn{4}{c}{Hedged-capital methods} & \multicolumn{3}{c}{External baselines} \\
        \cmidrule(lr){2-5}\cmidrule(lr){6-8}
        $D$ & $\Bbbox$ & $\Bbonf$ & $\BellipS$ & $\BbboxellipS$ & Banach Sphere & Conf Sphere & Sub-$\psi$ \\
        \midrule
        2  & 0.006036 & 0.006204 & 1.613209 & 1.667916 & 0.000830 & 0.000841 & 0.000812 \\
        3  & 0.009015 & 0.008886 & 2.383790 & 2.418203 & 0.000746 & 0.000908 & 0.000954 \\
        4  & 0.011913 & 0.011997 & 3.183447 & 3.186238 & 0.000731 & 0.000789 & 0.000742 \\
        5  & 0.015119 & 0.015322 & 4.041727 & 4.063602 & 0.000732 & 0.000796 & 0.000791 \\
        6  & 0.017873 & 0.017848 & 4.827297 & 4.798591 & 0.000723 & 0.008539 & 0.022119 \\
        7  & 0.020892 & 0.021427 & 5.656130 & 5.695633 & 0.000740 & 0.012678 & 0.071832 \\
        8  & 0.023990 & 0.023965 & 6.405785 & 6.409076 & 0.000774 & 0.013652 & 0.130039 \\
        9  & 0.027374 & 0.027274 & 7.164873 & 7.200621 & 0.000744 & 0.013114 & 0.144640 \\
        10 & 0.030424 & 0.031048 & 7.943246 & 7.972349 & 0.000745 & 0.013018 & 0.163916 \\
        \bottomrule
    \end{tabular}%
    }
\end{table*}
\FloatBarrier


\subsection{Small-Scale Experiments}
Because the Bounded UP CS's $O(t^{D+1})$ complexity of verifying membership is too costly to evaluate for larger settings (recall that we need to estimate its volume through Monte Carlo sampling, requiring another $5 \times 25{,}000$ samples at each $t$), we present a smaller-scale experiment to allow for comparison with this baseline as well.

\begin{figure*}[ht]
    \centering
    \includegraphics[width=\textwidth]{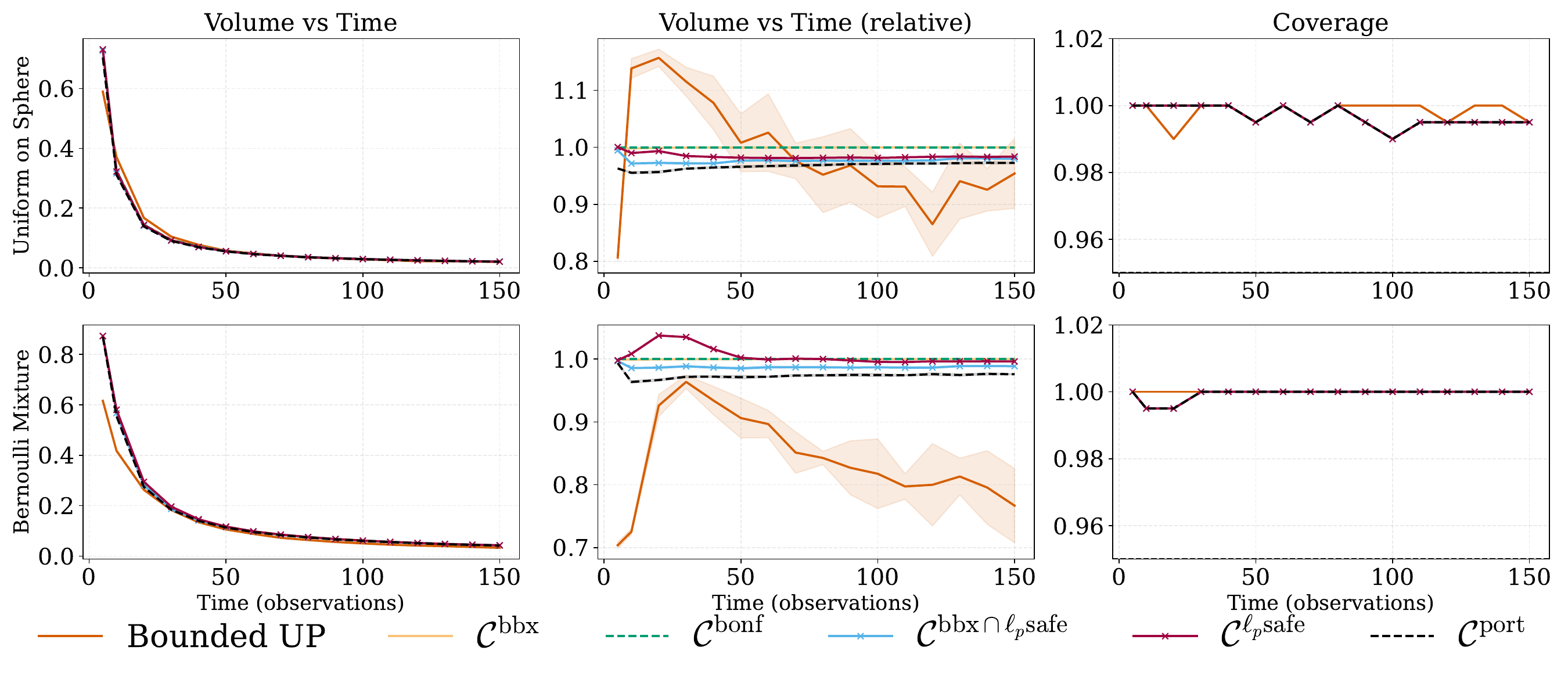}
    \caption{Small-scale comparison on synthetic data that includes Bounded UP.}
    \label{fig:AS:UP_small_scale}
\end{figure*}

\subsection{$\Bellip$ vs $\BellipS$}
\Cref{fig:AS:conservative_vs_non_conservative} compares the volumes obtained by the safe and non-safe variants, which perform remarkably similarly. The additional computational cost of the safe variant, measured in the number of $k^d_t(\cdot)$ calls, is quantified in \Cref{tab:AS:hedged_volume_comparison}.

\begin{figure*}[ht]
    \centering
    \includegraphics[width=\textwidth]{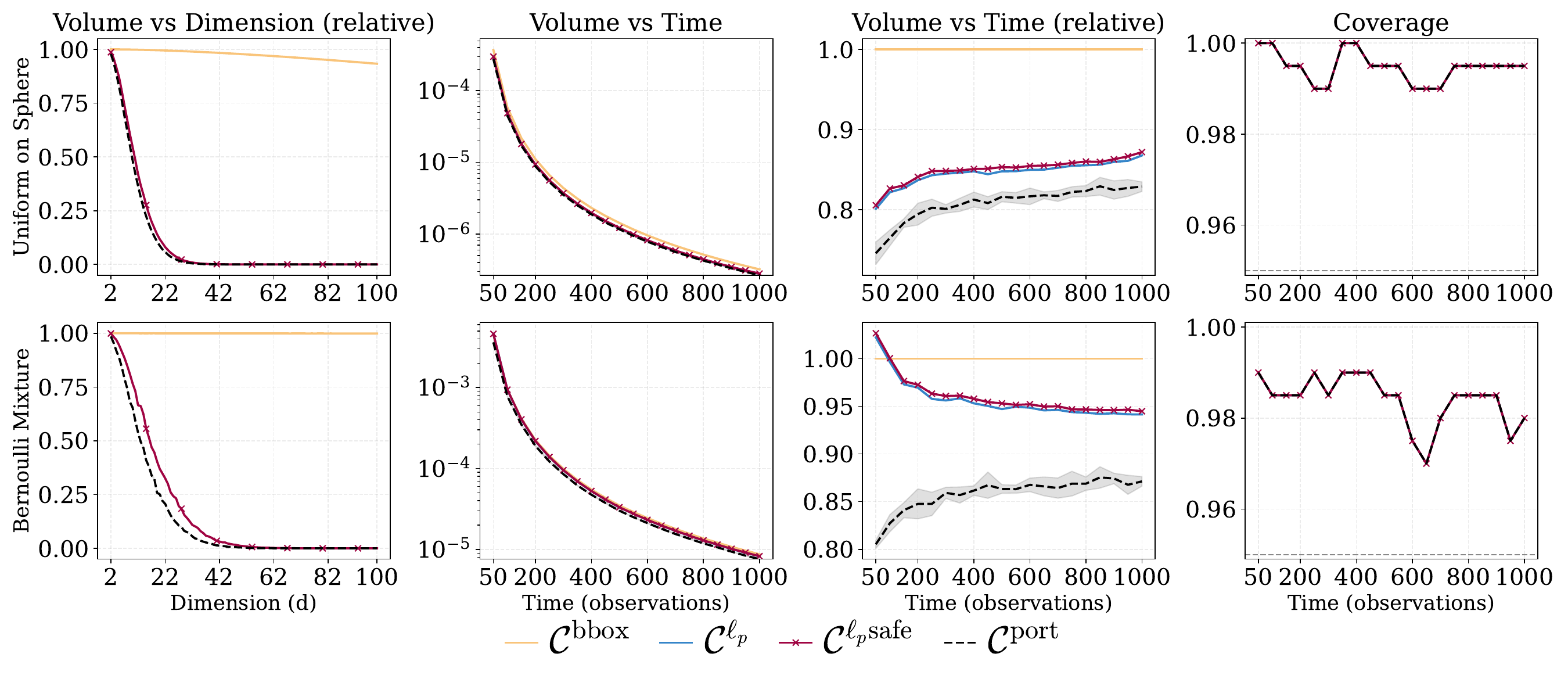}
    \caption{Comparison between the ``safe'' and non-safe $\ell_p$-ellipsoids.}
    \label{fig:AS:conservative_vs_non_conservative}
\end{figure*}

\begin{table}[t]
	\centering
	\caption{Safe vs non-safe approximations: number of $k^d_t(\cdot)$ calls to compute the volume at $t= 100$ (sorted from most to least efficient).}
	\label{tab:AS:hedged_volume_comparison}
	\scriptsize
	{%
		\begin{tabular}{c c c c c }
			\toprule
			$D   $
			& $\Bellip$
			& $\BellipS$
			& $\Bbboxellip$
			& $\BbboxellipS$
			\\
			\midrule
			2  & $1.07\!\times\!10^{5}$ & $1.60\!\times\!10^{5}$ & $1.08\!\times\!10^{5}$ & $1.61\!\times\!10^{5}$  \\
			3  & $1.62\!\times\!10^{5}$ & $2.40\!\times\!10^{5}$ & $1.63\!\times\!10^{5}$ & $2.41\!\times\!10^{5}$  \\
			4  & $2.15\!\times\!10^{5}$ & $3.20\!\times\!10^{5}$ & $2.16\!\times\!10^{5}$ & $3.21\!\times\!10^{5}$  \\
			5  & $2.69\!\times\!10^{5}$ & $4.00\!\times\!10^{5}$ & $2.70\!\times\!10^{5}$ & $4.01\!\times\!10^{5}$  \\
			6  & $3.23\!\times\!10^{5}$ & $4.80\!\times\!10^{5}$ & $3.25\!\times\!10^{5}$ & $4.82\!\times\!10^{5}$  \\
			7  & $3.77\!\times\!10^{5}$ & $5.61\!\times\!10^{5}$ & $3.79\!\times\!10^{5}$ & $5.63\!\times\!10^{5}$  \\
			8  & $4.31\!\times\!10^{5}$ & $6.41\!\times\!10^{5}$ & $4.33\!\times\!10^{5}$ & $6.43\!\times\!10^{5}$  \\
			9  & $4.85\!\times\!10^{5}$ & $7.22\!\times\!10^{5}$ & $4.87\!\times\!10^{5}$ & $7.24\!\times\!10^{5}$  \\
			10 & $5.40\!\times\!10^{5}$ & $8.03\!\times\!10^{5}$ & $5.43\!\times\!10^{5}$ & $8.06\!\times\!10^{5}$  \\
		\end{tabular}
	}
\end{table}

\subsection{Ablation Studies}
\subsubsection{Ellipsoid Parameters}
We evaluate three parameters: for the $\ell_p$-ellipsoid $\Bellip$ we consider effect of the touching parameter $\lambda_d$ and $p$ on the volume. For its safe construction $\BellipS$, we look at the impact of the grid size $N$.

\Cref{fig:AS:ellipse_vs_p} displays the impact of the parameter $p$, whereas the impact on the touch level $\gamma_d$ is reported in \Cref{fig:AS:ellipse_vs_c_touch}. Lastly, the effect on the grid size for the safe ellipsoid is shown in \Cref{fig:AS:conservative_vs_grid}. In all three plots $\Bbonf$ and $\Bavg$ are included for reference.

\begin{figure*}[ht]
    \centering
    \includegraphics[width=\textwidth]{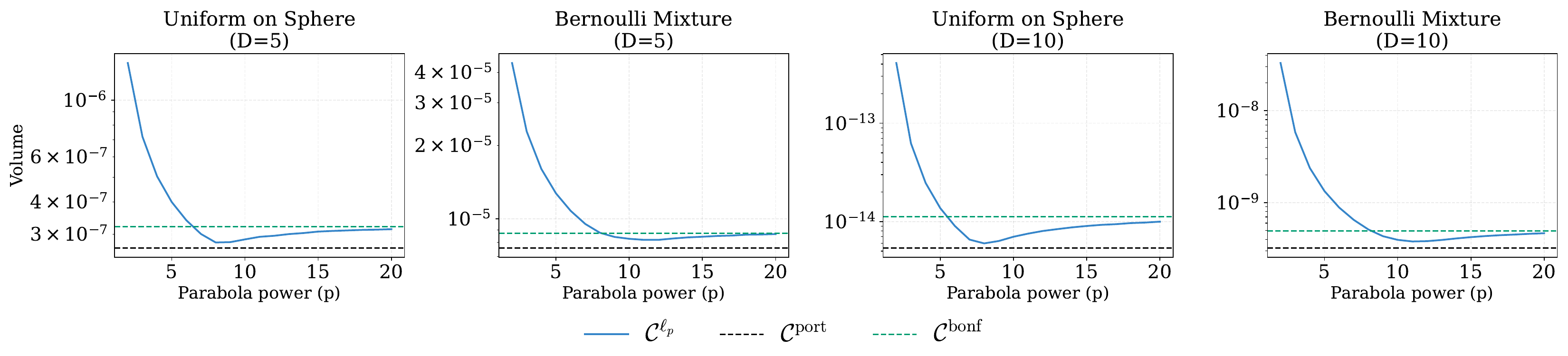}
    \caption{Impact on the exponent $p$.}
    \label{fig:AS:ellipse_vs_p}
\end{figure*}

\begin{figure*}[ht]
    \centering
    \includegraphics[width=\textwidth]{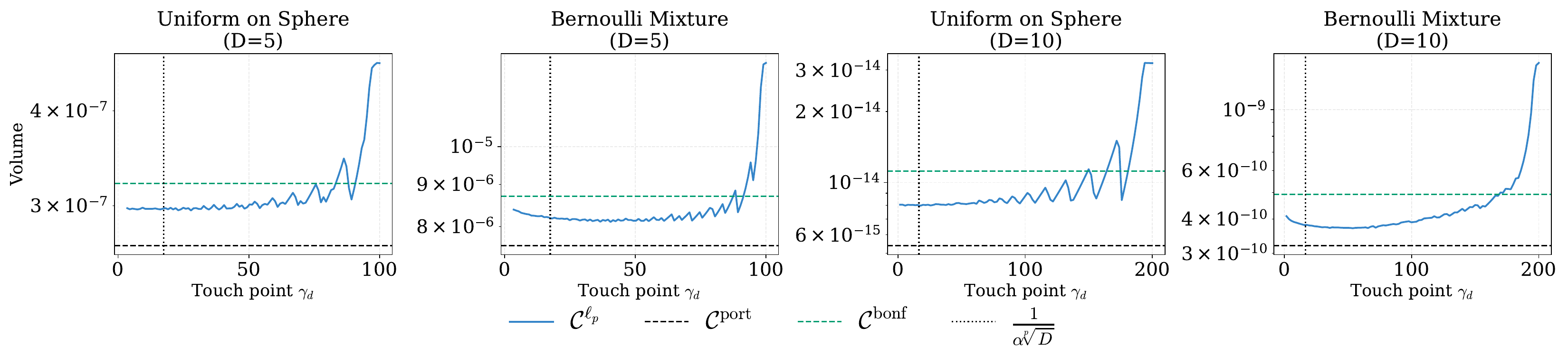}
    \caption{Impact on the touch level $\gamma_d$. The vertical indicator marks our default choice across all experiments. For an isolated comparison, the parameter $p$ is kept fixed at $p=12$.}
    \label{fig:AS:ellipse_vs_c_touch}
\end{figure*}

\begin{figure*}[ht]
    \centering
    \includegraphics[width=\textwidth]{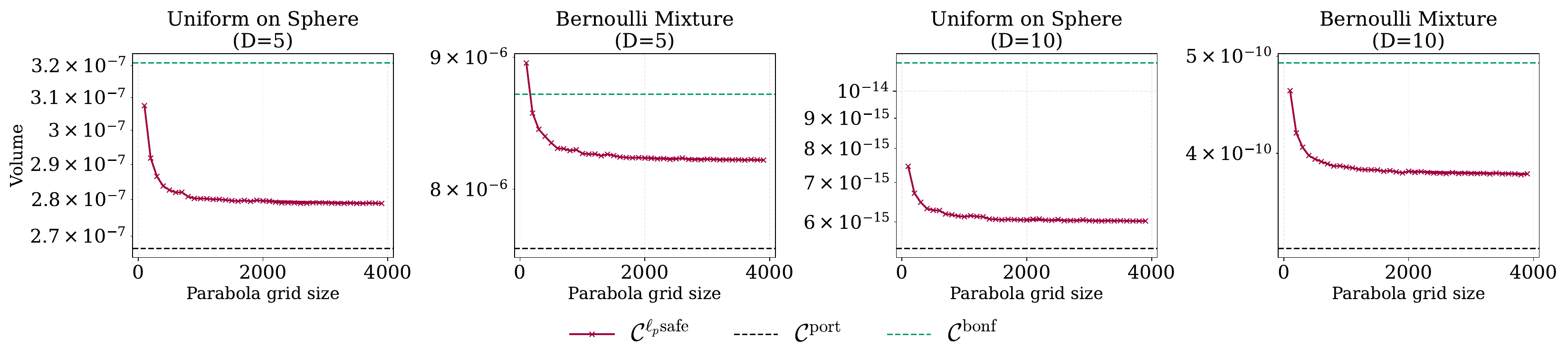}
    \caption{Impact on the grid size $N$.}
    \label{fig:AS:conservative_vs_grid}
\end{figure*}
\FloatBarrier

\subsubsection{Accuracy of Monte Carlo Volume Estimation} \label{apdx:AS:ablation_monte_carlo_volume_estimation_accuracy}

Since $\Bbboxellip$  in~\Cref{AS:sec:intersection_bbox_ellip} requires Monte Carlo sampling to estimate the volume, we analyze how accurate the estimate remains as the dimension $D$ increases.

Let
$$
\rho_D=
\frac{\operatorname{Vol}\!\left(
	\Bbbox\cap\Bellip
	\right)}
{\operatorname{Vol}\!\left(\Bbbox\right)}
$$
denote the acceptance probability when sampling uniformly from the bounding box. With $M$ independent Monte Carlo samples, the number of accepted points satisfies
$$
H\sim\operatorname{Binomial}(M,\rho_D),
\qquad
\mathbb E[H]=M\rho_D.
$$
The Monte Carlo estimate of  $\operatorname{Vol}(\Bbbox\cap
\Bellip)$ is:
$$
\widehat V =
\operatorname{Vol}(\Bbbox)
\frac{H}{M}.
$$
Its accuracy can be quantified through the relative standard error, defined as the standard deviation of $\widehat{V}$ divided by the target volume ${\operatorname{Vol}(\Bbbox\cap
	\Bellip)}$. Since $\operatorname{Var}(H) = M \cdot \rho_D (1- \rho_D)$, one can derive this standard error as: 
$$
\frac{\sqrt{\operatorname{Var}(\widehat V)}}
{\operatorname{Vol}(\Bbbox\cap
	\Bellip)} =
\sqrt{\frac{1-\rho_D}{M \cdot \rho_D}}.
$$
Hence, achieving relative standard error at most $\varepsilon$ requires
$$
M\geq\frac{1-\rho_D}{\rho_D\varepsilon^2}.
$$
Although $\rho_D$ need not decrease exponentially in every setting, it can become very small as $D$ grows, and our method provides no dimension-free bound on it. Thus, the number of samples required for an accurate volume estimate may grow rapidly with the dimension. 

\Cref{AS:apdx:tbl:vol_est_accept_rate_acc} validates empirically how the relative error is affected by the dimension for $M=25{,}000$ samples.

\begin{table}[t]
	\centering
	\caption{Monte Carlo acceptance rates and resulting estimation accuracy for different dimensions. 
		The number of accepted points is computed for a budget of $25{,}000$ Monte Carlo samples.}
	\label{tab:mc_volume_accuracy}
	\small
	\begin{tabular}{lrrrr}
		\toprule
		Distribution 
		& $D$ 
		& Estimated acceptance rate $\rho_D$ 
		& Accepted points
		& Rel. std. error \\
		\midrule
		Bernoulli mixture 
		& 20  & $0.359$              & $8{,}987$           & $0.844\%$ \\
		 
		& 50  & $1.12\times10^{-2}$  & $280$               & $5.94\%$ \\
		 
		& 100 & $3.13\times10^{-6}$  & $7.84\times10^{-2}$ & $357\%$ \\
		\addlinespace
		Uniform on Sphere
		& 20  & $0.118$              & $2{,}947$           & $1.73\%$ \\
		 
		& 50  & $7.17\times10^{-5}$  & $1.79$              & $74.7\%$ \\
		 
		& 100 & $2.90\times10^{-12}$ & $7.25\times10^{-8}$ & $3.71\times10^{5}\%$ \\
		\bottomrule
	\end{tabular}
	\label{AS:apdx:tbl:vol_est_accept_rate_acc}
\end{table}

Thus, box sampling remains effective at $D=20$ in these examples, but deteriorates substantially by $D = 50$, particularly for the uniform-on-sphere setting. It is nevertheless far better than sampling from $[0,1]^D$.
\FloatBarrier

\subsubsection{Influence on Significance Level $\alpha$} \label{AS:apdx:ablation_alpha_corelation}
\Cref{AS:apdx:fig:alpha0p01,AS:apdx:fig:alpha0p05,AS:apdx:fig:alpha0p1} compare the CSs for $\alpha\in\{0.01,0.05,0.10\}$, respectively. Across all settings, the estimated cumulative probability of excluding the true mean by $t=1000$ remains below the level $\alpha$. The relative-volume rankings are also stable across significance levels: our approximations remain close to the portfolio region and improve substantially over previous tractable baselines.
\begin{figure}
	\centering
	\includegraphics[width=\linewidth]{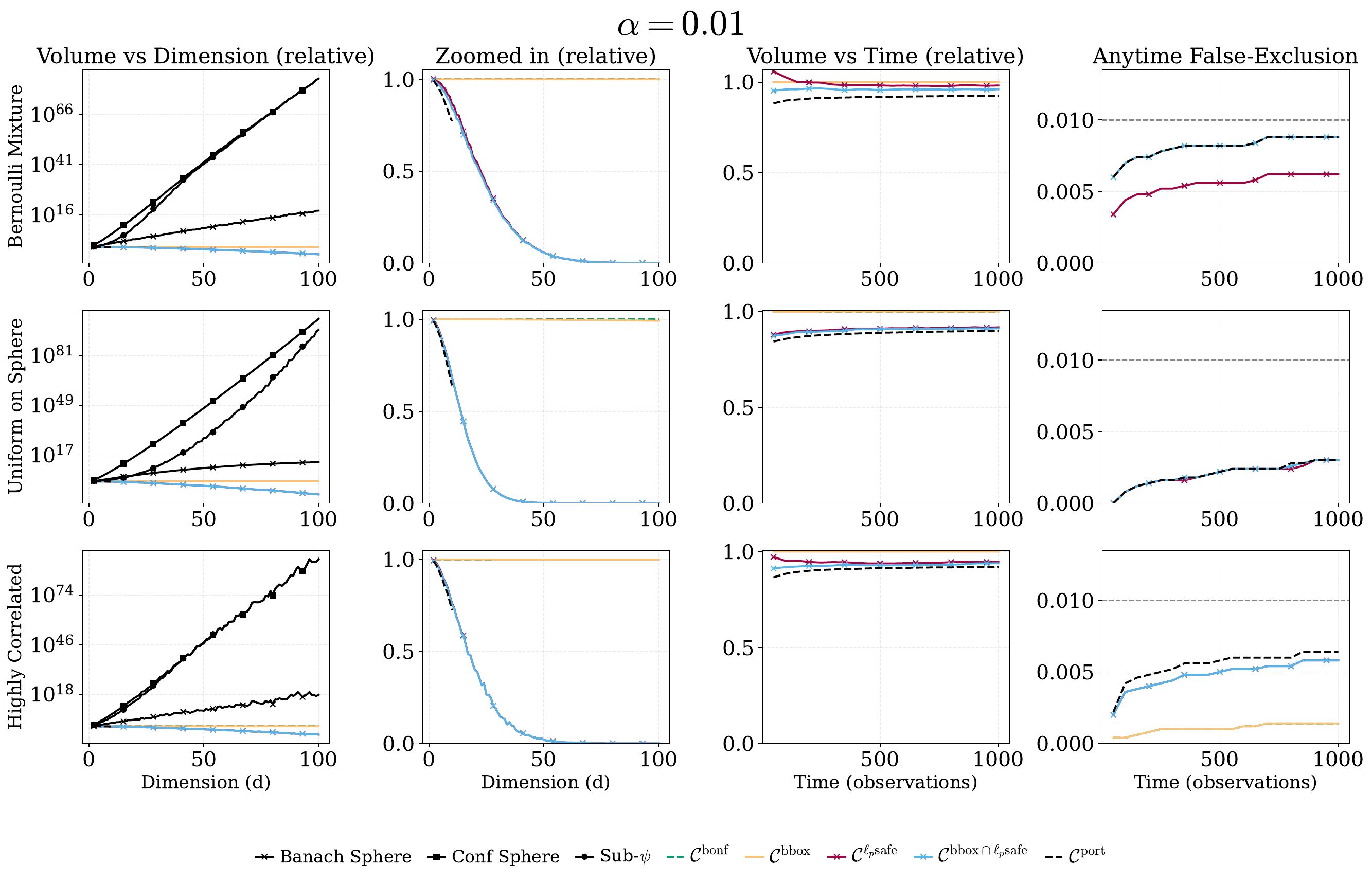}
	\caption{Results for $\alpha=0.01$. Rows correspond to the Bernoulli-mixture, uniform-sphere, and highly correlated settings. Columns show relative volume versus dimension (with baselines and a zoom-in), relative volume versus time, and the cumulative false-exclusion rate; the dashed horizontal line marks the nominal $\alpha$.}
	\label{AS:apdx:fig:alpha0p01}
\end{figure}
\begin{figure}
	\centering
	\includegraphics[width=\linewidth]{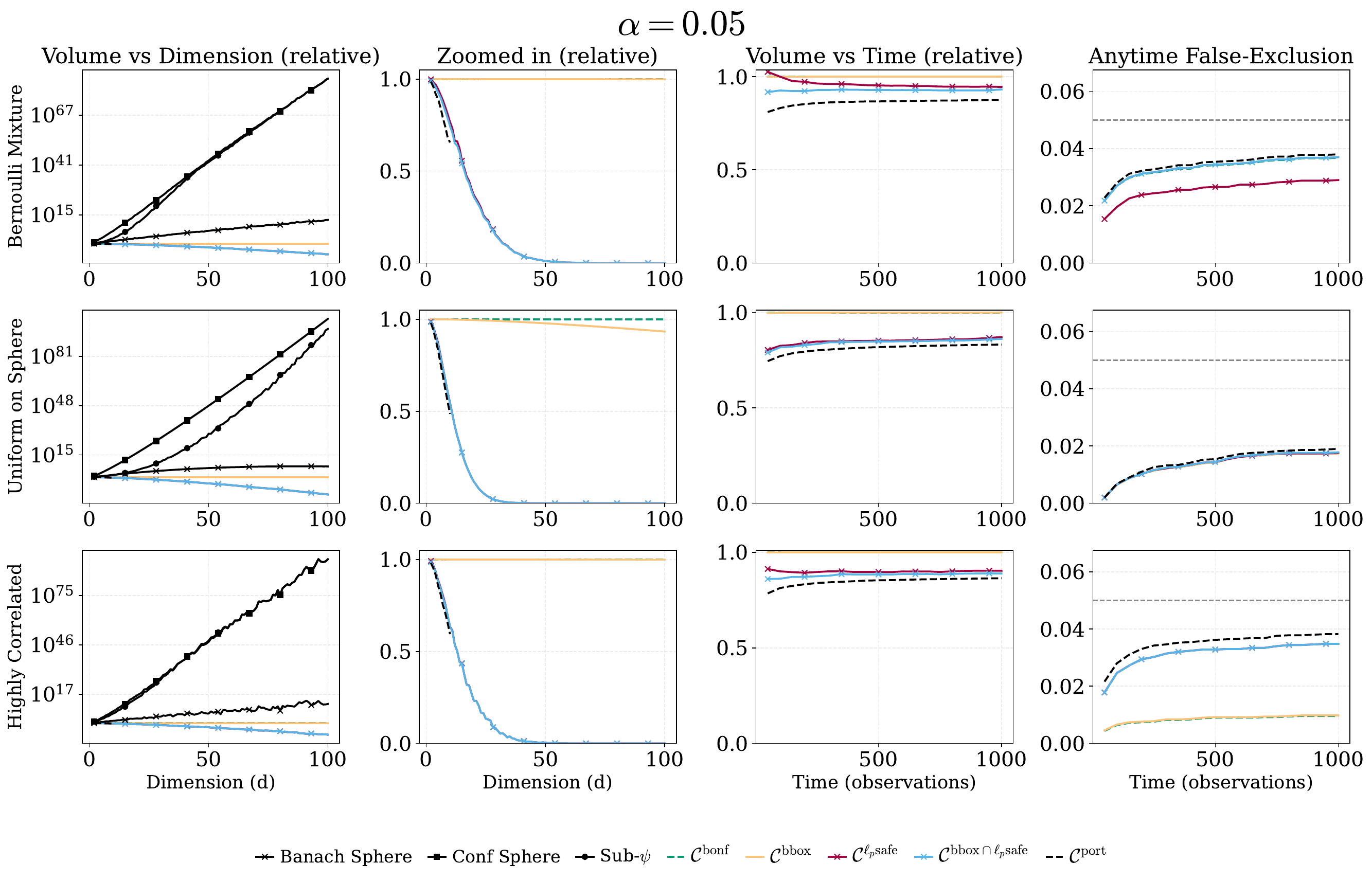}
	\caption{Results for $\alpha=0.05$. Rows and columns follow \Cref{AS:apdx:fig:alpha0p01}; the dashed horizontal line marks the nominal $\alpha$.}
	\label{AS:apdx:fig:alpha0p05}
\end{figure}
\begin{figure}
	\centering
	\includegraphics[width=\linewidth]{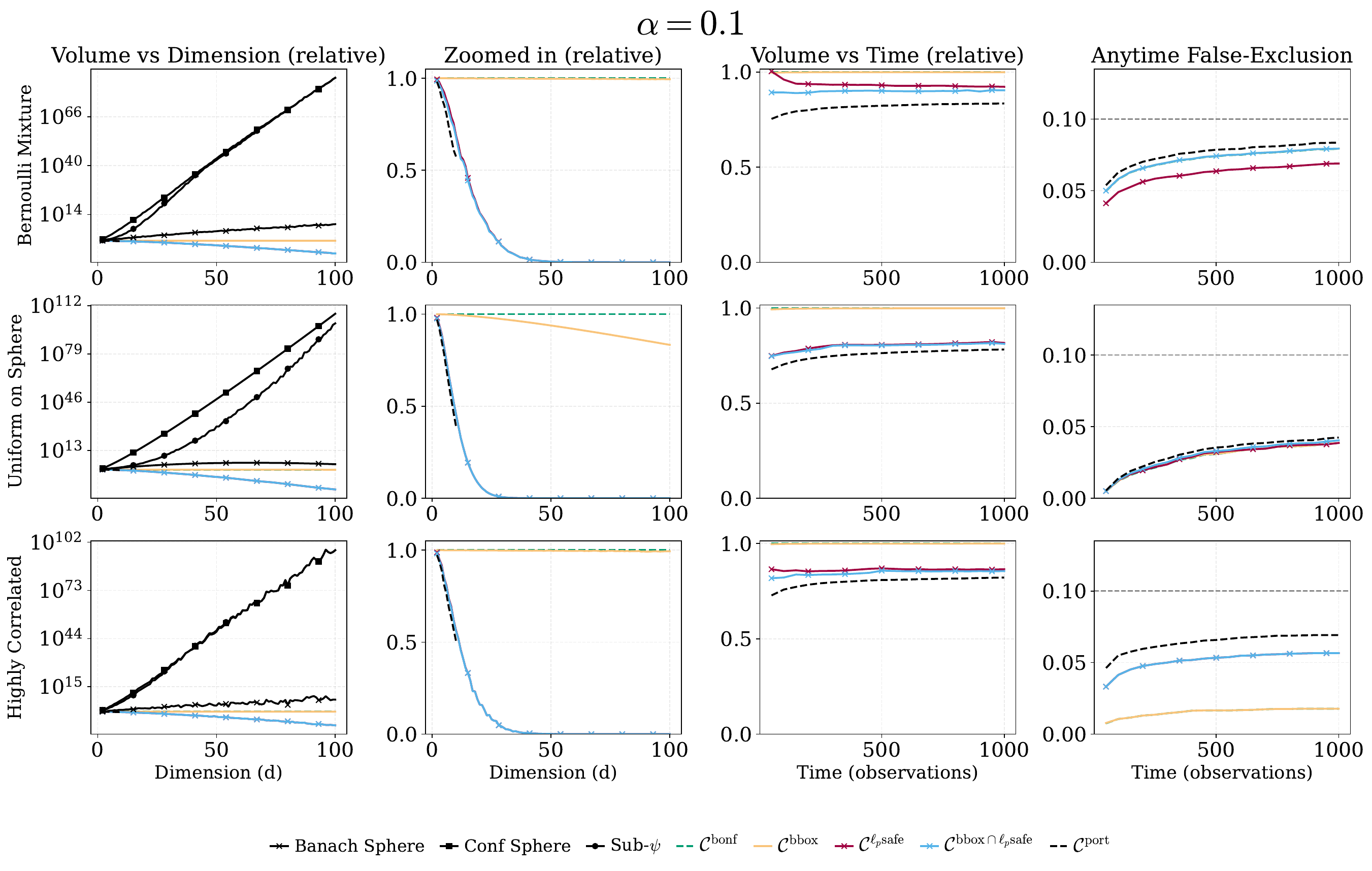}
	\caption{Results for $\alpha=0.10$. Rows and columns follow \Cref{AS:apdx:fig:alpha0p01}; the dashed horizontal line marks the nominal $\alpha$.}
	\label{AS:apdx:fig:alpha0p1}
\end{figure}
\FloatBarrier

\subsection{A/B-Testing Experiment: Synthetic Web Metrics}
\label{apdx:AS:ab_setup}
\textbf{Short Overview}
We simulate a sequential A/B test comparing two variants of a web shop to determine whether there is a statistically significant difference between them.  Let $\mathbf{y}_t^{(A)}$ denote the vector of observed metrics for website $A$ at step $t$, where $t$ may correspond to an individual user (and analogously for $B$), and define $\boldsymbol{\Delta}_t :=  \mathbf{y}_t^{(B)} - \mathbf{y}_t^{(A)}$. A CS then gives a region $\mathcal{C}_t$ for $\mathbb{E}[\boldsymbol{\Delta}_t]$ at every time $t$. If the interval for a particular coordinate lies entirely above or below $0$, we can conclude that the variants differ significantly on that metric. On the other hand, if the region becomes sufficiently small without excluding $0$ (measured through volume), we may conclude that there is no practically meaningful difference and stop collecting data.

\paragraph{Dataset generation.}
The simulation process is designed to mimic a real-world setting. Time is indexed by batches $t=1,\dots,T$. At each batch $t$, we sample $n_{t,A}$ independent sessions from variant $A$ and $n_{t,B}$ independent sessions from variant $B$, where $n_{t,A}$ and $n_{t,B}$ vary by $\pm 15\%$ around a nominal batch size $n=400$ (with a minimum of $100$ sessions per variant).

Each session is represented by a $D=10$-dimensional metric vector in $[0,1]^D$. The first eight coordinates mimic typical web-analytics metrics:
\textbf{(i)} time-on-site (seconds, clipped at $300$s and divided by $300$),
\textbf{(ii)} number of clicks (clipped at $20$ and divided by $20$),
\textbf{(iii)} number of pages viewed (clipped at $10$ and divided by $10$),
\textbf{(iv)} scroll depth (already in $[0,1]$),
\textbf{(v)} add-to-cart (Bernoulli),
\textbf{(vi)} purchase (Bernoulli),
\textbf{(vii)} non-bounce (Bernoulli; equals $1-\text{bounce}$ so that larger is better),
and \textbf{(viii)} 7-day return (Bernoulli).
The final two coordinates are added as independent noise variables sampled uniformly from $[0,1]$.
All coordinates are therefore bounded in $[0,1]$.

To create correlation between metrics, we draw a latent \emph{engagement} variable $E\in[0,1]$ per session. Conditional on $E$, continuous metrics are generated from simple parametric distributions (log-normal for time-on-site, Poisson for counts, Beta for scroll depth), and binary metrics are generated by Bernoulli draws whose success probabilities increase with $E$. This produces realistic correlations (e.g., sessions with larger $E$ tend to have more time, more clicks, deeper scroll, and higher conversion probability).

\Cref{fig:AS:ab_batch_distributions} visualizes what a single batch looks like (i.e., the $n=400$ samples). For each coordinate, we plot a histogram of samples for variants $A$ and $B$ within one batch (not the distribution of batch averages over time).  The latent engagement variable $E$  and its shifted treatment variant are shown in the top-left panel.

\paragraph{Two experiments: with effect vs.\ no effect.}
We run two variants of the simulation:
\begin{itemize}
\item \textbf{With effect:} variant $B$ is improved on a subset of metrics by shifting the engagement signal upward for those metrics only. Concretely, for the affected coordinates, we replace $E$ by $\tilde E = \min(E+\delta,1)$ when generating sessions from $B$, with shift $\delta=0.06$. This means that the two variants differ only in a small set of coordinates (time, clicks, scroll, and add-to-cart), while the remaining metrics (pages, purchase, non-bounce, return, and the two noise dimensions) are approximately unchanged.
\item \textbf{No effect:} we set $\delta=0$ and apply no coordinate-specific shift, so that variants $A$ and $B$ are generated from the same distribution.
\end{itemize}

\paragraph{From two variants to one bounded stream (batched pairing).}
Our CSs are defined for a single stream of bounded observations. We therefore combine the two variants within each batch. Let $\mathbf{y}_{t}^{(A)}, \mathbf{y}_{t}^{(B)}\in[0,1]^D$ denote the per-batch mean metric vectors for variants $A$ and $B$. We form the per-batch difference
\[
\boldsymbol{\Delta}_t \; = \; \mathbf{y}_{t}^{(B)} - \mathbf{y}_{t}^{(A)} \in[-1,1]^D,
\]
and rescale it coordinate-wise into $[0,1]$ by
\[
\mathbf{z}_t \;=\; \tfrac12(\boldsymbol{\Delta}_t+\mathbf{1}) \in [0,1]^D.
\]
Under the no-effect experiment, $\mathbb{E}[\mathbf{z}_t]=\tfrac12\mathbf{1}$, i.e., $\mathbb{E}[z_t^d]=0.5$ for every coordinate $d$. In the with-effect experiment, only the affected coordinates satisfy $\mathbb{E}[z_t^d]>0.5$; the unaffected and noise coordinates remain close to $0.5$.

We run all confidence region methods at level $\alpha=0.05$ on the sequential stream $\{\mathbf{z}_t\}_{t=1}^T$.

\paragraph{Stopping and decision rule.}
At each time $t$, let $\mathcal{C}_t$ be the confidence region for the mean of $\mathbf{z}_t$. We conclude coordinate-wise significance for metric $d$ as soon as $\mathcal{C}_t$ excludes $0.5$ on that axis:
\[
\underline{\mu}_{t,d} > 0.5 \quad \text{or}\quad \overline{\mu}_{t,d} < 0.5,
\]
where $[\underline{\mu}_{t,d},\overline{\mu}_{t,d}]$ denotes the coordinate interval for $\mu_d$. If no coordinate is significant, we stop and decide nothing once the region is sufficiently small in volume. Concretely, we stop without significance when $\sqrt[D]{V(\mathcal{C}_t)} \le r_{\mathrm{tol}}$.

In the main experiments, we use $D=10$, batch size $n=400$, $r_{\mathrm{tol}}=0.02$, and run up to $T=10{,}000$ batches. The main results are shown in~\Cref{tab:AS:ab_results}.

\paragraph{Ablation: sensitivity to batch size.}
We repeat the experiment for $n\in\{50,100,200,400,800\}$. As shown in \Cref{tab:AS:ab_batch_size}, the relative performance of the methods remains consistent. Interestingly, although larger batches reduce the variance of each batch mean, stopping times measured in batches change little: our methods stop after $170$--$181$ batches with an effect and $355$--$428$ batches without an effect. Consequently, smaller batches reach the same decisions with substantially fewer users in this experiment.

\begin{table*}[t]
	\centering
	\caption{\textbf{Sensitivity of A/B stopping outcomes to batch size.}
	Arrows report the results as the nominal batch size increases from $n=50$ to $n=800$; the intermediate values follow the same trend. Approximate users are reported per variant.}
	\label{tab:AS:ab_batch_size}
	\small
	\setlength{\tabcolsep}{7pt}
	\renewcommand{\arraystretch}{1.05}
	\begin{tabular}{@{}l *{4}{r@{\,$\to$\,}l}@{}}
		\toprule
		& \multicolumn{4}{c}{With effect} & \multicolumn{4}{c}{No effect} \\
		\cmidrule(lr){2-5}\cmidrule(lr){6-9}
		Method & \multicolumn{2}{c}{Stop$_t$} & \multicolumn{2}{c}{Approx. users} & \multicolumn{2}{c}{Stop$_t$} & \multicolumn{2}{c}{Approx. users} \\
		\midrule
		ConfSphere & $10{,}000$ & $10{,}000$ & $500{,}000$ & $8{,}000{,}000$ & $10{,}000$ & $10{,}000$ & $500{,}000$ & $8{,}000{,}000$ \\
		BanachSphere & $5{,}227$ & $5{,}046$ & $261{,}350$ & $4{,}036{,}800$ & $8{,}273$ & $8{,}036$ & $413{,}650$ & $6{,}428{,}800$ \\
		Sub-$\psi$ & $6{,}352$ & $5{,}243$ & $317{,}600$ & $4{,}194{,}400$ & $9{,}259$ & $6{,}650$ & $462{,}950$ & $5{,}320{,}000$ \\
		$\Bbonf$ & $171$ & $176$ & $8{,}550$ & $140{,}800$ & $428$ & $397$ & $21{,}400$ & $317{,}600$ \\
		$\Bbbox$ & $170$ & $175$ & $8{,}500$ & $140{,}000$ & $427$ & $396$ & $21{,}350$ & $316{,}800$ \\
		$\BellipS$ & $171$ & $176$ & $8{,}550$ & $140{,}800$ & $378$ & $355$ & $18{,}900$ & $284{,}000$ \\
		$\BbboxellipS$ & $\mathbf{170}$ & $\mathbf{175}$ & $\mathbf{8{,}500}$ & $\mathbf{140{,}000}$ & $\mathbf{378}$ & $\mathbf{355}$ & $\mathbf{18{,}900}$ & $\mathbf{284{,}000}$ \\
		\bottomrule
	\end{tabular}
\end{table*}

\begin{figure*}[ht]
	\centering
	\includegraphics[width=\textwidth]{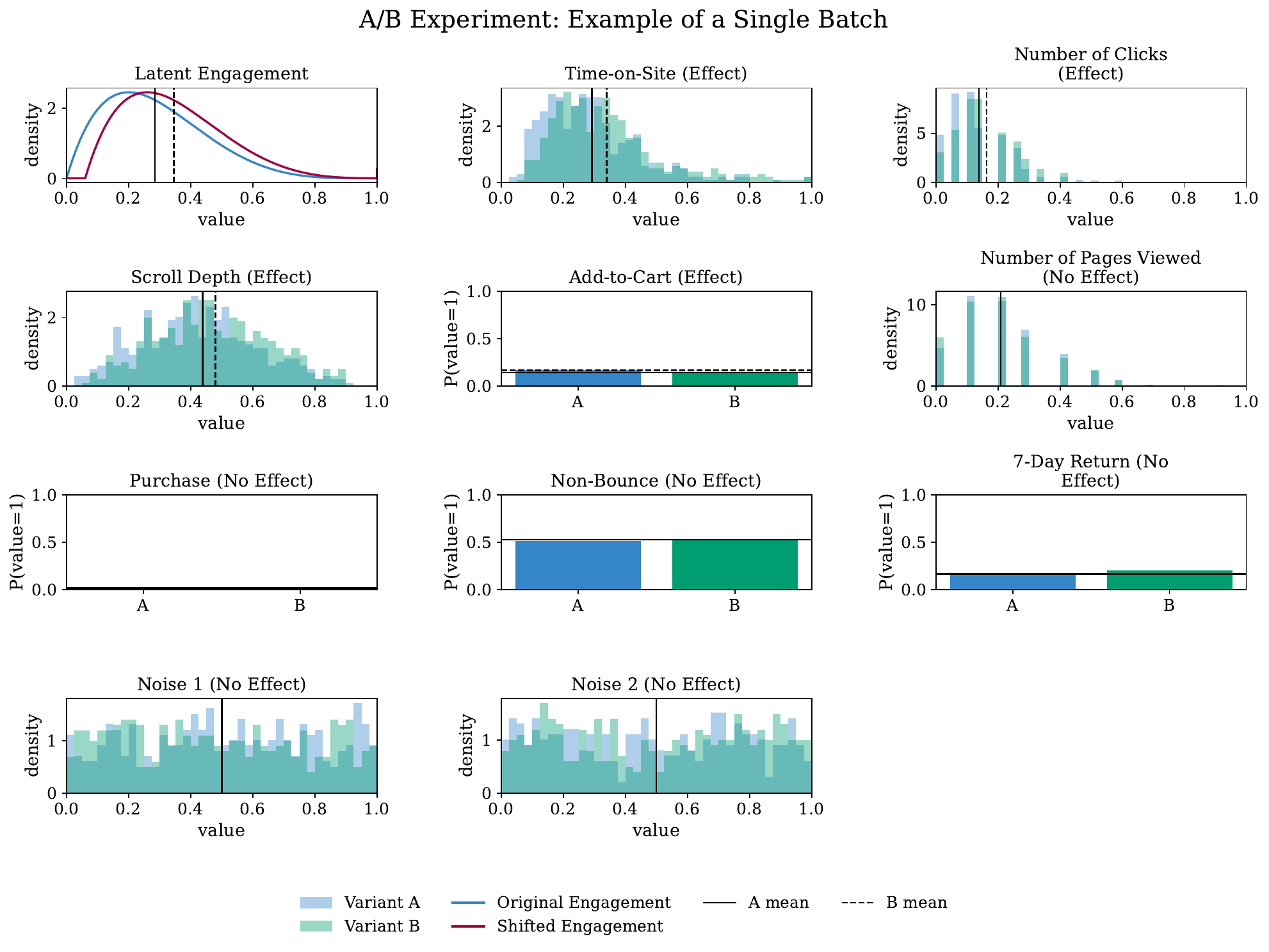}
	\caption{
    \textbf{Session-level distributions within one batch (synthetic A/B web metrics).}
		We show one representative batch from the \emph{with-effect} simulation. 
		Top-left:  A distribution of the latent engagement variable $E$, and its shifted variant. Other: Affected coordinates (time, clicks, scroll, add-to-cart) show a systematic shift from $A$ to $B$, while the remaining coordinates (pages, purchase, non-bounce, return-7d, noise$_1$, noise$_2$) show little to no shift. 
        }
	\label{fig:AS:ab_batch_distributions}
\end{figure*}

\FloatBarrier
\subsection{Model Comparison on Adult}
\label{apdx:AS:model_comparison}
\textbf{Short Overview}
We use the Adult dataset to illustrate how a multivariate CS can compare two classifiers while monitoring several performance quantities at once. Model $A$ is the baseline and model $B$ is the challenger. The target decision is whether $B$ improves overall accuracy; however, the same CS also reports uncertainty for the Brier score and for subgroup accuracies. The datasets' features include (among others) gender, education, and age. The task is to predict whether a person's income exceeds \$50{,}000.  \Cref{tab:AS:model_comparison_the_models_them_selves} summarizes the empirical performance of the two trained classifiers before applying any CS.

The setup is similar to the A/B experiment, where each $\boldsymbol{\Delta}_t$ is defined as the difference between two metric vectors, with $\mathbf{y}_t^{(A)}$ representing the metrics of model $A$ and $\mathbf{y}_t^{(B)}$ those of model B. The \textbf{main technical challenge lies in handling conditional accuracies}, such as $\Pr(f_B(\mathbf{x})=\text{correct}\mid \mathbf{x}=\text{woman})$ while preserving anytime validity. This is achieved by adding an extra coordinate to $\boldsymbol{\Delta}_t$ that represents the proportion of the relevant subgroup, for example, $\Pr(\mathbf{x}=\text{woman})$, and then dividing by this quantity.

\begin{table*}[t]
	\centering
	\caption{\textbf{Summary of the two classifiers on the Adult dataset.}
	Model $A$ is logistic regression and model $B$ is histogram gradient boosting. 
	The table reports the performance of both models on the tracked metrics, together with the mean gain of model $B$ over model $A$. For overall accuracy and subgroup-specific accuracies, the gain is computed as $B-A$.  For the Brier score, since lower values are better, we instead report the improvement as $A-B$, so that a positive value indicates that model $B$ outperforms model $A$.}
	\label{tab:AS:model_comparison_the_models_them_selves}
    \begin{tabular}{l c c c}
		\toprule
		Metric & Model $A$ (\%) & Model $B$ (\%) & Mean gain (\%) \\
		\midrule
        Overall accuracy                    & 85.09 & 87.30 & 2.24 \\
		Overall Brier score                 & 10.25 & 8.83  & 1.42 \\
		Female accuracy (conditional)       & 92.84 & 93.80 & 0.96 \\
		Male accuracy (conditional)         & 81.19 & 84.06 & 2.87 \\
		Age $< 40$ accuracy (conditional)   & 89.13 & 90.70 & 1.57 \\
		Age $\geq 40$ accuracy (conditional)& 79.89 & 82.98 & 3.09 \\
		\bottomrule
	\end{tabular}
	\vspace{2pt}
	\footnotesize
\end{table*}

\paragraph{Setup.}
The test set is split into $509$ batches of size $32$ (with the last batch smaller). For a batch $I_t$, we define the batch-level metric vector for model $A$ as
\begin{equation}
	\mathbf{y}_t^{(A)}
	=
	\frac{1}{|I_t|}
	\sum_{i\in I_t}
	\left(
	\begin{array}{c}
		\mathbf{1}\{ f_A(\mathbf{x}_i) = f_i \} \\
		(c_A(\mathbf{x}_i)-f_i)^2 \\
		\mathbf{1}\{ f_A(\mathbf{x}_i) = f_i \text{ and } \mathbf{x}_i \text{ is woman}\} \\
		\mathbf{1}\{ f_A(\mathbf{x}_i) = f_i \text{ and } \mathbf{x}_i \text{ is man}\} \\
		\mathbf{1}\{ f_A(\mathbf{x}_i) = f_i \text{ and } \mathbf{x}_i\text{'s age}<40\} \\
		\mathbf{1}\{ f_A(\mathbf{x}_i) = f_i \text{ and } \mathbf{x}_i\text{'s age}\ge 40\} \\
		\mathbf{1}\{\mathbf{x}_i \text{ is woman}\} \\
		\mathbf{1}\{\mathbf{x}_i\text{'s age}<40\}
	\end{array}
	\right)
	\in[0,1]^8.
	\label{AS:eq:model_comparision_defin_y_A}
\end{equation}

Here $f_A(\mathbf{x}_i)$ is model $A$'s predicted class and $c_A(\mathbf{x}_i)$ is its predicted probability for the true class; $\mathbf{y}_t^{(B)}$ is defined analogously. The first coordinate is accuracy, the second is Brier score, the next four are joint subgroup accuracies, and the final two are subgroup proportions. We only track the proportions of women and individuals with age $<40$, since the proportions of men and individuals with age $\ge 40$ are their complements.

We compare model $B$ to model $A$ by subtracting these batch metrics:
\[
\boldsymbol{\Delta}_t
=
\left(
\begin{array}{c}
y_{t,1}^{(B)}-y_{t,1}^{(A)} \\
y_{t,2}^{(A)}-y_{t,2}^{(B)} \\
y_{t,3}^{(B)}-y_{t,3}^{(A)} \\
\vdots \\
y_{t,6}^{(B)}-y_{t,6}^{(A)} \\
y_{t,7}^{(A)} \\
y_{t,8}^{(A)}
\end{array}
\right).
\]
Note that $A$ and $B$ are flipped in the Brier-score coordinate, since larger is better for accuracy but smaller is better for Brier score. Thus positive values in the first six coordinates favor model $B$. The gain coordinates lie in $[-1,1]$ and are normalized to $[0,1]$ before applying the CS, via $(\Delta_{t,d}+1)/2$. The subgroup-proportion coordinates already lie in $[0,1]$.

The expectations of the first six coordinates correspond to the overall accuracy gain, the Brier-score improvement, and the joint subgroup-accuracy gains, whereas the final two coordinates correspond to subgroup proportions.
For example, for the ``women gain'' coordinate (coordinate 3), we have
\[
\mathbb{E}[\Delta_{\mathrm{gain~woman}}]
=
\Pr(f_B(\mathbf{x})=f,\,\mathbf{x}\text{ is woman})
-
\Pr(f_A(\mathbf{x})=f,\,\mathbf{x}\text{ is woman}),
\]
and for the woman subgroup proportion (coordinate 7), we have
\[
\mathbb{E}[\Delta_\mathrm{prop~woman}]
=
\Pr(\mathbf{x}\text{ is woman}).
\]
Therefore, the conditional accuracy gain is obtained by dividing by the subgroup probability:
\begin{equation}
    \Pr(f_B(\mathbf{x})=f\mid \mathbf{x}\text{ is woman})
-
\Pr(f_A(\mathbf{x})=f\mid \mathbf{x}\text{ is woman})
=
\frac{\mathbb{E}[\Delta_{\mathrm{gain~woman}}]}{\Pr(\mathbf{x}\text{ is woman})}. \label{eq:AS:apdx:ratio_joint_women_gain_div_women_ratio}
\end{equation}
In practice, the quantities in~\eqref{eq:AS:apdx:ratio_joint_women_gain_div_women_ratio} are unknown, so we replace them by their CS bounds. Let
\[
[\underline{\mu}_{\mathrm{gain~woman}}, \overline{\mu}_{\mathrm{gain~woman}}]
:=
\mathcal{I}_t(\Delta_{\mathrm{gain~woman}})
\]
denote the CS interval for $\mathbb{E}[\Delta_{\mathrm{gain~woman}}]$, and analogously let
\[
[\underline{\mu}_{\mathrm{prop~woman}}, \overline{\mu}_{\mathrm{prop~woman}}]
:=
\mathcal{I}_t(\Delta_{\mathrm{prop~woman}})
\]
denote the interval for $\Pr(\mathbf{x}\text{ is woman})$. Since the denominator is positive, we report the interval for~\eqref{eq:AS:apdx:ratio_joint_women_gain_div_women_ratio} as
$$
\left[
\min_{\substack{a\in\{\underline{\mu}_{\mathrm{gain~woman}},\overline{\mu}_{\mathrm{gain~woman}}\}\\[1pt]
b\in\{\underline{\mu}_{\mathrm{prop~woman}},\overline{\mu}_{\mathrm{prop~woman}}\}}}
\frac{a}{b},
\;
\max_{\substack{a\in\{\underline{\mu}_{\mathrm{gain~woman}},\overline{\mu}_{\mathrm{gain~woman}}\}\\[1pt]
b\in\{\underline{\mu}_{\mathrm{prop~woman}},\overline{\mu}_{\mathrm{prop~woman}}\}}}
\frac{a}{b}
\right].
$$
For men, we use the complement $1-\mathcal{I}_t(\Delta_{\mathrm{prop~woman}})$. The same construction is used for the other subgroup gains. Since this is a deterministic transformation of a jointly valid CS, it preserves anytime validity.

 \paragraph{Training Setup.}
Both models are implemented using sklearn's implementation. For logistic regression, categorical features are one-hot encoded and numeric features are standardized; we use \texttt{LogisticRegression(solver='lbfgs', max\_iter=2000, random\_state=7)}. For the boosted-tree model, categorical features are ordinal-encoded with unknown categories mapped to $-1$, and we use \texttt{HistGradientBoostingClassifier(max\_depth=6, learning\_rate=0.05, max\_iter=300, min\_samples\_leaf=20, random\_state=7)}.

\Cref{fig:AS:model_comparison_intervals_all} displays the resulting intervals. The hedged-capital regions all detect a positive overall-accuracy gain after roughly $356$ batches, while ConfSphere, BanachSphere, and Sub-$\psi$ remain inconclusive after all $509$ test batches.

\begin{figure*}[ht]
	\centering
	\includegraphics[width=\textwidth]{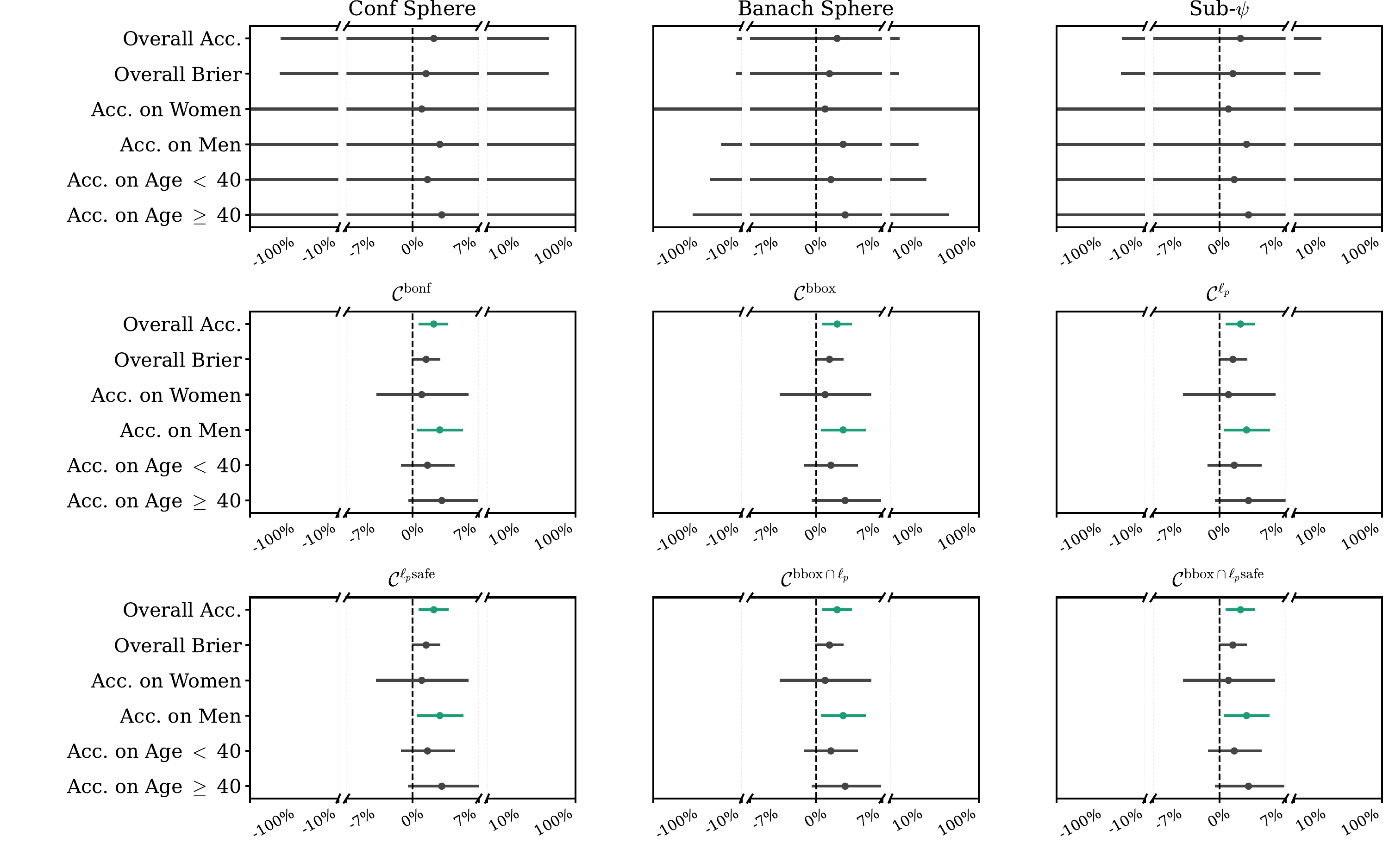}
	\caption{
    \textbf{Model-comparison intervals on the Adult dataset.}
    Each panel shows one CS method and each row shows one reported metric. Intervals represent the gain for that particular metric, with the dashed vertical line marking zero. Positive values favor the histogram-gradient-boosting challenger (model $B$) over the logistic-regression baseline (model $A$). The hedged-capital regions certify a positive overall-accuracy gain, whereas the sphere and normalized-ellipsoid baselines still include zero after all available test batches.
	}
	\label{fig:AS:model_comparison_intervals_all}
\end{figure*}

\FloatBarrier
\section*{Part IV: Varia}
\addcontentsline{toc}{part}{Part IV: Varia}
\section{Additional Related Work}
In the context of sequential testing for multi-armed bandits, \citet{cho_peeking_with_PEAK} similarly argue for combining wealth processes by averaging rather than through Bonferroni corrections.
Their goal, however, is different: they design sequential tests for bandit decision problems such as best-arm and threshold identification. In doing so, they modify the stake choices (i.e., $\lambda_t(\cdot)$ in~\Cref{sec:AS:one_dim_conf_seq}) so that the corresponding rejection region is convex, enabling the use of convex optimization tools. In contrast, our focus is on explicitly describing the resulting confidence region through tractable outer approximations that make quantities such as volume or maximum width easy to compute. Convexity alone still does not make these properties straightforward to quantify. Our constructions are also more general, as they apply to any legal stake choice $\lambda_t$.

In the context of trustworthy machine learning, Bayesian neural networks (BNNs) provide a principled framework for representing predictive uncertainty, which is particularly valuable in high-stakes applications. Their predictions, however, are commonly approximated using a fixed number of Monte Carlo samples per input, without controlling the resulting error due to the finite number of samples \citep{daxberger2021laplace,DBLP:conf/nips/MaddoxIGVW19,DBLP:conf/icml/GalG16,denoodt2025varbasedsmoothing,krause2025probabilisticAiBook}. Future work could investigate whether CSs could be applied to control this error as well.

\FloatBarrier

\section{Broader Impacts} \label{apdx:AS:broader_impacts}
This work aims to improve transparency and reliability in sequential statistical inference, which can support more trustworthy decision-making in applications such as online experimentation and model evaluation. By enabling valid anytime monitoring, the proposed methods can reduce unnecessary data collection and help practitioners detect effects (or lack of effects) earlier. We foresee no direct negative societal impacts from the methods themselves, as they are purely statistical tools that do not interact with individuals or make decisions on their own. However, as with any statistical method, there is a risk of misuse or misinterpretation.


\end{document}